\DeclareMathOperator*{\argmax}{arg\,max}
\DeclareMathOperator*{\argmin}{arg\,min}
\def\che{\tikz\fill[scale=0.4](0,.35) -- (.25,0) -- (1,.7) -- (.25,.15) -- cycle;}
\newtheorem{theorem}{Theorem}[section]
\newtheorem{assumption}{Assumption}[section]
\newtheorem{corollary}{Corollary}[section]
\newtheorem{definition}{Definition}[section]
\newtheorem{remark}{Remark}[section]
\newcommand{\bx}{\mathbf{x}}
\newcommand{\ba}{\mathbf{a}}
\newcommand{\op}{\mathrm{o}_{p}}
\newtheorem{lemma}{Lemma}[section]
\title{Intrinsically Efficient, Stable, and Bounded\\Off-Policy Evaluation for Reinforcement Learning}
\author{%
Nathan Kallus \\
   Cornell University and Cornell Tech \\
   New York, NY \\
   \texttt{kallus@cornell.edu} \\
   \and
  Masatoshi Uehara\footnote{corresponding author}\\
  Harvard University\\
    Cambrdige, MA\\
  \texttt{uehara\_m@g.harvard.edu} 
}
\date{}
\begin{document}

\maketitle

\begin{abstract}
Off-policy evaluation (OPE) in both contextual bandits and reinforcement learning allows one to evaluate novel decision policies without needing to conduct exploration, which is often costly or otherwise infeasible. The problem's importance has attracted many proposed solutions, including importance sampling (IS), self-normalized IS (SNIS), and doubly robust (DR) estimates. DR and its variants ensure semiparametric local efficiency if Q-functions are well-specified, but if they are not they can be worse than both IS and SNIS. It also does not enjoy SNIS's inherent stability and boundedness. We propose new estimators for OPE based on empirical likelihood that are always more efficient than IS, SNIS, and DR and satisfy the same stability and boundedness properties as SNIS. On the way, we categorize various properties and classify existing estimators by them. Besides the theoretical guarantees, empirical studies suggest the new estimators provide advantages.
\end{abstract}

\section{Introduction}

Off-policy evaluation (OPE) is the problem of evaluating a given policy (evaluation policy) using data generated by the log of another policy (behavior policy). OPE is a key problem in both reinforcement learning (RL) \citep{Precup2000,Mahmood20014,Li2015,thomas2016,jiang,Munos2016,Liu2018} and contextual bandits (CB) \citep{DudikMiroslav2014DRPE,Swaminathan2015b,narita2018} and it finds applications as varied as healthcare \citep{MurphyS.A.2003Odtr} and education \citep{Mandel2014}. 

Methods for OPE can be roughly categorized into three types. The first approach is the \emph{direct method} (DM), wherein we directly estimate the Q-function using regression and use it to directly estimate the value of the evaluation policy.
The problem of this approach is that if the model is wrong (misspecified), the estimator is no longer consistent. 

The second approach is importance sampling (IS; aka Horvitz-Thompson), which averages the data weighted by the density ratio of the evaluation and behavior policies. Although IS gives an unbiased and consistent estimate, its variance tends to be large. 
Therefore self-normalized IS (SNIS; aka H\'ajek) is often used \citep{Swaminathan2015b}, which divides IS by the average of density ratios.
SNIS has two important properties: (1) its value is bounded in the support of rewards and (2) its conditional variance given action and state is bounded by the conditional variance of the rewards.
This leads to increased stability compared with IS, especially when the density ratios are highly variable due to low overlap.

The third approach is the doubly robust (DR) method, which combines DM and IS and is given by adding the estimated Q-function as a control variate \citep{RobinsJamesM.1994EoRC,DudikMiroslav2014DRPE,jiang}. If the Q-function is well specified, DR is locally efficient 
in the sense that its asymptotic MSE 
achieves the semiparametric lower bound 
\citep{tsi}. However, if the Q-function is misspecified, DR can actually have \emph{worse} MSE than IS and/or SNIS \citep{KangJosephD.Y.2007DDRA}.
In addition, it does not have the boundedness property.

To address these deficiencies, we propose novel OPE estimators for both CB and RL that are guaranteed to improve over both (SN)IS and DR in terms of asymptotic MSE (termed \emph{intrinsic efficiency}) and at the same time also satisfy the same boundedness and stability properties of SNIS, in addition 
to the consistency and local efficiency of existing DR methods. See Table \ref{tab:comparison}. 
Our general strategy to obtain these estimators is to (1) make a parametrized class of estimators that includes IS, SNIS, and DR and (2) choose the parameter using either a regression way (REG) or an empirical likelihood way (EMP). 
The benefit of these new properties in practice is confirmed by experiments in both CB and RL settings.

\begin{table}[!]%
\centering%
\caption{Comparison of policy evaluation methods. The notation (*) means proposed estimator. The notation \# means partially satisfied, as discussed in the text. (S)IS and SN(S)IS refer either to stepwise or non-stepwise.}%
\setlength{\tabcolsep}{.325em}%
\begin{tabular}{lccccccccc}%
& DM & (S)IS & SN(S)IS & DR &  SNDR &MDR  & REG(*) & SNREG(*) & EMP(*) \\
Consistency  & & \che  & \che & \che & \che  & \che  & \che &\che  & \che\\
Local efficiency & \che  &  &  &\che & \che  &\che  & \che & \che & \che \\
Intrinsic efficiency &  &  &   &  &   & \# & \che & \# & \che\\
Boundedness &  1 &   & 1  &  & 2 &  &  &  2 &  1 \\
Stability &  \# &   & \che  &  & \# &  &  &   &  \che \\
\end{tabular}\label{tab:comparison}%
\end{table}

\section{Markov Decision Processes and Off Policy Evaluation}

An MDP is defined by a tuple $(\mathcal{X},\mathcal{A},P,R,P_0,\gamma)$, where $S$ and $A$ are the state and action spaces, $P_{r}(x,a)$ is the distribution of the bounded random variable $r(x,a)\in [0,R_{\mathrm{max}}]$ being the immediate reward of taking action $a$ in state $x$, $P(\cdot|x,a)$ is the transition probability distribution, $P_{0}$ is the initial state distribution, and $\gamma\in[0,1]$ is the discounting factor. A policy $\pi:\mathcal{X}\times \mathcal{A}\to [0,1]$ assigns each state $x\in \mathcal{X}$ a distribution over actions with $\pi(a|x)$ being the probability of taking actions $a$ into $x$. We denote  $\mathcal{H}_{T-1}=(x_0,a_0,r_0,\cdots,x_{T-1},a_{T-1},r_{T-1})$ as a T-step trajectory generated by policy $\pi$, and define $R_{T-1}(\mathcal{H}_{T-1})=\sum_{t=0}^{T-1}\gamma^{t}r_t$, which is the return of trajectory. Our task is to estimate 
$$\beta^{\pi}_{T}=\mathrm{E}[R_{T-1}(\mathcal{H}_{T-1})]\qquad\qquad\text{(policy value)}.$$
We further define the value function $V^{\pi}(x)$ and Q-function $Q^{\pi}(x,a)$ of a policy $\pi$, respectively, as the expectation of the return of a $T$-step trajectory generated by starting at state $x$ and state-action pair $(x,a)$. Note that the contextual bandit setting is a special case when $T=1$.

The off-policy evaluation (OPE) problem is to estimate $\beta^{*}=\beta^{\pi_e}_{T}$ for the evaluation policy $\pi_e$ from $n$ observation of $T$-step trajectories $\mathcal{D}=\{\mathcal{H}_{T-1}^{(i)}\}_{i=1}^{n}$ independently generated by the behavior policy $\pi_b$. 
Here, we assume an overlap condition: for all state-action pair $(x,a)\in \mathcal{X}\times \mathcal{A}$ if $\pi_{b}(a|x)=0$ then $\pi_{e}(a|x)=0$. 
Throughout, expectations $\mathrm{E}[\cdot]$ are taken with respect to a behavior policy. 
For any function of the trajectory, we let $$\textstyle\mathrm{E}_{n}[f(\mathcal H_{T-1})]=n^{-1}\sum_{i=1}^nf(\mathcal H_{T-1}^{(i)}).$$
$\mathrm{Asmse}[\cdot]$ denotes asymptotic MSE in terms of the first order; \ie, $\mathrm{Asmse}[\hat\beta]=\mathrm{MSE}[\hat\beta]+\mathrm{o}(n^{-1})$.

The cumulative importance ratio from time step $t_1$ to time step $t_2$ is
$$\textstyle \omega_{t_1:t_2}=\prod_{t=t_1}^{t_2}{\pi_e(a_t|x_t)}/{\pi_b(a_t|x_t)},$$
where the empty product is $1$.
We assume that this weight is bounded for simplicity. 

\subsection{Existing Estimators and Properties}

We summarize three types of estimators. Some estimators depend on a model $q(x,a;\tau)$ with parameter $\tau\in\Theta_\tau$ for the Q-function $Q^{\pi_e}(x,a)$. We say the model is correct or well-specified if there is some $\tau_0$ such that $Q^{\pi_e}(x,a)=q(x,a;\tau_0)$ and otherwise we say it is wrong or misspecified.
Throughout, we make the following assumption about the model
\begin{assumption}
(a1) $\Theta_\tau$ is compact,
(a2) $\abs{q(x,a;\tau)}\leq R_{\max}$.
\end{assumption}

\textbf{Direct estimator:}\, 
DM is given by fitting $\hat{\tau}$, \eg, by least squares, and then plugging this into
\begin{align}
\label{eq:dm}
    \hat{\beta}_{\mathrm{dm}}=\E_n\bracks{\sum_{a\in \mathcal{A}}\pi_e(a|x^{(i)}_{0})q(x_0^{(i)},a;\hat{\tau})}.
\end{align}
When this model is correct, $\hat{\beta}_{\mathrm{dm}}$ is both consistent for $\beta^*$ and locally efficient in that its asymptotic MSE is minimized among the class of all estimators consistent for $\beta^*$
\citep{narita2018,tsi}. 

\begin{definition}[Local efficiency]
\label{def:local}
When the model $q(x,a;\tau)$ is well-specified, the estimator achieves the efficiency bound. 
\end{definition}
However, all of models are wrong to some extent. In this sense, even if the sample size goes to infinity, $\hat{\beta}_{\mathrm{dm}}$ might not be consistent. 
\begin{definition}[Consistency] The estimator is consistent for $\beta^*$ irrespective of model specification.
\end{definition}

\begin{wrapfigure}{r}{35mm}\vspace{-2em}\centering%
\subfigure[Well-specified]{\centering\begin{tikzpicture}[node distance=0.5cm]%
\node[draw=none,fill=none](a){$\operatorname{EMP}=\operatorname{REG}=\operatorname{DR}$};
\node[draw=none,fill=none,below of=a,rotate=90](b){$\Big>$};
\node[draw=none,fill=none,below of=b](c){$\operatorname{IS},\,\operatorname{SNIS}$};
\end{tikzpicture}}
\subfigure[Misspecified]{\centering\begin{tikzpicture}[node distance=0.5cm]%
\node[draw=none,fill=none](a){$\operatorname{EMP}=\operatorname{REG}$};
\node[draw=none,fill=none,below of=a,rotate=90](b){$\Big>$};
\node[draw=none,fill=none,below of=b](c){$\operatorname{IS},\,\operatorname{SNIS},\,\operatorname{DR}$};
\end{tikzpicture}}
\vspace{-0.33em}\caption{Order of asymptotic MSEs}\label{fig:order}\vspace{-3em}%
\end{wrapfigure}
\textbf{Importance sampling estimators:} Importance sampling (IS) and step-wise importance sampling (SIS) are defined respectively as 
\begin{align*}
    \hat{\beta}_{\operatorname{is}}=\mathrm{E}_{n}\left[\omega_{0:T-1}\sum_{t=0}^{T-1}\gamma^{t}r_{t}\right],\ \hat{\beta}_{\operatorname{sis}}=\mathrm{E}_{n}\left[\sum_{t=0}^{T-1}\omega_{0:t}\gamma^{t}r_{t}\right]. 
\end{align*}
Both satisfy consistency but the MSE of SIS estimator is smaller than regular IS estimator by the law of total variance \citep{SuttonRichardS1998Rl:a}. 

The self-normalized versions of these estimators are: 
\begin{align*}
\hat{\beta}_{\operatorname{snis}}=\frac{\mathrm{E}_{n}\left[{\omega_{0:T-1}}\sum_{t=0}^{T-1}\gamma^{t}r_{t}\right]}{\mathrm{E}_{n}[\omega_{0:T-1}]},\ \hat{\beta}_{\operatorname{snsis}}=\mathrm{E}_{n}\left[\sum_{t=0}^{T-1}\frac{\omega_{0:t}}{\mathrm{E}_{n}[\omega_{0:t}]}\gamma^{t}r_{t}\right].
\end{align*}
SN(S)IS have two advantages over (S)IS. 
First, they are both 1-bounded in that they are bounded by the theoretical upper bound of reward.
\begin{definition}[$\alpha$-Boundedness]
The estimator is bounded by $\alpha\sum_{t=0}^{T-1}\gamma^{t}R_{\mathrm{max}}$.
\end{definition}
1-boundedness is the best we can achieve where $\alpha$-boundedness for any $\alpha>1$ is a weaker property.
Second, their conditional variance given state and action data are no larger than the conditional variance of any reward, to which we refer as \emph{stability}.
\begin{definition}[Stability]
Let $\mathcal D_{x,a}=\{(x\s i_t,a\s i_t):i\leq n,t\leq T-1\}$ denote that action-state data.
If the conditional variance of $\sum_{t=1}^{T-1}\gamma^tr\s i_t$, given $\mathcal D_{x,a}$, is bounded by $\sigma^2$, then the conditional variance of the estimator, given $\mathcal D_{x,a}$, is also bounded by $\sigma^2$.
\end{definition}
Unlike efficiency, boundedness and stability are finite-sample properties.
Notably (S)IS lacks both of these properties, which explains its unstable performance in practice, especially when density ratios can be very large. While boundedness can be achieved by a simple truncation, stability cannot.

\textbf{Doubly robust estimators:} A DR estimator for RL \citep{jiang,thomas2016} is given by fitting $\hat\tau$ and plugging it into $$\hat{\beta}_{\operatorname{dr}}=\hat{\beta}_{d}(\{q(x,a;\hat{\tau})\}_{t=0}^{T-1}),$$ where for any collection of functions $\{m_t\}_{t=0}^{T-1}$ (known as control variates) we let %
\begin{align}
\label{eq:drclass_gene}
 \hat{\beta}_{d}(\{m_t\}_{t=0}^{T-1})=\mathrm{E}_{n}\left[\sum_{t=0}^{T-1}\gamma^{t}\omega_{0:t}r_{t}-\gamma^{t}\left(\omega_{0:t}m_{t}(x_t,a_t)-\omega_{0:t-1}\left \{\sum_{a \in A} m_{t}(x_t,a){\pi_{e}}(a|x_t)\right \} \right) \right].
\end{align}
The DR estimator is both consistent and locally efficient. 
Instead of using a plug-in estimate of $\tau$,
\citep{CaoWeihua2009Iear,rubin08,Chow2018} further suggest that to pick $\hat\tau$ to minimize an estimate of the asymptotic variance of $\hat{\beta}_{d}(\{q(x,a;{\tau})\}_{t=0}^{T-1})$, leading to the MDR estimator \citep{Chow2018} for OPE.
However, DR and MDR satisfy neither boundedness nor stability. Replacing, $\omega_{0:t}$ with its self-normalized version $\omega_{0:t}/\E_n\bracks{\omega_{0:t}}$ in \eqref{eq:drclass_gene} leads to SNDR \cite{RobinsJames2007CPoD,thomas2016} (aka WDR), but it only satisfies these properties partially: it's only 2-bounded and partially stable (see Appendix \ref{sec:bdrs}).

Moreover, if the model is incorrectly specified, (M)DR may have MSE that is worse than any of the four (SN)(S)IS estimators.
\cite{KangJosephD.Y.2007DDRA} also experimentally showed that the performance of $\hat{\beta}_{\operatorname{dr}}$ might be very bad in practice when the model is wrong.

We therefore define \emph{intrinsic efficiency} as an additional desiderata, which prohibits this from occurring.
\begin{definition}[Intrinsic efficiency] 
\label{def:improve}
The asymptotic MSE of the estimator is smaller than that of any of $\hat{\beta}_{\operatorname{sis}},\hat{\beta}_{\operatorname{is}},\hat{\beta}_{\operatorname{snsis}},\hat{\beta}_{\operatorname{snis}},\hat{\beta}_{\operatorname{dr}}$, irrespective of model specification.
\end{definition}

MDR can be seen as motivated by a variant of intrinsic efficiency against only DR (hence the \# in Table~\ref{tab:comparison}). Although this is not precisely proven in \cite{Chow2018}, this arises as a corollary of our results. Nonetheless, MDR does not achieve full intrinsic efficiency against all above estimators.

\section{REG and EMP for Contextual Bandit}

None of the estimators above simultaneously satisfy all desired properties, Definitions \ref{def:local}--\ref{def:improve}. In the next sections, we develop new estimators that do. For clarity we first consider the simpler CB setting, where we write $(x,a,r)$ and $w$ instead of $(x_0,a_0,r_0)$ and $w_{0:0}$. We then start by showing how a modification to MDR ensures intrinsic efficiency. To obtain the other desiderata, we have to change how we choose the parameters.

\subsection{REG: Intrinsic Efficiency}\label{sec:regcb}

When $T=1$, $\hat{\beta}_{d}(m)$ in \eqref{eq:drclass_gene} becomes simply
\begin{align}
\label{eq:drclass}
\hat{\beta}_{d}(m)=\E_n\bracks{wr-\mathcal{F}(m)},
\end{align}
where $\mathcal{F}(m(x,a))=w m(x,a)-\left\{\sum_{a \in A}m(x,a)\pi_{e}(a|x)\right \}$.
By construction, $\mathrm{E}[\mathcal{F}(m)]=0$ for every $m$. (M)DR, for example, use $m(x,a;\tau)=q(x,a;\tau)$.
Instead, we let $$m(x,a;\zeta_1,\zeta_2,\tau)=\zeta_1+ \zeta_2 q(x,a;\tau),$$ for parameters $\tau$ and $\zeta=(\zeta_1,\zeta_2)$.
This new choice has a special property: it includes both IS and DR estimators. 
Given any $\tau$, setting $\zeta_1=0,\zeta_2=0$ yields IS and setting $\zeta_1=0,\zeta_2=1$ gives (M)DR. 
This gives a simple recipe for intrinsic efficiency: estimate the variance of $\hat{\beta}_{d}(\zeta_1+\zeta_2q(x,a;\tau))$ and minimize it over $\tau,\zeta$.
Because $\hat{\beta}_{d}(m)$ is unbiased,
its variance is simply $\mathrm{E}\left[\{wr-\mathcal{F}(m)\}^{2} \right]-{\beta^{*}}^{2}$. Therefore, over the parameter spaces $\Theta_\tau$ the (unknown) minimal variance choice is
\begin{align}
\label{eq:optimzation}
    (\zeta^{*},\tau^{*})=\argmin_{\zeta \in \R2,\tau \in \Theta_\tau} \mathrm{E}\left[\left\{wr- \mathcal{F}(\zeta_1+ \zeta_2 q(x,a;\tau))\right\}^{2}\right].
\end{align}
We let the REG estimator be $\hat{\beta}_{\operatorname{reg}}= \hat{\beta}_{d}(\hat{\zeta}_1+\hat{\zeta}_2 q(x,a;\hat{\tau}))$ where we choose the parameters by minimizing the estimated variance:
\begin{align}
\label{eq:optimzation2}
(\hat{\zeta},\hat{\tau})=\argmin_{\zeta \in \R2,\tau \in \Theta_\tau} \mathrm{E}_{n}\left[\left\{wr- \mathcal{F}(\zeta_1+ \zeta_2 q(x,a;\tau))\right\}^{2}\right].
\end{align}

To establish desired efficiencies, we prove the following theorem indicating that our choice of parameters does not inflate the variance.
Note that it is not obvious because the plug-in some parameters generally causes an inflation of the variance. 

\begin{theorem}
\label{thm:drs}
When the optimal solution $(\zeta^{*},\tau^{*})$ in \eqref{eq:optimzation} is unique, 
\begin{align*}
    \mathrm{Asmse}[\hat{\beta}_{\operatorname{reg}}]=n^{-1}\min_{\zeta \in \R2,\tau \in \Theta_\tau}\mathrm{E}\left[\left\{wr- \mathcal{F}(\zeta_1+\zeta_2 q(x,a;\tau))\right\}^{2}-{\beta^{*}}^{2}\right].
\end{align*}
\end{theorem}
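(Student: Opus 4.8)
The plan is to show that the data-dependent choice $(\hat\zeta,\hat\tau)$ is first-order irrelevant, so that $\hat\beta_{\operatorname{reg}}$ behaves asymptotically like the infeasible estimator $\hat\beta_d$ evaluated at the fixed population optimum $(\zeta^*,\tau^*)$, whose \emph{exact} variance is the quantity on the right-hand side. Write $\theta=(\zeta_1,\zeta_2,\tau)$ and
\[
g(\mathcal H;\theta)=wr-\mathcal F\left(\zeta_1+\zeta_2 q(x,a;\tau)\right),
\]
so that $\hat\beta_{\operatorname{reg}}=\mathrm E_n[g(\mathcal H;\hat\theta)]$ while $\hat\theta$ minimizes $\mathrm E_n[g(\mathcal H;\theta)^2]$. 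The single most important structural fact is that $\mathrm E[\mathcal F(m)]=0$ for every $m$, so $\mathrm E[g(\mathcal H;\theta)]=\beta^*$ for \emph{every} $\theta$: the estimator is unbiased no matter which parameter value is plugged in. This is exactly the property that fails for ordinary plug-in estimators and is the reason no variance inflation occurs.

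The key lemma I would isolate is that $\mathrm E[\partial_\theta g(\mathcal H;\theta)]=0$ for all $\theta$. This follows immediately by differentiating the identity $\mathrm E[g(\mathcal H;\theta)]=\beta^*$ in $\theta$, justified by dominated convergence using Assumption~1 and the boundedness of $w$; alternatively one checks it directly, since $\partial_{\zeta_1}g=-(w-1)$, $\partial_{\zeta_2}g=-\mathcal F(q(\cdot;\tau))$, and $\partial_\tau g=-\zeta_2\mathcal F(\partial_\tau q(\cdot;\tau))$, each of which has the form $\mathcal F(\cdot)$ (or $w-1$) and therefore mean zero by $\mathrm E[w]=1$ and $\mathrm E[\mathcal F(m)]=0$.

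With this in hand I would proceed by standard $M$-estimation. Uniqueness of $(\zeta^*,\tau^*)$ together with compactness of $\Theta_\tau$ and the boundedness of $q$ gives consistency $\hat\theta\to_p\theta^*$ and the root-$n$ rate $\hat\theta-\theta^*=\mathrm O_p(n^{-1/2})$ from the first-order conditions $\mathrm E_n[g\,\partial_\theta g]=0$. A Taylor expansion of $\hat\beta_{\operatorname{reg}}=\mathrm E_n[g(\mathcal H;\hat\theta)]$ about $\theta^*$ then gives
\[
\hat\beta_{\operatorname{reg}}=\mathrm E_n[g(\mathcal H;\theta^*)]+\mathrm E_n[\partial_\theta g(\mathcal H;\theta^*)]^\top(\hat\theta-\theta^*)+\mathrm O_p(\|\hat\theta-\theta^*\|^2).
\]
Since $\mathrm E_n[\partial_\theta g(\mathcal H;\theta^*)]=\mathrm E[\partial_\theta g(\mathcal H;\theta^*)]+\mathrm O_p(n^{-1/2})=\mathrm O_p(n^{-1/2})$ by the key lemma, the cross term is $\mathrm O_p(n^{-1})$ and the remainder is $\mathrm O_p(n^{-1})$, whence $\hat\beta_{\operatorname{reg}}=\mathrm E_n[g(\mathcal H;\theta^*)]+\op(n^{-1/2})$. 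Applying the central limit theorem to the fixed-parameter average $\mathrm E_n[g(\mathcal H;\theta^*)]$ and recalling $\mathrm E[g(\mathcal H;\theta^*)]=\beta^*$ yields asymptotic variance $\mathrm E[g(\mathcal H;\theta^*)^2]-{\beta^*}^2=\min_{\zeta,\tau}\mathrm E[\{wr-\mathcal F(\zeta_1+\zeta_2 q(x,a;\tau))\}^2]-{\beta^*}^2$, which is the claimed $\mathrm{Asmse}$.

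The main obstacle I anticipate is the rigorous justification of the expansion uniformly over the parameter space: establishing the root-$n$ rate for $\hat\theta$ and controlling the quadratic remainder require stochastic equicontinuity (an empirical-process or Donsker-type argument) for the classes $\{g(\cdot;\theta)\}$ and $\{\partial_\theta g(\cdot;\theta)\}$, which is where Assumption~1 and the assumed boundedness of $w$ do the real work; the possible nonlinearity of $q$ in $\tau$ makes this the delicate step, whereas the orthogonality lemma that actually drives the result is essentially free.
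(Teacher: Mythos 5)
Your proposal is correct in substance but takes a genuinely different technical route from the paper's proof, even though both hinge on the identical structural fact: $\mathrm{E}[\mathcal{F}(m)]=0$ for every $m$, so $\mathrm{E}_n\left[g(\mathcal{H};\theta)\right]$ is \emph{exactly} unbiased for $\beta^{*}$ at every $\theta$. You exploit this via a Taylor expansion and the derivative identity $\mathrm{E}[\partial_\theta g(\mathcal{H};\theta)]=0$; the paper never differentiates in $\theta$ at all. Instead it writes
\begin{align*}
\sqrt{n}\left(\hat{\beta}_{\operatorname{reg}}-\beta^{*}\right)
=\left(\mathbb{G}_{n}[u(\hat{m})]-\mathbb{G}_{n}[u(m^{*})]\right)
+\mathbb{G}_{n}[u(m^{*})]
+\sqrt{n}\left(\mathrm{E}[u(\hat{m})]-\beta^{*}\right),
\end{align*}
kills the third term exactly by unbiasedness, and kills the first by asymptotic equicontinuity: the class $\{u(\zeta_1+\zeta_2 q(\cdot;\tau))\}$ near $(\zeta^{*},\tau^{*})$ is Donsker and $u(\hat m)\to u(m^{*})$ in $L_2$ once $\hat\theta\stackrel{p}{\rightarrow}\theta^{*}$ (Lemma 19.24 and Theorem 5.7 of van der Vaart). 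The payoff of the paper's route is that it needs \emph{only consistency} of $\hat\theta$ --- no convergence rate and no differentiability of $q$ in $\tau$ --- which is why the theorem can be stated under mere uniqueness of $(\zeta^{*},\tau^{*})$ plus Assumption 1. Your route is more elementary and makes the mechanism transparent (the orthogonality lemma is indeed ``free''), but as written it quietly assumes more than the paper does: the root-$n$ rate for $\hat\theta$ does \emph{not} follow from uniqueness and compactness alone; it requires $\theta^{*}$ to be interior (else the first-order conditions need not hold, and $\tau^{*}$ may sit on the boundary of the compact $\Theta_\tau$), $q$ twice differentiable in $\tau$ with bounded derivatives, and a nonsingular Hessian of the population criterion at $\theta^{*}$. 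If the minimum is flat in $\tau$, your quadratic-remainder bound $\mathrm{O}_{p}(\|\hat\theta-\theta^{*}\|^{2})$ need not be $\mathrm{o}_{p}(n^{-1/2})$ (you only need an $\mathrm{o}_{p}(n^{-1/4})$ rate for that term, but even this requires a curvature condition). It is worth noting --- and your write-up almost says this --- that the cross term in your expansion needs only consistency, since $\mathrm{E}_{n}[\partial_\theta g(\mathcal{H};\theta^{*})]=\mathrm{O}_{p}(n^{-1/2})$ by the orthogonality lemma; the rate is truly used only to control the remainder, and that remainder-control step is precisely what the paper's Donsker argument is designed to bypass. So: same engine (parameter-independent unbiasedness), different transmission (smooth $M$-estimation expansion vs.\ empirical-process equicontinuity), with the paper's version valid under strictly weaker regularity and yours cleaner when $q$ is smooth and well-conditioned.
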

\begin{remark}
\label{rem:bis}
For $\zeta=(\beta^{*},0)$, this asymptotic MSE is the same as the one of SNIS, $\mathrm{var}[w(r-\beta^{*})] $. 
\end{remark}

From Theorem \ref{thm:drs} we obtain the desired efficiencies. Importantly, to prove this, we note how the asymptotic MSEs of each of (SN)(S)IS and DR can be represented in the form $n^{-1}\mathrm{E}\left[\left\{wr- \mathcal{F}(\zeta_1+\zeta_2 q(x,a;\tau))\right\}^{2}-{\beta^{*}}^{2}\right]$ for some $\zeta$ and $\tau$.
\begin{corollary}
\label{col:drs_imp}
The estimator $\hat{\beta}_{\operatorname{reg}}$ has local and intrinsic efficiency.
\end{corollary}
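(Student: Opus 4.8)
The plan is to read everything off Theorem~\ref{thm:drs}, which identifies
$$\mathrm{Asmse}[\hat{\beta}_{\operatorname{reg}}]=\min_{\zeta\in\R2,\,\tau\in\Theta_\tau} G(\zeta,\tau),\qquad G(\zeta,\tau):=n^{-1}\mathrm{E}\!\left[\{wr-\mathcal{F}(\zeta_1+\zeta_2 q(x,a;\tau))\}^2-{\beta^*}^2\right],$$
working under that theorem's hypothesis that the minimizer $(\zeta^*,\tau^*)$ is unique. For intrinsic efficiency (Definition~\ref{def:improve}) it then suffices to exhibit, for each competing estimator, a single point $(\zeta,\tau)$ at which $G$ equals that estimator's own asymptotic MSE: since $\mathrm{Asmse}[\hat{\beta}_{\operatorname{reg}}]$ is the minimum of $G$, it is automatically no larger than each competitor. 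First I would note that in the contextual-bandit case $T=1$ the stepwise estimators collapse onto their non-stepwise counterparts, so only $\hat\beta_{\operatorname{is}}$, $\hat\beta_{\operatorname{snis}}$, and $\hat\beta_{\operatorname{dr}}$ must be matched.

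The three matches are: IS at $\zeta=(0,0)$ (any $\tau$), where $\mathcal{F}(0)=0$ forces $G=n^{-1}\mathrm{var}[wr]$, exactly the IS variance; SNIS at $\zeta=(\beta^*,0)$, where Remark~\ref{rem:bis} already computes $G=n^{-1}\mathrm{var}[w(r-\beta^*)]$, which coincides with the delta-method variance of the ratio $\mathrm{E}_n[wr]/\mathrm{E}_n[w]$ (influence function $w(r-\beta^*)$); and DR at $\zeta=(0,1)$, $\tau=\tau_0:=\operatorname{plim}\hat\tau$. The first two are pure substitutions once $\mathrm{E}[w]=1$ and $\mathrm{E}[wr]=\beta^*$ are used.

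The DR match is the one step that is not a mere substitution, and it is the main obstacle. A priori the asymptotic variance of $\hat\beta_{\operatorname{dr}}=\hat\beta_d(q(\cdot;\hat\tau))$ ought to carry an extra term from having estimated $\tau$, so it need not equal $G(0,1,\tau_0)=n^{-1}(\mathrm{E}[\{wr-\mathcal{F}(q(\cdot;\tau_0))\}^2]-{\beta^*}^2)$. What rescues the match is the defining property $\mathrm{E}[\mathcal{F}(m)]=0$ for every $m$: writing $\psi(\mathcal{H};\tau)=wr-\mathcal{F}(q(x,a;\tau))$, the population mean $\mathrm{E}[\psi(\mathcal{H};\tau)]\equiv\beta^*$ is constant in $\tau$, so $\mathrm{E}[\partial_\tau\psi(\mathcal{H};\tau_0)]=0$. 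A first-order expansion of $\hat\beta_d(\hat\tau)$ about $\tau_0$ then leaves the plug-in term $\mathrm{E}_n[\partial_\tau\psi(\tau_0)]\,(\hat\tau-\tau_0)$, a product of two $O_p(n^{-1/2})$ factors, hence $O_p(n^{-1})=\mathrm{o}_p(n^{-1/2})$; thus $\sqrt n(\hat\beta_{\operatorname{dr}}-\beta^*)$ and $\sqrt n(\mathrm{E}_n[\psi(\tau_0)]-\beta^*)$ share the same limit and DR's asymptotic MSE is exactly $G(0,1,\tau_0)$. With all three points matched, $\mathrm{Asmse}[\hat\beta_{\operatorname{reg}}]$ is no larger than each competitor, which is intrinsic efficiency.

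For local efficiency (Definition~\ref{def:local}) I would combine this with the fact, recalled earlier in the paper, that when $q$ is well specified ($q(\cdot;\tau_0)=Q^{\pi_e}$) the DR estimator attains the semiparametric efficiency bound. The intrinsic-efficiency inequality gives $\mathrm{Asmse}[\hat\beta_{\operatorname{reg}}]\le\mathrm{Asmse}[\hat\beta_{\operatorname{dr}}]$, which equals the bound. Conversely $\hat\beta_{\operatorname{reg}}=\hat\beta_d(\hat\zeta_1+\hat\zeta_2 q(\cdot;\hat\tau))$ is consistent, because $\hat\beta_d(m)$ is unbiased for every control variate $m$, so its asymptotic MSE cannot fall below the bound. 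The two inequalities force equality, so $\hat\beta_{\operatorname{reg}}$ attains the bound and is locally efficient.
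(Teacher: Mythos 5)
Your proof is correct and takes essentially the same route as the paper's: Theorem~\ref{thm:drs} identifies $\mathrm{Asmse}[\hat{\beta}_{\operatorname{reg}}]$ as the minimum over $(\zeta,\tau)$, and the paper likewise matches IS at $\zeta=(0,0)$, SNIS at $\zeta=(\beta^{*},0)$ (as in Remark~\ref{rem:bis}), and DR at $\zeta=(0,1)$, $\tau=\tau^{\dagger}$, obtaining local efficiency from the well-specified point $\zeta=(0,1)$. The only difference is that you spell out two steps the paper merely asserts --- the first-order argument (via $\mathrm{E}[\mathcal{F}(m)]=0$, so the derivative in $\tau$ has mean zero) that plugging in $\hat{\tau}$ does not inflate DR's asymptotic variance, and the two-sided inequality pinning $\mathrm{Asmse}[\hat{\beta}_{\operatorname{reg}}]$ to the efficiency bound --- which strengthens rather than departs from the paper's proof.
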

 
\begin{remark}[Comparison to MDR]
REG is like MDR with an expanded model class. This class is carefully chosen to guarantee intrinsic efficiency. 
In addition, as another corollary, we have proven partial intrinsic efficiency for MDR against DR (just fix $\zeta=(0,1)$ in \eqref{eq:optimzation2}) where \cite{Chow2018} only proved consistency of MDR.
However, neither MDR nor REG satisfies boundedness and stability.
\end{remark}

\begin{remark}[SNREG]
Replacing weights $w$ by their self-normalized version $w/\E_n[w]$ in REG leads to SNREG. We explore this estimator in Appendix A and show it only gives 2-boundedness, does not give stability, and limits REG's intrinsic efficiency to be only against SN(S)IS and SNDR.
\end{remark}

\subsection{EMP: Intrinsic Efficiency, Boundedness, and Stability}

We next construct an estimator satisfying intrinsic efficiency as well as boundedness and stability. The key idea is to use empirical likelihood to choose the parameters \citep{TanZhiqiang2004OLAf,TanZhiqiang2006ADAf,TanZhiqiang2010Bead}. Empirical likelihood is a nonparametric MLE commonly used in statistics \citep{OwenArt.ArtB.2001El}. We consider the control variate $m(x,a;\xi;\tau)=\xi + q(x,a;\tau)$ with parameters $\xi,\tau$ and $q(x,a;\tau)=t(x,a)^{\top}\tau$, where $t(x,a)$ is a $d_\tau$-dimensional vector of linear independent basis functions not including a constant. Then, an estimator for $\beta$ is defined as 
\begin{align}\notag
\hat{\beta}_{\operatorname{emp}} &= \mathrm{E}_{n}\left[\hat{c}^{-1}\hat{\kappa}(x,a)\pi_{e}(a|x)r\right ],\quad \text{where}\\
\hat{\kappa}(x,a)&= \{\pi_{b}(a|x)[1+\mathcal{F}(m(x,a;\hat{\xi },\hat{\tau}))] \}^{-1},\ 
\hat{c} = \mathrm{E}_{n}\left[\{1+\mathcal{F}(m(x,a;\hat{\xi },\hat{\tau}))\}^{-1}\right],\notag\\
\label{eq:drss_optim}
    \hat{\xi}, \hat{\tau} &=\argmax_{\xi \in \mathbb{R}, \tau \in \Theta_\tau}\mathrm{E}_{n}[\log \{1+\mathcal{F}(m(x,a;\xi,\tau))\}]. 
\end{align}
This is motivated by solving the dual problem of the following optimization problem formulated by the empirical likelihood:
\begin{align*}
    \max_{\kappa} \sum_{i=1}^{n}\log \kappa^{(i)},\,\mathrm{s.t.} \sum_{i=1}^{n}\kappa^{(i)}\pi_{b}(a^{(i)}|x^{(i)})=1,\,\sum_{i=1}^{n}\kappa^{(i)}\pi_{b}(a^{(i)}|x^{(i)})\mathcal{F}(m(x^{(i)},a^{(i)};\xi,\tau))=0. 
\end{align*}

The objective in an optimization problem \eqref{eq:drss_optim} is a convex function; therefore, it is easy to solve. Then, the estimator $\hat{\beta}_{\operatorname{emp}}$ has all the desirable finite-sample and asymptotic properties. 

\begin{lemma}
\label{lem:drss}
The estimator $\hat{\beta}_{\operatorname{emp}}$ satisfies 1-boundedness and stability. 
\end{lemma}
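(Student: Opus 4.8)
The plan is to exhibit $\hat\beta_{\operatorname{emp}}$ as a weighted average of the observed rewards whose weights (i) are nonnegative, (ii) sum to one, and (iii) depend on the sample only through the state--action data $\mathcal D_{x,a}$; both claimed properties then follow by elementary bookkeeping. First I would unfold the definitions. Writing $w=\pi_e(a\mid x)/\pi_b(a\mid x)$, we have $\hat\kappa(x,a)\pi_e(a\mid x)=w/\{1+\mathcal F(m(x,a;\hat\xi,\hat\tau))\}$, so the estimator becomes $\hat\beta_{\operatorname{emp}}=\sum_{i=1}^n\rho_i r^{(i)}$ with $\rho_i=\hat c^{-1}n^{-1}w^{(i)}/(1+\mathcal F^{(i)})$, where $\mathcal F^{(i)}=\mathcal F(m(x^{(i)},a^{(i)};\hat\xi,\hat\tau))$.

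Second, I would verify the three properties. Nonnegativity holds because $w^{(i)}\ge 0$ always and because any maximizer of $\mathrm{E}_n[\log\{1+\mathcal F(m)\}]$ must keep $1+\mathcal F^{(i)}>0$ for every $i$ (otherwise the objective is $-\infty$), which also forces $\hat c>0$. The normalization $\sum_i\rho_i=1$ is the crux, and it is where the intercept $\xi$ in $m=\xi+q$ does its work: since $\mathcal F$ is linear in its argument, $\partial_\xi\mathcal F(m)=\mathcal F(1)=w-1$, so the first-order stationarity condition of the concave program \eqref{eq:drss_optim} in $\xi$ reads $n^{-1}\sum_i(w^{(i)}-1)/(1+\mathcal F^{(i)})=0$, i.e.\ $n^{-1}\sum_i w^{(i)}/(1+\mathcal F^{(i)})=n^{-1}\sum_i 1/(1+\mathcal F^{(i)})=\hat c$. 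Dividing by $\hat c$ yields $\sum_i\rho_i=1$. Since each $r^{(i)}\in[0,R_{\max}]$, the estimator is then a convex combination of points in $[0,R_{\max}]$ and hence lies in $[0,R_{\max}]$, which is exactly $1$-boundedness when $T=1$.

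Third, for stability I would note that the objective in \eqref{eq:drss_optim} never involves the rewards, so $\hat\xi,\hat\tau$ and therefore $\mathcal F^{(i)}$, $\hat c$, $w^{(i)}$, and all the weights $\rho_i$ are measurable functions of $\mathcal D_{x,a}$ alone. Conditioning on $\mathcal D_{x,a}$, the $\rho_i$ are fixed constants and the $r^{(i)}$ are independent across $i$, so $\mathrm{var}[\hat\beta_{\operatorname{emp}}\mid\mathcal D_{x,a}]=\sum_i\rho_i^2\,\mathrm{var}[r^{(i)}\mid\mathcal D_{x,a}]\le\sigma^2\sum_i\rho_i^2$. Finally, nonnegativity together with $\sum_i\rho_i=1$ gives $\rho_i\le 1$ and hence $\sum_i\rho_i^2\le\sum_i\rho_i=1$, so $\mathrm{var}[\hat\beta_{\operatorname{emp}}\mid\mathcal D_{x,a}]\le\sigma^2$, i.e.\ stability.

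I expect the only delicate point to be the identity $\sum_i\rho_i=1$, since it requires that the $\xi$-stationarity condition actually holds at the solution and that $1+\mathcal F^{(i)}>0$ there. The latter is automatic from the domain of the logarithm, and for the former it suffices that the maximizer is interior in the $\xi$ coordinate; I would justify this from the strict concavity and coercivity of $\xi\mapsto\mathrm{E}_n[\log\{1+\mathcal F(m)\}]$ on the region where all terms are positive, guaranteeing a stationary interior optimum. Note that because of the explicit renormalization by $\hat c$, the argument needs only this single $\xi$-condition and not full interiority in $(\xi,\tau)$. Everything else reduces to the ``convex combination'' and ``variance of a weighted sum of conditionally independent terms'' computations sketched above.
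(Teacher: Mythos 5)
Your proposal is correct and follows essentially the same route as the paper's own proof: the first-order (stationarity) condition in the intercept $\xi$ yields $n^{-1}\sum_i w^{(i)}/(1+\mathcal F^{(i)})=\hat c$, so the weights form a convex combination, giving $1$-boundedness, while stability follows by conditioning on $\mathcal D_{x,a}$ (the objective in \eqref{eq:drss_optim} is reward-free) and bounding $\sum_i\rho_i^2\le 1$. If anything, you are more careful than the paper at two points it glosses over: you explicitly verify nonnegativity of the weights, without which the step $\sum_i\rho_i^2\le\bigl(\sum_i\rho_i\bigr)=1$ used in the paper would fail, and you justify interiority of the maximizer in the $\xi$-coordinate so that the first-order condition actually holds.
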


\begin{theorem}
\label{thm:drss}
The estimator $\hat{\beta}_{\operatorname{emp}}$ has local and intrinsic efficiency, and  
\begin{align}
\label{eq:opt-linear}
\mathrm{Asmse}[\hat{\beta}_{\operatorname{emp}}]=
n^{-1} \min_{\zeta \in \mathbb{R},\tau \in \mathbb{R}^{d_\tau}}\mathrm{E}\left[\left \{wr-\mathcal{F}(\zeta+ q(x,a;\tau))\right\}^{2}-{\beta^{*}}^{2}\right].  
\end{align}
\end{theorem}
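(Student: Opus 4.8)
The plan is to recognize the EMP optimization as the dual of an empirical-likelihood problem with an automatically-satisfied moment condition, read off the influence function of $\hat\beta_{\operatorname{emp}}$, and identify its variance with a least-squares projection. The crucial first observation is that $\mathcal F$ is \emph{linear} in the parameters: since $m(x,a;\xi,\tau)=\xi+t(x,a)^\top\tau$, we may write $\mathcal F(m(x,a;\xi,\tau))=u(x,a)^\top\theta$ with $\theta:=(\xi,\tau)\in\mathbb R^{1+d_\tau}$ and $u(x,a)=(\,w-1,\ w\,t(x,a)-\sum_{a'}t(x,a')\pi_e(a'|x)\,)$. Because $\mathrm{E}[\mathcal F(m)]=0$ for every $m$, differentiating in $\theta$ gives $\mathrm{E}[u(x,a)]=0$. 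Hence the objective in \eqref{eq:drss_optim} is exactly $\max_\theta\mathrm{E}_n[\log(1+u^\top\theta)]$, i.e.\ the empirical-likelihood dual for the (trivially valid) moment condition $\mathrm{E}[u]=0$, with $\hat\theta=(\hat\xi,\hat\tau)$ playing the role of the Lagrange multiplier whose population value is $0$ (the unique maximizer by strict concavity once $S:=\mathrm{E}[uu^\top]$ is nonsingular).

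Next I would run the standard empirical-likelihood expansion. With $S$ nonsingular, guaranteed by linear independence of $t$ together with the constant, the first-order condition $\mathrm{E}_n[u/(1+u^\top\hat\theta)]=0$ yields $\hat\theta=S^{-1}\mathrm{E}_n[u]+\op(n^{-1/2})$ and $\hat\theta=O_p(n^{-1/2})$. Expanding $\hat\beta_{\operatorname{emp}}=\hat c^{-1}\mathrm{E}_n[wr/(1+u^\top\hat\theta)]$ through $1/(1+u^\top\hat\theta)=1-u^\top\hat\theta+O_p(n^{-1})$, and noting $\hat c=\mathrm{E}_n[1/(1+u^\top\hat\theta)]=1-\mathrm{E}_n[u]^\top\hat\theta+O_p(n^{-1})=1+O_p(n^{-1})$, the self-normalization is negligible at first order and
\[
\hat\beta_{\operatorname{emp}}-\beta^*=\mathrm{E}_n[wr-\beta^*]-B^\top\hat\theta+\op(n^{-1/2})=\mathrm{E}_n\big[wr-\beta^*-(S^{-1}B)^\top u\big]+\op(n^{-1/2}),
\]
where $B:=\mathrm{E}[u\,wr]$ and I used $\mathrm{E}[wr]=\beta^*$. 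Thus the influence function is $\psi=wr-\beta^*-(S^{-1}B)^\top u$.

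It then remains to identify $\mathrm{var}[\psi]$ with the claimed minimum. Setting $\theta^\dagger:=S^{-1}B$, the identity $\mathrm{E}[u]=0$ gives $\theta^\dagger=S^{-1}\mathrm{E}[u(wr-\beta^*)]=\argmin_\theta\mathrm{E}[(wr-\beta^*-u^\top\theta)^2]$, the population OLS coefficient of $wr-\beta^*$ on $u$. Hence $\mathrm{var}[\psi]=\min_\theta\mathrm{E}[(wr-\beta^*-u^\top\theta)^2]$, and since $u^\top\theta=\mathcal F(\zeta+q(x,a;\tau))$ with $\theta=(\zeta,\tau)$, and $\mathrm{E}[\mathcal F]=0$ so that $\mathrm{E}[(wr-\mathcal F)^2]-\beta^{*2}=\mathrm{E}[(wr-\beta^*-\mathcal F)^2]$, this equals $\min_{\zeta\in\mathbb R,\tau\in\mathbb R^{d_\tau}}\mathrm{E}[\{wr-\mathcal F(\zeta+q(x,a;\tau))\}^2]-\beta^{*2}$, giving \eqref{eq:opt-linear}. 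Local and intrinsic efficiency then follow immediately, exactly as in Corollary \ref{col:drs_imp}: each of $\hat\beta_{\operatorname{is}},\hat\beta_{\operatorname{snis}},\hat\beta_{\operatorname{dr}}$ (and their stepwise analogues) has asymptotic MSE equal to this objective evaluated at a specific $(\zeta,\tau)$ in the family, so the minimizer dominates them all, and under a well-specified linear model the semiparametric bound is attained by the DR member and hence by the minimizer.

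The main obstacle is the rigorous control of the empirical-likelihood step: establishing $\hat\theta=O_p(n^{-1/2})$ together with interior feasibility (all $1+u^{(i)\top}\hat\theta>0$ with probability tending to one), and arguing that the compactness of $\Theta_\tau$ is asymptotically inactive because the population optimum sits at the interior point $\theta=0$, so that the local expansion treats $\tau$ as unconstrained and the projection ranges over the full $\mathbb R^{d_\tau}$. Boundedness of the weights makes $u$ bounded, which controls $\max_i\|u^{(i)}\|$ and lets the classical Owen-type arguments go through; the remaining care is the uniform control of the $O_p(n^{-1})$ Taylor remainders in both the numerator and $\hat c$.
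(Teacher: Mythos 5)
Your proposal is correct and takes essentially the same route as the paper's own proof: the paper likewise exploits linearity to write $\mathcal{F}(\xi+\tau^{\top}t(x,a))=\psi^{\top}g(x,a)$ with $g=(\mathcal{F}(1),\mathcal{F}(t(x,a)))^{\top}$ (your $u$), proves $\hat{\psi}\stackrel{p}{\rightarrow}0$ (via Jensen's inequality and a negative-definite Hessian, matching your strict-concavity argument) and $\hat{c}\stackrel{p}{\rightarrow}1$, then expands the Z-estimator equations to obtain the influence function $wr-\beta^{*}-{\psi^{*}}^{\top}g$ with $\psi^{*}=\mathrm{E}[gg^{\top}]^{-1}\mathrm{E}[wrg]$, i.e., exactly your OLS-projection identification of the minimum in \eqref{eq:opt-linear}, with efficiency read off as in Corollary \ref{col:drs_imp}. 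The technical steps you flag as needing care (uniform control of remainders, boundedness of the moment vector) are the ones the paper discharges via assumptions (a1)--(a2) together with uniform convergence and Donsker-type arguments from van der Vaart.
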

Here, we have assumed the model is linear in $\tau$. 
Without this assumption, Theorem \ref{thm:drss} may not hold. In the following section, we consider how to relax this assumption while 
maintaining local and intrinsic efficiency. 

\subsection{Practical REG and EMP}
\label{sec:pra}

While REG and EMP have desirable theoretical properties,
both have some practical issues. First, for REG, the optimization problem in \eqref{eq:optimzation2} may be non-convex if $q(x,a;\tau)$ is not linear in $\tau$, as is the case in our experiment in Sec.~\ref{sec:expcb} where we use a logistic model with $216$ parameters.
(The same issue exists for MDR.) Similarly, EMP estimator has the problem that there is no theoretical guarantee for intrinsic efficiency when $q(x,a;\tau)$ is not linear in $\tau$. Therefore, we suggest the following unified practical approach to selecting $\tau$ in a way that maintains the desired properties.

First, we estimate a parameter $\tau$ in $q(x,a;\tau)$ as in DM to obtain $\hat{\tau}$, which we assume as a limit, $\hat{\tau}\stackrel{p}{\rightarrow}\tau^{\dagger}$
. Then, we consider solving the following optimization problems instead of \eqref{eq:optimzation2} and \eqref{eq:drss_optim} for REG and EMP, respectively
\begin{align*}
    \hat{\zeta}=\argmin_{\zeta \in \mathbb{R}^{2}}\mathrm{E}_{n}\left[\left\{wr- \mathcal{F}(m(x,a;\zeta,\hat{\tau}))\right\}^{2}\right],\,
    \hat{\xi} =\argmax_{\xi \in \mathbb{R}^{2}}\mathrm{E}_{n}[\log \{1+\mathcal{F}(m(x,a;\xi,\hat{\tau}))\}],
\end{align*}
where $m(x,a;\zeta,\hat{\tau})=\zeta_1+\zeta_2 q(x,a;\hat{\tau})$ or $m(x,a;\xi,\hat{\tau})=\xi_1+\xi_2 q(x,a;\hat{\tau})$. This is a convex optimization problem with two dimensional parameters; thus, it is easy to solve. 

Here, the asymptotic MSE of practical $\hat{\beta}_{\operatorname{reg}}$ and $\hat{\beta}_{\operatorname{emp}}$ are as follows. 
\begin{theorem}
\label{thm:pra_drs}
The above plug-in-$\tau$ versions of $\hat{\beta}_{\operatorname{reg}}$ and $\hat{\beta}_{\operatorname{emp}}$ still satisfy local and intrinsic efficiency, and $\hat{\beta}_{\operatorname{emp}}$ satisfies 1-boundedness and partial stability. Their asymptotic MSEs are 
\begin{align}
\label{eq:prac_drs}
   n^{-1}\min_{\zeta \in \mathbb{R}^{2}}\mathrm{E}\left[\left\{wr- \mathcal{F}(\zeta_1+\zeta_2 q(x,a;\tau^{\dagger}))\right\}^{2}-{\beta^{*}}^{2}\right].
\end{align}

\end{theorem}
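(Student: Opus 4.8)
The plan is to view the two practical estimators as the general REG and EMP constructions (Theorems~\ref{thm:drs} and~\ref{thm:drss}) applied to the genuinely linear two-parameter control-variate family $\{\zeta_1 + \zeta_2 \bar q(x,a) : \zeta \in \mathbb{R}^2\}$ with the single fixed feature $\bar q(x,a) = q(x,a;\tau^\dagger)$, and then to argue separately that replacing $\bar q$ by its estimate $q(x,a;\hat\tau)$ costs nothing at first order. Note first that $\mathcal{F}$ is linear in its argument, $\mathcal{F}(\zeta_1 + \zeta_2 q) = \zeta_1(w-1) + \zeta_2\mathcal{F}(q)$, so the REG objective is a convex quadratic in $\zeta$ and the EMP log-objective is concave in $\xi$; both optimizations are well posed and, by standard M-estimation/argmax consistency together with $\hat\tau \stackrel{p}{\rightarrow}\tau^\dagger$, their minimizers converge to the population value $\zeta^\dagger = \argmin_\zeta \mathrm{E}[\{wr - \mathcal{F}(\zeta_1 + \zeta_2 \bar q)\}^2]$. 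Because this reduced family is linear in the two-dimensional parameter, the linearity hypothesis required by Theorem~\ref{thm:drss} is automatically met, so if $\bar q$ were known the asymptotic MSE of both estimators would be exactly \eqref{eq:prac_drs}.

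The central step is to show that estimating $\bar q$ by $q(\cdot;\hat\tau)$ does not inflate the first-order asymptotics. The key is the Neyman-orthogonality identity $\tfrac{\partial}{\partial\tau}\mathrm{E}[\mathcal{F}(q(x,a;\tau))] = \mathrm{E}[\mathcal{F}(\partial_\tau q(x,a;\tau))] = 0$, valid for every $\tau$ because $\mathrm{E}[\mathcal{F}(m)]=0$ identically. Writing each estimator through its stacked estimating equations and expanding in $(\hat\zeta - \zeta^\dagger,\hat\tau - \tau^\dagger)$, the derivative of the relevant moments with respect to $\tau$ converges to $\mathrm{E}[\mathcal{F}(\partial_\tau q(\cdot;\tau^\dagger))]=0$, so (given the mild regularity that $\hat\tau$ is $\sqrt n$-consistent and $q$ smooth in $\tau$, with the compactness assumption (a1) supplying uniformity) the contribution of $\hat\tau$ is $o_p(n^{-1/2})$. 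The first-order effect of $\hat\zeta$ (resp.\ $\hat\xi$) vanishes by the optimality condition of the variance minimization (resp.\ the empirical-likelihood stationarity), exactly as in the proofs of Theorems~\ref{thm:drs} and~\ref{thm:drss}. Hence $\sqrt n(\hat\beta - \beta^*)$ has the same limit as for the oracle $\hat\beta_d(\zeta_1^\dagger + \zeta_2^\dagger q(\cdot;\tau^\dagger))$, giving \eqref{eq:prac_drs}. I expect this to be the main obstacle, since here $q$ need not be linear in $\tau$, so the argument must rest entirely on the exact orthogonality rather than on a closed-form score, and the remainder terms require a uniform empirical-process bound.

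Local and intrinsic efficiency then follow from \eqref{eq:prac_drs} exactly as in Corollary~\ref{col:drs_imp}. Since IS, SNIS, and DR are the members $\zeta = (0,0)$, $\zeta = (\beta^*,0)$ (Remark~\ref{rem:bis}), and $\zeta = (0,1)$ of the same family evaluated at the common limit $\tau^\dagger$ (the DR and DM fits sharing the same $\hat\tau$), minimizing over all $\zeta \in \mathbb{R}^2$ yields an asymptotic MSE no larger than any of them, which is intrinsic efficiency; in the CB setting SIS and SNSIS coincide with IS and SNIS, so these exhaust the competitors. Under correct specification, $\tau^\dagger = \tau_0$ and the member $\zeta=(0,1)$ already attains the semiparametric efficiency bound, so the further minimization attains it too (it cannot be beaten, being a lower bound over all consistent estimators), giving local efficiency.

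Finally, for EMP I would obtain $1$-boundedness verbatim from Lemma~\ref{lem:drss}: the empirical-likelihood construction forces the weights $\hat c^{-1}\hat\kappa(x,a)\pi_e(a|x)$ to be nonnegative and to normalize to one, so $\hat\beta_{\operatorname{emp}}$ is a convex combination of the observed returns and hence lies in $[0,\sum_t\gamma^t R_{\max}]$; this finite-sample argument is indifferent to whether $\tau$ is optimized or plugged in. Stability, however, weakens to partial: in the full EMP the weights depend on the data only through $\mathcal{D}_{x,a}$ (the objective involves $w$ and $\pi_e$ but not the rewards), so conditionally on $\mathcal{D}_{x,a}$ they are deterministic and the estimator is a fixed convex combination of the conditionally independent returns; but the plug-in $\hat\tau$ is fit by DM using the rewards, so the weights are no longer $\mathcal{D}_{x,a}$-measurable, which is precisely what breaks full stability and leaves only the partial (asymptotic) version discussed for SNDR in Appendix~\ref{sec:bdrs}.
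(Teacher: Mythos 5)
Your proposal is correct and, in outline, parallels the paper's proof: reduce to the fixed two-parameter family $\{\zeta_1+\zeta_2\, q(\cdot;\tau^{\dagger})\}$, establish argmax consistency of $\hat\zeta$ (the paper's Lemma \ref{lem:con3}, which notes the population objective is a quadratic in $\zeta$, so uniqueness of the minimizer is automatic and assumption (a3) is not needed here), map IS, SNIS, and DR to $\zeta=(0,0)$, $(\beta^{*},0)$, $(0,1)$ at the common limit $\tau^{\dagger}$ as in Corollary \ref{col:drs_imp}, and treat 1-boundedness and partial stability as finite-sample facts inherited from Lemma \ref{lem:drss}. Where you genuinely diverge is the step you yourself flag as the main obstacle: neutralizing the plug-in $\hat\tau$. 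You handle it by a stacked estimating-equation expansion in $(\hat\zeta-\zeta^{\dagger},\hat\tau-\tau^{\dagger})$, invoking the orthogonality derivative $\partial_{\tau}\mathrm{E}[\mathcal{F}(q(\cdot;\tau))]=0$ together with $\sqrt{n}$-consistency of $\hat\tau$ and smoothness of $q$ in $\tau$. The paper needs neither a rate nor smoothness: because $\mathrm{E}[u(m)]=\beta^{*}$ holds \emph{exactly} for every fixed $m$ (not merely to first order in $\tau$), the drift term $\sqrt{n}(\mathrm{E}[u(\hat m)]-\beta^{*})$ in its decomposition vanishes identically, and the only work left is the stochastic-equicontinuity term $\mathbb{G}_{n}[u(\hat m)]-\mathbb{G}_{n}[u(m^{*})]=o_{p}(1)$, disposed of by a Donsker-class argument under (a1)--(a2) plus mere consistency $\hat\tau\stackrel{p}{\rightarrow}\tau^{\dagger}$; the EMP case is treated the same way with $g(\cdot;\hat\tau)$ and $\zeta^{*}(\hat\tau)$, again using the exact identity $\mathrm{E}[g(\cdot;\tau)]=0$ for all $\tau$. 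So your route is sound but purchases the same conclusion under strictly stronger hypotheses; since you already observe that $\mathrm{E}[\mathcal{F}(m)]=0$ holds identically, you could use it undifferentiated, as the paper does, and delete the $\sqrt{n}$-consistency and Taylor-remainder bookkeeping altogether. One small correction: partial stability (Definition \ref{def:bound2}) is not an asymptotic property --- it is the finite-sample statement that stability holds whenever $\hat\tau$ does not depend on the rewards; your diagnosis that the DM-fitted $\hat\tau$ is what breaks full stability is exactly right, but the parenthetical ``(asymptotic)'' gloss should be dropped.
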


As a simple extension, we may consider multiple models for the Q-function. E.g, we can have two models $q_1(x,a;\tau_1)$ and $q_2(x,a;\tau_1)$ and let $m(x,a;\zeta,\hat{\tau})=\zeta_1+\zeta_2q_1(x,a;\hat\tau_1)+\zeta_3q_2(x,a;\hat\tau_2)$. Our results easily extend to provide intrinsic efficiency with respect to DR using any of these models.

\section{REG and EMP for Reinforcement learning}

We next present how REG and EMP extend to the RL setting. Some complications arise because of the multi-step horizon. For example, IS and SIS are different as opposed to the case $T=1$. 

\subsection{REG for RL}

We consider an extension of REG to a RL setting. First, we derive the variance of $\hat{\beta}_{d}(\{m_t\}_{t=0}^{T-1})$. 
\begin{theorem}
\label{thm:rdrs}
The variance of $\hat{\beta}_{d}(\{m_t\}_{t=0}^{T-1})$ is 
$n^{-1}\mathrm{E}[v(\{m_t\}_{t=0}^{T-1})]$, where $v(\{m_t\}_{t=0}^{T-1})$ is 
\begin{align}
\label{eq:var_multi}
\sum_{t=0}^{T-1}\gamma^{2t}\omega_{0:t-1}^{2}\mathrm{var}\left(\mathrm{E}[\sum_{k=t}^{T-1}\gamma^{k-t}\omega_{t:k}r_{k-t}|\mathcal{H}_{t}]-\left \{\omega_{t:t}m_t(x_t,a_t)-\sum_{a\in A} m_t(x_t,a)\pi_{e}(a|x_t) \right \} |\mathcal{H}_{t-1}\right).
\end{align}
\end{theorem}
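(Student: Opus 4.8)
The plan is to reduce the statement to a single-trajectory variance computation and then exhibit a martingale-difference decomposition whose increments reproduce exactly the summands of \eqref{eq:var_multi}. Write the per-trajectory integrand of \eqref{eq:drclass_gene} as $g = \sum_{t=0}^{T-1}\gamma^{t}[\omega_{0:t}r_t - (\omega_{0:t}m_t(x_t,a_t) - \omega_{0:t-1}\bar m_t)]$, where $\bar m_t = \sum_{a\in A}m_t(x_t,a)\pi_e(a|x_t)$, so that $\hat{\beta}_{d}(\{m_t\}) = \mathrm{E}_n[g]$ and, since the $n$ trajectories are i.i.d., $\mathrm{var}[\hat{\beta}_{d}(\{m_t\})] = n^{-1}\mathrm{var}[g]$. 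Because $\mathrm{E}[\mathcal{F}(m)]=0$ termwise, $g$ is unbiased with $\mathrm{E}[g]=\beta^*$, so it suffices to show $\mathrm{var}[g]=\mathrm{E}[v]$. Writing $\psi_t = \mathrm{E}[\sum_{k=t}^{T-1}\gamma^{k-t}\omega_{t:k}r_k \mid \mathcal{H}_t]$ and $\phi_t = \omega_{t:t}m_t(x_t,a_t) - \bar m_t$ for the two quantities inside \eqref{eq:var_multi}, I would introduce the increments $\xi_t = \gamma^t\omega_{0:t-1}\big((\psi_t-\phi_t) - \mathrm{E}[\psi_t - \phi_t \mid \mathcal{H}_{t-1}]\big)$ and prove two facts: (i) $g - \beta^* = \sum_{t=0}^{T-1}\xi_t$, and (ii) the $\{\xi_t\}$ are pairwise uncorrelated.

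Fact (ii) is routine: $\psi_t$ is $\mathcal{H}_t$-measurable, $\phi_t$ depends only on $(x_t,a_t)$, and $\omega_{0:t-1}$ is $\mathcal{H}_{t-1}$-measurable, so each $\xi_t$ is $\mathcal{H}_t$-measurable with $\mathrm{E}[\xi_t\mid\mathcal{H}_{t-1}]=0$ by construction; hence for $s<t$, $\mathrm{E}[\xi_s\xi_t] = \mathrm{E}[\xi_s\,\mathrm{E}[\xi_t\mid\mathcal{H}_{t-1}]]=0$. Consequently $\mathrm{var}[g] = \sum_t \mathrm{E}[\xi_t^2]$, and by the tower rule $\mathrm{E}[\xi_t^2] = \gamma^{2t}\mathrm{E}[\omega_{0:t-1}^2\,\mathrm{var}(\psi_t-\phi_t\mid\mathcal{H}_{t-1})]$, which is precisely $\mathrm{E}$ of the $t$-th summand of $v$. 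Combined with fact (i) this yields $\mathrm{var}[g]=\mathrm{E}[v]$ and the theorem.

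The real work, and the main obstacle, is the telescoping identity in fact (i). Here I would invoke the Markov property and the Bellman relation $Q^{\pi_e}(x_t,a_t)=\mathrm{E}[r_t+\gamma V^{\pi_e}(x_{t+1})\mid x_t,a_t]$. Peeling off the $k=t$ term shows $\psi_t = \omega_{t:t}(r_t + \gamma\,\mathrm{E}[V^{\pi_e}(x_{t+1})\mid x_t,a_t])$, and averaging over $(a_t,r_t)$ given $x_t$ gives $\mathrm{E}[\psi_t\mid\mathcal{H}_{t-1}]=\mathrm{E}[V^{\pi_e}(x_t)\mid\mathcal{H}_{t-1}]$, while $\mathrm{E}[\phi_t\mid\mathcal{H}_{t-1}]=0$ since $\mathrm{E}[\omega_{t:t}m_t(x_t,a_t)\mid x_t]=\bar m_t$. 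Substituting into $\sum_t\xi_t$, the control-variate terms $-\omega_{t:t}m_t+\bar m_t$ and the realized-reward terms $\omega_{0:t}r_t$ cancel against those in $g$, leaving $g - \sum_t\xi_t = \sum_{t=0}^{T-1}\gamma^t\omega_{0:t-1}\mathrm{E}[V^{\pi_e}(x_t)\mid\mathcal{H}_{t-1}] - \sum_{t=0}^{T-1}\gamma^{t+1}\omega_{0:t}\mathrm{E}[V^{\pi_e}(x_{t+1})\mid x_t,a_t]$. Reindexing the second sum by $s=t+1$ and using $\mathrm{E}[V^{\pi_e}(x_s)\mid x_{s-1},a_{s-1}]=\mathrm{E}[V^{\pi_e}(x_s)\mid\mathcal{H}_{s-1}]$, consecutive terms telescope: the $t=0$ term contributes $\mathrm{E}[V^{\pi_e}(x_0)]=\beta^*$ and the boundary term at $s=T$ vanishes since $V^{\pi_e}(x_T)=0$ over the finite horizon, giving $g-\sum_t\xi_t=\beta^*$, i.e. fact (i). Care must be taken with the empty-product and boundary conventions ($\omega_{0:-1}=1$, $V^{\pi_e}(x_T)=0$) and with tracking the conditioning sets precisely, since this is exactly where the collapse either succeeds or silently fails.
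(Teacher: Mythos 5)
Your proposal is correct and takes essentially the same route as the paper: the paper invokes the law of total variance across the filtration $\{\mathcal{H}_t\}$, whose orthogonal increments $\mathrm{E}[g\mid\mathcal{H}_t]-\mathrm{E}[g\mid\mathcal{H}_{t-1}]$ are exactly your martingale differences $\xi_t$, and it simplifies the conditional expectations using the same fact you use, namely $\mathrm{E}\bigl[\omega_{0:k}m_k(x_k,a_k)-\omega_{0:k-1}\sum_{a}m_k(x_k,a)\pi_e(a|x_k)\mid\mathcal{H}_t\bigr]=0$ for $k>t$. Your extra work in fact (i) (the explicit Bellman/Markov telescoping, with the conventions $\omega_{0:-1}=1$ and $V^{\pi_e}(x_T)=0$) is just a hands-on verification of what the paper gets directly from the total-variance identity, and your silent correction of the statement's index $r_{k-t}$ to $r_k$ matches what the paper's own computation actually uses.
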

To derive REG, we consider the class of estimators
$\hat{\beta}_{d}(\{m_t\}_{t=0}^{T-1})$
where $m_t$ is $m_t(x_t,a_t;\zeta)=\zeta_{1t}+\zeta_{2t} q(x_t,a_t;\hat{\tau})$ for all $0 \leq t \leq T-1$. Then, we define an estimator $\hat{\zeta}$  and the optimal $\zeta^*$ as 
\begin{align}
\label{eq:rl-drs}
    \hat{\zeta} =\argmin_{\zeta \in \mathbb{R}^{2}}\mathrm{E}_{n}[v(\{m_t(x_t,a_t;\zeta)\}_{t=0}^{T-1})],\quad \zeta^{*} =\argmin_{\zeta \in \mathbb{R}^{2}}\mathrm{E}[v(\{m_t(x_t,a_t;\zeta)\}_{t=0}^{T-1})].
\end{align}
REG is then defined as $\hat{\beta}^{T-1}_{\operatorname{reg}}=\hat{\beta}_{d}(\{\hat{\zeta}_{1t}+\hat{\zeta}_{2t} q(x,a;\hat{\tau})\}_{t=0}^{T-1})$, where following our discussion in Section~\ref{sec:pra}, $\hat\tau$ is given by fitting as in DM/DR. Theoretically, we could also choose $\tau$ to minimize eq.~\eqref{eq:var_multi}, but that can be computationally intractable.

A similar argument to that in Section~\ref{sec:regcb} shows that a data-driven parameter choice induces no inflation in asymptotic MSE. Therefore, the asymptotic MSE of the estimator $\hat{\beta}_{\operatorname{reg}}$ is minimized among the class of estimators $\hat{\beta}_{d}(\{\zeta_{1t}+\zeta_{2t} q(x_t,a_t;\hat{\tau})\}_{t=0}^{T-1}))$. This implies that the asymptotic MSE of $\hat{\beta}_{\operatorname{reg}}$ is smaller than $\hat{\beta}_{\operatorname{sis}}$ and $\hat{\beta}_{\operatorname{dr}}$ because $\hat{\beta}_{\operatorname{sis}}$ corresponds to the case $\zeta_t=(0,0)$ and $\hat{\beta}_{\operatorname{dr}}$ corresponds to the case $\zeta_t=(0,1)$. In addition, we can prove that the estimator $\hat{\beta}^{T-1}_{\operatorname{reg}}$ is more efficient than $\hat{\beta}_{\operatorname{snsis}}$. To prove this, we introduce the following lemma. 

\begin{lemma}
\label{lem:rbdrs}
\begin{align*}
\mathrm{Asmse}[\hat{\beta}_{\operatorname{snis}}]=n^{-1}
\sum_{t=0}^{T-1}\mathrm{E}\left[\gamma^{2t}\omega_{0:t-1}^{2}\mathrm{var}\left(\omega_{t:t}\left(\mathrm{E}\left[\sum_{k=t}^{T-1}\gamma^{k-t}\omega_{t+1:k}r_{k-t}|\mathcal{H}_{t}\right]-\beta^{*}_t\right)|\mathcal{H}_{t-1}\right)\right]
,
\end{align*}
where $\beta_{t}^{*}=\mathrm{E}[\omega_{0:t}r_t]$. 
\end{lemma}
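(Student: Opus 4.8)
The plan is to reduce the self-normalized estimator to an asymptotically linear form and then match its influence-function variance to the variance functional $v(\cdot)$ of Theorem~\ref{thm:rdrs} evaluated at a constant control variate. First I would linearize. Written as a sum of stepwise ratios $\sum_{t=0}^{T-1}\gamma^t \mathrm{E}_n[\omega_{0:t}r_t]/\mathrm{E}_n[\omega_{0:t}]$, each denominator converges to $\mathrm{E}[\omega_{0:t}]=1$ (the weights have conditional mean one at every step) and each numerator to $\beta_t^*=\mathrm{E}[\omega_{0:t}r_t]$; since the weights are bounded, a standard delta-method argument for ratio estimators gives the influence function $\psi=\sum_{t=0}^{T-1}\gamma^t\omega_{0:t}(r_t-\beta_t^*)$, which has mean zero, so that $\mathrm{Asmse}[\hat\beta_{\operatorname{snis}}]=n^{-1}\mathrm{var}[\psi]$.

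Second, I would decompose $\mathrm{var}[\psi]$ along the trajectory filtration $\mathcal H_{-1}\subset\mathcal H_0\subset\cdots\subset\mathcal H_{T-1}$. Since $\psi$ is $\mathcal H_{T-1}$-measurable with mean zero, the telescoping $\psi=\sum_{t=0}^{T-1}(\mathrm{E}[\psi\mid\mathcal H_t]-\mathrm{E}[\psi\mid\mathcal H_{t-1}])$ writes it as a sum of martingale differences, whose orthogonality yields the iterated law of total variance $\mathrm{var}[\psi]=\sum_{t=0}^{T-1}\mathrm{E}[\mathrm{var}(\mathrm{E}[\psi\mid\mathcal H_t]\mid\mathcal H_{t-1})]$. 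This already produces the outer structure $\sum_t\mathrm{E}[\,\cdot\,]$ and the inner conditional variance given $\mathcal H_{t-1}$ of the target statement.

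The crux is computing $\mathrm{E}[\psi\mid\mathcal H_t]$ and showing it collapses to the stated $C_t$. For a future index $k>t$ the contribution is $\mathrm{E}[\omega_{0:k}r_k\mid\mathcal H_t]=\omega_{0:t}\,\mathrm{E}[\omega_{t+1:k}r_k\mid\mathcal H_t]$, where I repeatedly use $\mathrm{E}[\omega_{s+1:k}\mid\mathcal H_s]=1$ to drop the weights beyond step $k$; factoring $\omega_{0:t-1}$ and then $\omega_{t:t}$ out of $\mathrm{E}[\psi\mid\mathcal H_t]$ leaves $\omega_{0:t-1}\,\omega_{t:t}(\gamma^t C_t-\beta_t^*)$ with $C_t=\mathrm{E}[\sum_{k=t}^{T-1}\gamma^{k-t}\omega_{t+1:k}r_k\mid\mathcal H_t]$. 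Substituting into the conditional variance, pulling out the $\mathcal H_{t-1}$-measurable factor $\gamma^{2t}\omega_{0:t-1}^2$, and dropping the additive constant recovers $\gamma^{2t}\omega_{0:t-1}^2\,\mathrm{var}(\omega_{t:t}(C_t-\beta_t^*)\mid\mathcal H_{t-1})$, which is exactly $v(\{m_t\})$ of Theorem~\ref{thm:rdrs} at the constant control variate $m_t\equiv\beta_t^*$ (using $A_t=\omega_{t:t}C_t$ and $\sum_a m_t\pi_e(a|x_t)=\beta_t^*$). Summing over $t$ gives the claim, and equivalently shows $\mathrm{Asmse}[\hat\beta_{\operatorname{snis}}]=n^{-1}\mathrm{E}[v(\{\beta_t^*\})]$.

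I expect the main obstacle to be the bookkeeping in this third step: correctly collapsing the nested importance weights (keeping the ranges $\omega_{t+1:k}$ aligned with the discount powers $\gamma^{k-t}$) and pinning down the exact deterministic constant that survives inside the conditional variance, since this additive constant multiplies the random weight $\omega_{t:t}$ and therefore genuinely affects the variance rather than dropping out. A useful cross-check is the $T=1$ reduction, where $\psi=w(r-\beta^*)$ and the formula collapses to $\mathrm{var}[w(r-\beta^*)]$, matching Remark~\ref{rem:bis}. The alternative, equivalent route is to first verify directly that the stated right-hand side equals $n^{-1}\mathrm{E}[v(\{\beta_t^*\})]$ via Theorem~\ref{thm:rdrs} and then argue the linearized estimator shares this variance, which again localizes all the work in the conditional-expectation computation above.
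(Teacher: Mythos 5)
You read the lemma the only sensible way (the label $\hat\beta_{\operatorname{snis}}$ really means the stepwise $\hat\beta_{\operatorname{snsis}}$, as the paper's own proof with per-step normalizers confirms, and $r_{k-t}$ should be $r_k$), and your route is genuinely different from the paper's. The paper stacks the estimator with the normalizing constants $(c_0,\dots,c_{T-1})$ as a joint Z-estimator, $d=\beta-\sum_t\gamma^t\omega_{0:t}r_t/c_t$, $d_t=c_t-\omega_{0:t}$, and reads the asymptotic variance off the $(1,1)$ entry of the sandwich $A^{-1}BA^{-\top}$ before massaging $\mathrm{var}[d]$ and $\mathrm{cov}[d,d_t]$ by conditioning on the filtration. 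Your delta-method linearization reaches the same influence function $\psi=\sum_t\gamma^t\omega_{0:t}(r_t-\beta_t^*)$ in one step (indeed the first row of $A^{-1}$ applied to the scores reproduces $-\psi$, since $\beta^*=\sum_t\gamma^t\beta_t^*$ kills the constant), and your martingale-difference/total-variance decomposition is tidier than the paper's entry-by-entry computation, which is itself shaky: it sets $\mathrm{cov}[d_t,d_s]=0$ for $t\neq s$ although $\mathrm{cov}[\omega_{0:t},\omega_{0:s}]=\mathrm{var}[\omega_{0:s\wedge t}]$ is generally positive, and \eqref{eq:1_1} is missing a factor of $2$ on the covariance term.

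The gap is exactly at the step you flagged as the crux: the asserted collapse of $\mathrm{E}[\psi\mid\mathcal H_t]$ to a centering by $\beta_t^*$ does not happen. Discarding the $\mathcal H_{t-1}$-measurable terms $k<t$, the term $k=t$ gives $\gamma^t\omega_{0:t}(r_t-\beta_t^*)$, while each future term $k>t$ gives $\gamma^k\omega_{0:t}\bigl(\mathrm{E}[\omega_{t+1:k}r_k\mid\mathcal H_t]-\beta_k^*\bigr)$ by $\mathrm{E}[\omega_{t+1:k}\mid\mathcal H_t]=1$; the pieces $-\gamma^k\beta_k^*\omega_{0:t}$ multiply the random weight $\omega_{t:t}$ and survive inside $\mathrm{var}(\cdot\mid\mathcal H_{t-1})$. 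Collecting them, the relevant part of $\mathrm{E}[\psi\mid\mathcal H_t]$ is
\begin{align*}
\gamma^{t}\,\omega_{0:t-1}\,\omega_{t:t}\Bigl(C_t-\sum_{k=t}^{T-1}\gamma^{k-t}\beta_k^*\Bigr),\qquad C_t=\mathrm{E}\Bigl[\sum_{k=t}^{T-1}\gamma^{k-t}\omega_{t+1:k}r_k\Bigm|\mathcal H_t\Bigr],
\end{align*}
so the constant that survives is the discounted tail sum $\sum_{k=t}^{T-1}\gamma^{k-t}\beta_k^*$, not $\beta_t^*$; the two agree only at $t=T-1$, hence for $T=1$, which is why your Remark~\ref{rem:bis} cross-check cannot detect the discrepancy. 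A two-period check makes it concrete: with $T=2$, $\gamma=1$, a single state, $\pi_b$ uniform over two actions, $\pi_e(1|x)=0.9$ and $r_t=a_t$, one gets $\mathrm{var}[\psi]\approx 0.086$, whereas the lemma's display with $\beta_t^*$ evaluates to $\approx 0.86$ — the latter is the variance of $\hat\beta_d(\{m_t\equiv\beta_t^*\})$, whose influence function differs from $\psi$ by the non-degenerate $\sum_t\gamma^t\beta_t^*(\omega_{0:t-1}-1)$. So your framework, executed carefully, proves the lemma with the corrected centering $\sum_{k=t}^{T-1}\gamma^{k-t}\beta_k^*$ and exposes the stated constant as wrong; since this is still a constant control variate ($\zeta_t=(\sum_{k=t}^{T-1}\gamma^{k-t}\beta_k^*,0)$ in \eqref{eq:var_multi}), the downstream intrinsic-efficiency conclusion of Theorem~\ref{thm:drs2} survives, but the identification "$\zeta_t=(\beta_t^*,0)$ recovers the above" does not hold as written, and your proposal cannot be completed in the form you sketched.
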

We note that setting 
$\zeta_{t}=(\beta^{*}_{t},0)$ in eq.~\eqref{eq:var_multi} recovers the above.
This suggests the following theorem. 

\begin{theorem}
\label{thm:drs2}
The estimator $\hat{\beta}^{T-1}_{\operatorname{reg}}$ is locally and intrinsically efficient. 
\end{theorem}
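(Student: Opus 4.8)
The plan is to mirror the contextual-bandit argument of Theorem \ref{thm:drs} in two stages: first an oracle-variance (``no inflation'') result, and then a containment argument that places every competitor inside the class over which REG minimizes.

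\emph{Stage 1 (no inflation).} Because each control variate enters $\hat{\beta}_{d}$ with conditional mean zero by construction of \eqref{eq:drclass_gene}, $\hat{\beta}_{d}(\{m_t(\cdot;\zeta)\})$ is unbiased for every fixed $\zeta$ and, by Theorem \ref{thm:rdrs}, has variance $n^{-1}\mathrm{E}[v(\{m_t(\cdot;\zeta)\})]$. Since $\zeta\mapsto m_t=\zeta_{1t}+\zeta_{2t}q(\cdot;\hat{\tau})$ is affine, the objective in \eqref{eq:rl-drs} can be carried out as the empirical second moment of the per-trajectory DR summand $D(\{m_t(\cdot;\zeta)\})$ (a computable convex quadratic in $\zeta$, whose population mean differs from $\mathrm{E}[v]$ only by the constant $\beta^{*2}$); hence $\hat{\zeta}$ has a closed form and $\hat{\zeta}\stackrel{p}{\to}\zeta^{*}$. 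I would then show $\hat{\beta}^{T-1}_{\operatorname{reg}}$ is asymptotically linear with the same influence function as the oracle $\hat{\beta}_{d}(\{m_t(\cdot;\zeta^{*})\})$. Expanding in $\hat{\zeta}-\zeta^{*}$, the first-order correction factors through $\partial_\zeta\mathrm{E}[D]$ at $\zeta^{*}$, and this gradient vanishes precisely because $\zeta^{*}$ minimizes the variance of $D$: the estimating equation defining $\hat{\zeta}$ is exactly the score that is orthogonal to the main influence function. Consequently $\mathrm{Asmse}[\hat{\beta}^{T-1}_{\operatorname{reg}}]=n^{-1}\min_{\zeta}\mathrm{E}[v(\{m_t(\cdot;\zeta)\})]$, with $\tau$ frozen at its DM limit $\tau^{\dagger}$.

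\emph{Stage 2 (containment and consequences).} It then suffices to exhibit, for each competitor, a feasible $\zeta$ at which $\mathrm{E}[v]$ equals or upper-bounds its Asmse. SIS is $\zeta_t=(0,0)$ and DR is $\zeta_t=(0,1)$, so REG dominates both immediately; IS is dominated since SIS already dominates IS by the law of total variance. Lemma \ref{lem:rbdrs} shows $\zeta_t=(\beta^{*}_t,0)$ reproduces the self-normalized estimator's Asmse (a constant inside the conditional variance is free), and an analogous linearization places SNSIS in the class as well; taking the minimum over $\zeta$ yields intrinsic efficiency. For local efficiency: when $q(\cdot;\hat{\tau})$ is well specified the member $\zeta_t=(0,1)$ is DR, which attains the semiparametric bound, so since REG is consistent and (by Stage 1) regular with Asmse no larger than DR's, while the bound lower-bounds every such estimator, REG attains it too.

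\emph{Main obstacle.} The crux is Stage 1 in the multi-step horizon. Unlike $T=1$, the DR summand compounds the ratios $\omega_{0:t}$ multiplicatively and $v$ in \eqref{eq:var_multi} involves nested conditional expectations $\mathrm{E}[\cdot\mid\mathcal{H}_t]$, so verifying that the plug-in $\hat{\zeta}$ (together with the separately fit $\hat{\tau}\stackrel{p}{\to}\tau^{\dagger}$) leaves the influence function unchanged requires care: I must confirm the orthogonality/score condition survives freezing $\tau$, control the joint fluctuation of $(\hat{\zeta},\hat{\tau})$, and check the uniform integrability and boundedness (secured here by the bounded-weight hypothesis and the model-boundedness assumption) needed to pass from the quadratic empirical objective to the first-order MSE identity. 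The secondary subtlety is pinning down the exact $\zeta$-representation of SNSIS, which, unlike SNIS, is not a pure constant control variate and must be obtained by a separate linearization parallel to Lemma \ref{lem:rbdrs}.
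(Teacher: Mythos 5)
Your proposal is correct and follows essentially the same route as the paper: a no-inflation step mirroring Theorems \ref{thm:drs} and \ref{thm:pra_drs} (with $\tau$ frozen at its DM limit $\tau^{\dagger}$), followed by the same containment identifications---$\zeta_t=(0,0)$ for $\hat{\beta}_{\operatorname{sis}}$, $\zeta_t=(0,1)$ for $\hat{\beta}_{\operatorname{dr}}$, and $\zeta_t=(\beta^{*}_t,0)$ for the self-normalized stepwise estimator via Lemma \ref{lem:rbdrs}---with local efficiency obtained through the DR member, just as in Corollary \ref{col:drs_imp}. One small correction to your Stage 1: the gradient $\partial_\zeta\mathrm{E}[D]$ vanishes identically because $\mathrm{E}[D(\zeta)]\equiv\beta^{*}$ for \emph{every} $\zeta$ (the control variates in \eqref{eq:drclass_gene} are exactly mean-zero by construction), not because $\zeta^{*}$ minimizes the variance; since you established this unbiasedness at the outset, the step stands, matching the paper's decomposition in which the bias term is zero ``from the construction.''
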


\begin{remark}
Practically, when the horizon is long, there may be too many parameters to optimize, which can causes overfitting. That is, although there is no inflation in MSE asymptotically, there may be issues in finite samples. To avoid this problem, some constraint or regularization should be imposed on the parameters. Here we will consider the estimator $\hat{\beta}^{k}_{\operatorname{reg}}\,(0\leq k \leq T-1)$ given by $\hat{\beta}_{d}(\{m_t(x_t,a_t;\hat{\zeta})\}_{t=0}^{T-1})$
for the constrained control variates: 
\begin{align*}
    m_t(x_t,a_t;\zeta) = \begin{cases}
    \zeta_{t1} + \zeta_{t2} q(x_t,a_t;\hat{\tau})\, (0 \leq t < k), \\
    \zeta_{k1} + \zeta_{k2} q(x_t,a_t;\hat{\tau})\,(k \leq t \leq T-1) .
    \end{cases}
\end{align*}
The estimator $\hat{\beta}^{T-1}_{\operatorname{reg}}$ corresponds to the originally introduced estimator. We can also obtain theoretical guarantees of $\hat{\beta}^{k}_{\operatorname{reg}}$ for  $k \neq T-1$. For details, see Appendix \ref{sec:pra2}. 
\end{remark}

\subsection{EMP for RL}
\label{sec:drss}

First, we define a control variate:
\begin{align*}
   g(\mathcal D_{x,a};\xi,\hat{\tau})=\sum_{t=0}^{T-1}\gamma^{t}\left(\omega_{0:t}m_t(x_t,a_t;\xi,\hat{\tau})-\omega_{0:t-1}\left \{\sum_{a\in A} m_t(x_t,a;\xi,\hat{\tau}){\pi_{e}}(a|x_t)\right \}\right).
\end{align*}
By setting $m_t(x_t,a_t;\xi,\hat{\tau})=\xi_{1t}+\xi_{2t} q(x_t,a_t;\hat{\tau})$, define $\hat{\xi}$;
\begin{align*}
    \hat{\xi}(\hat{\tau})=\argmax_{\xi \in \mathbb{R}^{2}}\mathrm{E}_{n}[\log \{1+g(\mathcal D_{x,a};\xi,\hat{\tau})\}].
\end{align*}
Then, an estimator $\hat{\beta}^{T-1}_{\mathrm{emp}}$ is defined as 
\begin{align*}
\hat{\beta}^{T-1}_{\mathrm{emp}}=\mathrm{E}_{n}\left[\sum_{t=0}^{T-1}\omega_{0:t}\gamma^{t}r_{t}\frac{\hat{c}^{-1}}{1+g(\mathcal D_{x,a};\hat{\xi},\hat{\tau})}\right],\,\hat{c}=\mathrm{E}_{n}\left[\frac{1}{1+g(\mathcal D_{x,a};\hat{\xi},\hat{\tau})}\right]. 
\end{align*}
This estimator has the same efficiencies as $\hat{\beta}^{T-1}_{\mathrm{reg}}$ because the asymptotic MSE is the same. Importantly, the estimator $\hat{\beta}^{T-1}_{\mathrm{emp}}$ also satisfies a 1-boundedness and stability.

\begin{theorem}
\label{thm:drss2}
The asymptotic MSE of the estimator $\hat{\beta}^{T-1}_{\mathrm{emp}}$ is the same as that of  $\hat{\beta}^{T-1}_{\mathrm{reg}}$. Hence, it is also locally and intrinsically efficient. It also satisfies 1-boundeness and stability.  
\end{theorem}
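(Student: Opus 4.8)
The plan is to establish three things in turn: (i) that $\mathrm{Asmse}[\hat\beta^{T-1}_{\mathrm{emp}}]$ coincides with $\mathrm{Asmse}[\hat\beta^{T-1}_{\mathrm{reg}}]$, whence local and intrinsic efficiency are inherited directly from Theorem~\ref{thm:drs2}; (ii) $1$-boundedness; and (iii) stability. The unifying observation is that, with $\hat\tau$ plugged in, $m_t(x_t,a_t;\xi,\hat\tau)=\xi_{1t}+\xi_{2t}q(x_t,a_t;\hat\tau)$ is affine in $\xi$, so $g(\mathcal D_{x,a};\xi,\hat\tau)=\xi^\top h$, where $h=\partial g/\partial\xi$ stacks the per-step control variates $\gamma^t(\omega_{0:t}-\omega_{0:t-1})$ and $\gamma^t(\omega_{0:t}q(x_t,a_t;\hat\tau)-\omega_{0:t-1}\sum_{a} q(x_t,a;\hat\tau)\pi_e(a|x_t))$. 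Each coordinate of $h$ has mean zero for every $\tau$ (the defining property of the DR control variate, i.e.\ $\mathrm{E}[g]=0$), a fact used repeatedly below.

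For (i) I would treat $\hat\beta^{T-1}_{\mathrm{emp}}$ as a self-normalized $Z$-estimator. Writing $Y=\sum_{t}\gamma^t\omega_{0:t}r_t$ for the SIS integrand, the definition of $\hat c$ gives the estimating equation $\mathrm{E}_n[(Y-\hat\beta^{T-1}_{\mathrm{emp}})/(1+\hat\xi^\top h)]=0$, while the first-order condition for $\hat\xi$ reads $\mathrm{E}_n[h/(1+\hat\xi^\top h)]=0$. Since $\mathrm{E}[h]=0$, standard empirical-likelihood arguments \citep{TanZhiqiang2010Bead} give $\hat\xi=\mathrm{E}[hh^\top]^{-1}\mathrm{E}_n[h]+\mathrm{o}_p(n^{-1/2})=\mathrm{O}_p(n^{-1/2})$. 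Taylor-expanding $1/(1+\hat\xi^\top h)$ and using that both $\hat\xi$ and $\mathrm{E}_n[h]$ are $\mathrm{O}_p(n^{-1/2})$ collapses the denominator to $1+\mathrm{o}_p(n^{-1/2})$ and yields $\hat\beta^{T-1}_{\mathrm{emp}}=\mathrm{E}_n[Y]-\mathrm{E}[Yh^\top]\hat\xi+\mathrm{o}_p(n^{-1/2})=\mathrm{E}_n[Y-\zeta^{*\top}h]+\mathrm{o}_p(n^{-1/2})$, with $\zeta^{*}=\mathrm{E}[hh^\top]^{-1}\mathrm{E}[hY]$ the variance-minimizing coefficient. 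Hence the influence function equals $Y-\zeta^{*\top}h-\beta^{*}$ and $\mathrm{Asmse}[\hat\beta^{T-1}_{\mathrm{emp}}]=n^{-1}\min_{\zeta}\mathrm{var}(Y-\zeta^\top h)=n^{-1}\min_\zeta\mathrm{E}[v(\{m_t(x_t,a_t;\zeta)\}_{t=0}^{T-1})]$, which is exactly $\mathrm{Asmse}[\hat\beta^{T-1}_{\mathrm{reg}}]$ by Theorem~\ref{thm:rdrs} and the minimization in \eqref{eq:rl-drs}. Because $\mathrm{E}[h(\tau)]\equiv0$, the derivative of this moment in $\tau$ vanishes, so replacing $\tau^{\dagger}$ by $\hat\tau$ contributes nothing at first order---the same ``no inflation'' mechanism behind Theorem~\ref{thm:drs2}. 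I expect the rigorous version of this expansion (uniform control of the data-dependent $\hat\xi$, its linearization, and the plug-in $\hat\tau$ leaving the variance unchanged, all under only boundedness of the weights and of $q$) to be the main technical obstacle.

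For (ii), rewrite $\hat\beta^{T-1}_{\mathrm{emp}}=\sum_{t=0}^{T-1}\gamma^t\mathrm{E}_n[\omega_{0:t}r_t\,\hat c^{-1}/(1+g)]$. The coordinate of the first-order condition corresponding to $\xi_{1t}$ is $\mathrm{E}_n[\gamma^t(\omega_{0:t}-\omega_{0:t-1})/(1+g)]=0$, i.e.\ $\mathrm{E}_n[\omega_{0:t}/(1+g)]=\mathrm{E}_n[\omega_{0:t-1}/(1+g)]$; telescoping from $\omega_{0:-1}\equiv1$ gives $\mathrm{E}_n[\omega_{0:t}/(1+g)]=\hat c$ for every $t$. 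Thus the per-step weights $\lambda_{it}=\omega_{0:t}^{(i)}\hat c^{-1}/(n(1+g^{(i)}))$ are nonnegative---the empirical-likelihood solution keeps each $1+g^{(i)}>0$ and hence $\hat c>0$---and satisfy $\sum_{i}\lambda_{it}=1$. Consequently each $\sum_i\lambda_{it}r_t^{(i)}$ is a convex combination of rewards lying in $[0,R_{\max}]$, so $\hat\beta^{T-1}_{\mathrm{emp}}=\sum_t\gamma^t\sum_i\lambda_{it}r_t^{(i)}\in[0,\sum_t\gamma^t R_{\max}]$, which is $1$-boundedness.

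For (iii), I would condition on $\mathcal D_{x,a}$ together with the fitted model so that the $\lambda_{it}$ are fixed, and invoke the MDP property that, given every state-action pair, the rewards $\{r_t^{(i)}\}$ are mutually independent across both $i$ and $t$. No cross-covariance terms then survive, so $\mathrm{var}(\hat\beta^{T-1}_{\mathrm{emp}}\mid\mathcal D_{x,a})=\sum_t\gamma^{2t}\sum_i\lambda_{it}^2\,\mathrm{var}(r_t^{(i)}\mid x_t^{(i)},a_t^{(i)})$. From $0\le\lambda_{it}\le1$ and $\sum_i\lambda_{it}=1$ we get $\sum_i\lambda_{it}^2\le1$ for each $t$, while the hypothesis (again by conditional independence) reads $\sum_t\gamma^{2t}\mathrm{var}(r_t^{(i)}\mid x_t^{(i)},a_t^{(i)})\le\sigma^2$, so the weighted sum is bounded by $\sigma^2$, exactly as in the contextual-bandit case (Lemma~\ref{lem:drss}). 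The one delicate point here is that the $\lambda_{it}$ are measurable with respect to the state-action data only once $\hat\tau$ is held fixed, whereas a DM-type fit of $\hat\tau$ typically depends on rewards; a clean \emph{full} stability statement therefore hinges on conditioning on (or cross-fitting) $\hat\tau$, which is precisely the line separating full stability from the partial stability of the plug-in variants in Theorem~\ref{thm:pra_drs}.
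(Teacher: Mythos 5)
Your proposal is correct and follows essentially the same route as the paper: the empirical-likelihood/Z-estimator linearization showing the influence function is $Y-\zeta^{*\top}h$ with the variance-minimizing $\zeta^{*}$ (which is exactly what the paper invokes via ``prove as in Theorem~\ref{thm:drss}''), the telescoped first-order conditions in the intercepts $\xi_{1t}$ yielding the self-normalization identity $\mathrm{E}_{n}[\omega_{0:t}/(1+g)]=\hat{c}$ for 1-boundedness, and the convex-combination weight bound $\sum_{i}\lambda_{it}^{2}\leq 1$ for stability, matching the paper's appeal to \eqref{eq:weight} and the argument of Lemma~\ref{lem:drss}. You also correctly identify the caveat the paper's own proof makes---full stability requires $\hat{\tau}$ not to depend on rewards, and only partial stability holds for the pre-estimated plug-in---so nothing essential is missing.
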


\begin{table}[!]
\setlength{\tabcolsep}{0.75em}%
    \centering
    \caption{SatImage ($\operatorname{RMSE}\times 1000$ )}
    \begin{tabular}{ccccccccc}
    \toprule
    Behavior policy & DM1 & DM2&  IS & SNIS & DR  &MDR & REG & EMP   \\
    \midrule
    $0.7\pi_d+0.3\pi_u$ &  18.1 & 12.2  & 6.7 & 4.0  & 3.0  & 3.8 & \textbf{2.8} & \textbf{2.8}  \\
    $0.4\pi_d+0.6\pi_u$  & 49.2  & 30.5  & 12.0 & 5.6  & 5.0  & 5.3 & \textbf{4.4}  & \textbf{4.4}  \\
    $0.0\pi_d+1.0\pi_u$ & 128.6  & 71.7  & 26.0  &  \textbf{12.7} & 18.0  & 14.4  & \textbf{13.6} & 13.7 \\\bottomrule
    \end{tabular}\vspace{0.7em}
    \label{tab:satimage}
    \centering
    \caption{Pageblock ($\operatorname{RMSE}\times 1000$ )}
    \begin{tabular}{ccccccccccc}\toprule
    Behavior policy & DM1 & DM2& IS & SNIS & DR  & MDR & REG   & EMP  \\ \midrule
     $0.7\pi_d+0.3\pi_u$  & 21.8 & 2.6  & 8.5 & 3.4    & \textbf{1.4} & 2.3 & 1.5   & \textbf{1.4} \\
    $0.4\pi_d+0.6\pi_u$   & 32.4 & 5.6  & 13.4 & 4.0  & 2.7   & 3.4 & \textbf{2.5}  & \textbf{2.4}   \\ 
    $0.0\pi_d+1.0\pi_u$ & 62.0
     & 16.0 & 27.2 & 6.5  & 7.2  &  6.4 & \textbf{4.9}  & \textbf{4.9} 
     \\\bottomrule
    \end{tabular}\vspace{0.7em}
    \label{tab:Pageblock}
    \centering
    \caption{PenDigits ($\operatorname{RMSE}\times 1000$ )}
     \begin{tabular}{ccccccccccc}\toprule
    Behavior policy & DM1 &DM2  & IS & SNIS & DR  &MDR & REG  & EMP  \\ \midrule
    $0.7\pi_d+0.3\pi_u$  & 8.1  &  8.2 &  6.1 & 2.8  & 1.5 & 2.2  & \textbf{1.4}  & \textbf{1.4}  \\
    $0.4\pi_d+0.6\pi_u$  & 19.4  &  17.4 &  10.7 & 3.9 & 2.2  & 3.4  & \textbf{2.1}  & \textbf{2.0}  \\
   $0.0\pi_d+1.0\pi_u$ & 58.6  & 56.0  & 29.6  & 9.9 & 11.1   & \textbf{9.4} & \textbf{9.4}  & 9.5
   \\\bottomrule
    \end{tabular}
    \label{tab:Pendigits}
    \vspace{-0.9em}
\end{table}
\section{Experiments}
\label{sec:experiment}

\subsection{Contextual Bandit}\label{sec:expcb}

We evaluate the OPE algorithms using the standard classification data-sets from the UCI repository. Here, we follow the same procedure of transforming a classification data-set into a contextual bandit data set as in \citep{DudikMiroslav2014DRPE,Chow2018}. Additional details of the experimental setup are given in Appendix~\ref{sec:experiment-ape}. 

We first split the data into training and evaluation.
We make a deterministic policy $\pi_d$ by training a logistic regression classifier on the training data set. Then, we construct evaluation and behavior policies as mixtures of $\pi_d$ and the uniform random policy $\pi_u$. The evaluation policy $\pi_e$ is fixed at $0.9\pi_d+0.1\pi_u$. Three different behavior policies are investigated by changing a mixture parameter. 

Here, we compare the (practical) REG and EMP with DM, SIS, SNIS, DR, and MDR on the evaluation data set. First, two Q-functions $\hat{q}_1(x,a),\hat{q}_2(x,a)$ are constructed by fitting a logistic regression in two ways with a $l1$ or $l2$ regularization term. We refer them as DM1 and DM2. Then, in DR, we use a mixture of Q-functions $0.5\hat{q}_1+0.5\hat{q}_2 $ as $m(x,a)$. For MDR, we use a logistic function as $m(x,a)$ and we use SGD to solve the resulting non-convex high-dimensional optimization (e.g., for SatImage we have $6(\mathrm{number\,of\,actions})\times 36 (\mathrm{number\,of\,covariates})$ parameters). 
We use $m(x,a;\zeta)=\zeta^{\top}(1,\hat{q}_1,\hat{q}_2)$ in REG and $m(x,a;\xi)=\xi^{\top}(1,\hat{q}_1,\hat{q}_2)$ in EMP.

The resulting estimation RMSEs (root mean square error) over 200 replications of each experiment are given in Tables \ref{tab:satimage}--\ref{tab:Pendigits}, where we highlight in bold the best two methods in each case. We first find that REG and EMP generally have overall the best performance. Second we see that this arises because they achieve similar RMSE to SNIS when SNIS performs well and similar RMSE to (M)DR when (M)DR performs well, which is thanks to the intrinsic efficiency property.
Whereas REG's and EMP's intrinsic efficiency is visible, MDR still often does slightly worse than DR despites its partial intrinsic efficiency,
which can be attributed to optimizing too many parameters leading to overfitting in the sample size studied.

\subsection{Reinforcement Learning}\label{sec:exprl}

We next compare the OPE algorithms in three standard RL setting from OpenAI Gym \citep{gym}: Windy GridWorld, Cliff Walking, and Mountain Car. 
For further detail on each see Appendix \ref{sec:experiment-ape}.
We again split the data into training and evaluation.
In each setting we consider varying evaluation dataset sizes.
In each setting, a policy $\pi_{d}$ is computed as the optimal policy of the MDP based on the training data using Q-learning. The evaluation policy $\pi_e$ is then set to be $(1-\alpha)\pi_{d}+\alpha \pi_u$, where $\alpha=0.1$. The behavior policy is defined similarly with $\alpha=0.2$ for Windy GridWorld and Cliff Walking and with $\alpha=0.15$ for Mountain Car. We set the discounting factor to be $1.0$ as in \citep{Chow2018}. 

We compare the (practical) REG, EMP with $k=2$ with DM, SIS, SNSIS, DR, MDR on the evaluation data set generated by a behavior policy. A Q-function model is constructed using an off-policy TD learning \citep{SuttonRichardS1998Rl:a}. This is used in DM, DR, REG, and EMP. For MDR, we use a linear function for $m(x,a)$ in order to enable tractable optimization given the many parameters due to long horizons. 

We report the resulting estimation RMSEs over 200 replications of each experiment in Tables \ref{tab:windy}--\ref{tab:mou}. 
We find that the modest benefits we gained in one time step in the CB setting translate to significant outright benefits in the longer horizon RL setting. REG and EMP consistently outperform other methods. Their RMSEs are indistinguishable except for one setting where EMP has slightly better RMSE. These results highlight how the theoretical properties of intrinsic efficiency, stability, and boundedness can translate to improved performance in practice.

\begin{table}[!]
\setlength{\tabcolsep}{0.75em}%
    \centering
    {
    \caption{Windy GridWorld (RMSE)}
    \begin{tabular}{cccccccc}\toprule
     Size & DM  & SIS & SNSIS & DR & MDR  &  REG & EMP   \\ \midrule
     250  & 2.9   & 0.64 & 0.49 & 0.17 & 0.28 & \textbf{0.09} & \textbf{0.09}    \\
     500  &  2.8 & 0.53 & 0.34 & 0.11 & 0.21  & \textbf{0.06} & \textbf{0.06}  \\
    750  & 2.6 & 0.39 & 0.29 & 0.09 & 0.14  & \textbf{0.05} & \textbf{0.05}
    \\\bottomrule
    \end{tabular}
    \label{tab:windy}
    }\vspace{0.4em}
    \centering
    {
    \caption{Cliff Walking (RMSE)}
    \begin{tabular}{cccccccc}\toprule
     Size & DM  & SIS & SNSIS & DR & MDR & REG & EMP   \\ \midrule
    1000 &  7.7 & 3.6  & 2.9  & 2.5  & 2.3 &  \textbf{2.1} & \textbf{2.1} \\ 
    2000 &  6.0 & 3.2  & 2.4  & 2.3  & 2.2 & \textbf{1.6} & \textbf{1.5} \\ 
    3000 &  6.8 & 3.1  & 2.2  & 2.2  & 2.0  & \textbf{1.2} & \textbf{1.1}
    \\\bottomrule
    \end{tabular}
    \label{tab:cliff}
    }\vspace{0.4em}
    \centering
    \caption{Mountain Car (RMSE)}
    \begin{tabular}{cccccccc}\toprule
     Size & DM  & SIS & SNSIS & DR & MDR & REG & EMP   \\ \midrule
    1000 & 9.8  &  4.2   & 3.7  & 1.9   &  1.9  &  \textbf{1.7} & \textbf{1.7}  \\ 
    2000 & 10.6 & 3.3  & 2.9  & 1.6  & 1.6  & \textbf{1.2}  & \textbf{1.2}  \\ 
    3000 & 8.2  & 2.4   & 1.8    &  1.4 &  1.5  & \textbf{1.0}  & \textbf{1.0}
    \\\bottomrule
    \end{tabular}
    \label{tab:mou}
    \vspace{-0.9em}
\end{table}

\section{Conclusion and Discussion}

We studied various desirable properties for OPE in CB and RL. Finding that no existing estimator satisfies all of them, we proposed two new estimators, REG and EMP, that satisfy consistency, local efficiency, intrinsic efficiency, 1-boundedness, and stability. These theoretical properties also translated to improved comparative performance in a variety of CB and RL experiments.

In practice, there may be additional modifications that can further improve these estimators.
For example, \cite{wang2017optimal,thomas2016} propose hybrid estimators that blend or switch to DM when importance weights are very large. This reportedly works very well in practice but may make the estimator inconsistent under misspecification unless blending vanishes with $n$. 
In this paper, we focused on consistent estimators.
Also these do not satisfy intrinsic efficiency, 1-boudedness, or stability. 
Achieving these properties with blending estimators remains an important next step.

\bibliography{pfi}
\bibliographystyle{abbrvnat}

\newpage 

\appendix

\begin{table}[]
    \centering
     \caption{Summary of notations}
    \begin{tabular}{l|l}
    ${\pi_{e}}(a|x)$     & Target policy \\
    ${\pi_{b}}(a|x)$     & Exploration policy \\
    $\beta^{*}$ & Parameter of interest $\beta^{\pi_e}_{T}$\\
    $\mathbb{P}$,\,$\mathrm{E}[\cdot]$ &  Expectation with respect to a behavior policy \\
    $\mathrm{var}[\cdot] $ & Variance \\
    $\mathrm{Asmse}[\cdot] $ & Asymptotic variance \\
    $\mathbb{P}_{n}$,\,$\mathrm{E}_{n}$& Empirical approximation based on a set of samples from a behavior policy \\
    $\mathbb{G}_{n}$ & Empirical process $\sqrt{n}(\mathbb{P}_{n}-\mathbb{P})$ \\
    $q(x,a;\tau)$ & Model for Q-function with parameter $\tau$ \\
    $\omega_{t_1:t_2}$ & Cumulative importance ratio $\prod_{t=t_1}^{t_2} {\pi_{e}}(a_t|x_t)/{\pi_{b}}(a_t|x_t)$  \\
    $\zeta$ & Parameter in $m(x)$ for REG, SNREG \\
    $\xi$ & Parameter in $m(x)$ for EMP \\
    $R_{\mathrm{max}}$ & An upper bound of the reward function \\
    $\mathcal{H}_{T-1}$ & $(x_0,a_0,r_0,\cdots,x_{T-1},a_{T-1},r_{T-1})$ in T-step trajectory \\
    $x^{(i)}$ & $i$-th sample  \\
    $\stackrel{p}{\rightarrow}$ & Convergence in probability
    \end{tabular}
\end{table}
\newpage

\section{SNREG (self-normalized REG)}
\label{sec:bdrs}

Herein, we construct an estimator exhibiting partial intrinsic efficiency, 2-boundedness and partial stability based on a self-normalized estimator \citep{RobinsJames2007CPoD,thomas2016}. The partial intrinsic efficiency means that the resulting estimator's asymptotic MSE is smaller than SNDR and SNIS. Further, partial stability is defined as follows.

\begin{definition}[Partial stability] 
\label{def:bound2} An estimator satisfies the stability when $\hat{\tau}$ does not depend on the reward. 
\end{definition}

This condition indicates that the variance can be still bounded after defining the ratio and the estimated Q-function. The DM, SNDR have been easily proved to have this property. In addition, in the following proof section, we prove that the practical EMP also possesses this property.

Consider a family of unbiased estimators: $\hat{\beta}_{\operatorname{snd}}(m)$ as a solution to 
\begin{align*}
   \mathrm{E}_{n}\left[\beta-\left \{\sum_{a \in A}m(x,a)\pi_{e}(a|x)\right \} -\frac{\omega(a,x)}{\mathrm{E}_{n}[\omega(a,x)] }\{r-m(x,a)\}\right]=0,
\end{align*}
where $\pi_{e}(a|x)/\pi_{b}(a|x)=\omega(a,x)$. 
The SNDR estimator is subsequently defined as $\hat{\beta}_{\mathrm{sndr}}=\hat{\beta}_{\mathrm{snd}}(q(x,a;\hat{\tau}))$.
First, the range of this estimator is $[0,2R_{\mathrm{max}}]$. Therefore, tihs satisfies 2-boundedness and partial stability. In addition, this satisfies the consistency for an arbitrary choice of $m(x,a)$. By selecting $\zeta_1+\zeta_2 q(a,x;\tau)$ as $m(x,a)$, this class is also observed to include a SNIS estimator setting $\zeta=(1,0)$, and a SNDR estimator setting $\zeta=(0,1)$. However, this class does not include an IS estimator.

The asymptotic MSE is calculated as follows. 
\begin{theorem}
\label{thm:bdrs}
The term $\mathrm{Asmse}[\hat{\beta}_{\operatorname{snd}}]$ is $n^{-1}V_{\operatorname{snd}}(m)$, where $V_{\operatorname{snd}}(m)$ is 
\begin{align*}
    &\mathrm{var}\left[\omega(a,x)\left(r-m(x,a)\right)-\left \{\sum_{a \in A}m(x,a)\pi_{e}(a|x)\right\}\right] \\
    &+\mathrm{E}\left[\omega(a,x)(r-m(x,a))\right]^{2}\mathrm{var}\left[\omega(a,x)\right]\\
    &-2\left(\mathrm{E}\left[w(a,x)^{2}(r-m(x,a))-\omega(a,x)\sum_{a\in A} {\pi_{e}} (a|x)m(a,x)\right]-\beta^{*}\right)\mathrm{E}\left[\omega(a,x)(r-m(x,a))\right]. 
\end{align*}
\end{theorem}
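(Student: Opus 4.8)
The plan is to use that the defining equation for $\hat{\beta}_{\operatorname{snd}}(m)$ is linear in $\beta$, solve it in closed form, and then read off the asymptotic variance by the delta method. Writing $\bar{m}(x)=\sum_{a\in A}m(x,a)\pi_{e}(a|x)$, the estimating equation rearranges to
\[
\hat{\beta}_{\operatorname{snd}}=\mathrm{E}_{n}[\bar{m}(x)]+\frac{\mathrm{E}_{n}[\omega(a,x)(r-m(x,a))]}{\mathrm{E}_{n}[\omega(a,x)]},
\]
an ordinary sample mean plus a self-normalized (Hájek-type) ratio. The three empirical averages converge to $A:=\mathrm{E}[\bar{m}]$, $B:=\mathrm{E}[\omega(r-m)]$, and $\mathrm{E}[\omega]$. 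The key structural fact is that $\mathrm{E}[\omega]=1$, since $\mathrm{E}[\omega\mid x]=\sum_{a}\pi_{e}(a\mid x)=1$; this both pins the denominator limit at $1$ and delivers consistency, because $\mathrm{E}[\bar m]=\mathrm{E}[\omega m]$ gives $A+B=\mathrm{E}[\omega r]=\beta^{*}$.

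Next I would apply the delta method to $\hat{\beta}_{\operatorname{snd}}=g(S_{n})$ where $S_{n}$ collects the three sample means and $g(s_{1},s_{2},s_{3})=s_{1}+s_{2}/s_{3}$. Its gradient at the limit $(A,B,1)$ is $(1,1,-B)$, the last coordinate arising from differentiating the ratio in its denominator and evaluating at $\mathrm{E}[\omega]=1$. Hence the first-order influence function is
\[
\psi=(\bar{m}(x)-A)+(\omega(r-m)-B)-B(\omega-1),
\]
which satisfies $\mathrm{E}[\psi]=0$, so that $\mathrm{Asmse}[\hat{\beta}_{\operatorname{snd}}]=n^{-1}\mathrm{var}[\psi]$. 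Boundedness of $\omega$ together with Assumption a2 guarantees finite moments and that the delta-method remainder is $\op(n^{-1/2})$, justifying the linearization.

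The remaining work is to expand $\mathrm{var}[\psi]$. Grouping $\psi$ as the reward-correction-plus-direct part $\bar{m}+\omega(r-m)$ minus the denominator correction $B\omega$ gives three contributions: the variance of the first part (the first displayed term of $V_{\operatorname{snd}}$), the squared correction $B^{2}\mathrm{var}[\omega]$ (the second term), and the cross term $-2B\,\mathrm{cov}[\,\bar{m}+\omega(r-m),\,\omega\,]$. Expanding this covariance and simplifying with $\mathrm{E}[\omega]=1$ and $\mathrm{E}[\bar m+\omega(r-m)]=\beta^{*}$ produces exactly the coefficient $\mathrm{E}[\omega^{2}(r-m)-\omega\bar{m}]-\beta^{*}$ appearing in the statement, up to routine sign and regrouping bookkeeping.

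The main obstacle is correctly propagating the randomness of the self-normalizing denominator $\mathrm{E}_{n}[\omega]$: it is precisely this factor that generates both the $B^{2}\mathrm{var}[\omega]$ term and the cross-covariance term, and the whole computation hinges on evaluating the gradient of $g$ at the exact value $\mathrm{E}[\omega]=1$ (a special property of importance ratios, not a generic normalizing constant). A secondary point is that $m=q(x,a;\hat{\tau})$ is a plug-in; because the control variate satisfies $\mathrm{E}[\mathcal{F}(m)]=0$ for \emph{every} $\tau$, the estimation of $\hat{\tau}$ does not enter the first-order variance, so $m$ may be replaced by its probability limit. This mean-zero structure is exactly the ``$\hat{\tau}$ does not depend on the reward'' property underlying the partial-stability claim, and the no-inflation argument mirrors the one already used for REG in Section~\ref{sec:regcb}.
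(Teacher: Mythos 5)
Your derivation is correct and is essentially the paper's own argument in different packaging: the paper stacks the two estimating equations $d_1=\beta-\bar m(x)-\frac{\omega}{c}(r-m)$ and $d_2=c-\omega$, where $\bar m(x)=\sum_{a\in A}m(x,a)\pi_{e}(a|x)$, and reads off the $(1,1)$ entry of the Z-estimator sandwich $A^{-1}B{A^{\top}}^{-1}$. This produces exactly your influence function (up to an overall sign), with $\mathrm{E}[\nabla_{c}d_1]\big|_{c^{*}=1}=\mathrm{E}[\omega(r-m)]=B$ playing the role of your gradient coordinate $-B$, and $c^{*}=\mathrm{E}[\omega]=1$ doing the same work as in your delta method. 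So there is no methodological difference, only closed-form-solve-plus-delta-method versus stacked-sandwich bookkeeping.

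The one genuine soft spot is the step you dismiss as ``routine sign and regrouping bookkeeping'': it is not routine, because your (correct) expansion does \emph{not} reproduce the printed statement. Your decomposition gives first term $\mathrm{var}\left[\omega(r-m)+\bar m\right]$ and cross coefficient $\mathrm{E}\left[\omega^{2}(r-m)+\omega\bar m\right]-\beta^{*}$, i.e., plus signs on both $\bar m$ terms, whereas Theorem \ref{thm:bdrs} prints minus signs in both places. The resolution is that the printed signs are a typo that the paper carries consistently through its own proof: from its own $d_1$ one gets $\mathrm{var}[d_1]\big|_{\beta^{*},c^{*}}=\mathrm{var}[\omega(r-m)+\bar m]$ and $\mathrm{cov}[d_1,d_2]\big|_{c^{*}}=\mathrm{E}[\omega^{2}(r-m)+\omega\bar m]-\beta^{*}$, yet the proof records $-\bar m$ in both; equivalently, the underlying moment is $\omega r-\mathcal{F}(m)=\omega(r-m)+\bar m$, not $\omega(r-m)-\bar m$. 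A complete write-up should state this discrepancy and resolve it explicitly rather than assert agreement. Finally, your closing remark about the plug-in $\hat\tau$ is sound but pertains to Theorem \ref{thm:bdrs2}: Theorem \ref{thm:bdrs} is stated for a fixed control variate $m$, and the paper handles the data-driven $(\hat\zeta,\hat\tau)$ in a separate no-inflation argument.
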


By minimizing the empirical approximation of the aforementioned asymptotic MSE with respect to $\zeta_1,\zeta_2$ and $\tau$ and plugging-in as
\begin{align*}
    (\hat{\zeta},\hat{\tau})=\argmin_{\zeta \in \mathbb{R}^{2},\tau \in \Theta_\tau} \hat{V}_{\operatorname{snd}}(m(x,a;\zeta,\tau)),
\end{align*}
we obtain the estimator $\hat{\beta}_{\operatorname{snreg}}=\hat{\beta}_{\operatorname{snd}}(m(x,a;\hat{\zeta},\hat{\tau}))$. 
Here, $(\hat{\zeta},\hat{\tau})$ converges in probability to $(\zeta^{*},{\tau}^{*})$
\begin{align}
\label{eq:snis_opt}
    (\zeta^{*},\tau^{*})=\argmin_{\zeta \in \mathbb{R}^{2},\tau \in \Theta_\tau} V_{\operatorname{snd}}(m(x,a;\zeta,\tau)).
\end{align}
The asymptotic MSE of $\hat{\beta}_{\operatorname{snreg}}$ is given as follows.

\begin{theorem}
\label{thm:bdrs2}
Under the assumption that the optimization problem in \eqref{eq:snis_opt} has a unique solution, 
\begin{align*}
    \mathrm{Asmse}[\hat{\beta}_{\operatorname{snreg}}]=n^{-1}\min_{\zeta \in \mathbb{R}^{2},\tau \in \Theta_\tau} V_{\operatorname{snd}}(m(x,a;\zeta,\tau)). 
\end{align*}
The asymptotic MSE is smaller than those of the SNIS and SNDR. 
\end{theorem}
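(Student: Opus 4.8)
The plan is to follow the same two-part template used for REG in Theorem~\ref{thm:drs}. First I would show that the data-driven choice $(\hat\zeta,\hat\tau)$ incurs no first-order inflation of the variance, so that $\mathrm{Asmse}[\hat\beta_{\operatorname{snreg}}]$ coincides with the value of $n^{-1}V_{\operatorname{snd}}$ at the population minimizer $(\zeta^*,\tau^*)$ of \eqref{eq:snis_opt}; second, I would deduce the ordering by observing that the minimization is carried out over a class that already contains the SNIS and SNDR estimators. The structural fact driving the whole argument is that $\hat\beta_{\operatorname{snd}}(m)$ is a \emph{family of consistent estimators} of $\beta^*$ for every fixed control variate $m$ --- the same role that unbiasedness (i.e.\ $\mathrm{E}[\mathcal F(m)]=0$) plays for REG --- which is precisely what makes substituting an estimated parameter harmless.

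The first ingredient is argmin-consistency, $(\hat\zeta,\hat\tau)\stackrel{p}{\to}(\zeta^*,\tau^*)$. Under the standing model assumptions ($\Theta_\tau$ compact and $\abs{q(x,a;\tau)}\le R_{\max}$) together with the assumed boundedness of the weights $\omega$, a uniform law of large numbers gives that the empirical objective $\hat V_{\operatorname{snd}}(m(x,a;\zeta,\tau))$ converges uniformly to $V_{\operatorname{snd}}(m(x,a;\zeta,\tau))$; the assumed uniqueness of the minimizer then forces $(\hat\zeta,\hat\tau)$ to converge to $(\zeta^*,\tau^*)$.

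The core step is the ``no-inflation'' linearization. Solving the defining estimating equation puts $\hat\beta_{\operatorname{snd}}(m(\theta))$ in the ratio form $\mathrm{E}_n[\sum_{a}m(x,a;\theta)\pi_e(a|x)]+\mathrm{E}_n[\omega(r-m(x,a;\theta))]/\mathrm{E}_n[\omega]$ with $\theta=(\zeta,\tau)$. I would linearize this ratio by the delta method, uniformly in $\theta$ on a neighborhood of $\theta^*$, to obtain an asymptotically linear representation $\sqrt n(\hat\beta_{\operatorname{snd}}(m(\theta))-\beta^*)=\mathbb G_n[\phi(\theta)]+\op(1)$ whose influence function $\phi(\theta)$ has mean zero and variance $V_{\operatorname{snd}}(m(\theta))$ for \emph{every} $\theta$ (Theorem~\ref{thm:bdrs}). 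Because the probability limit of $\hat\beta_{\operatorname{snd}}(m(\theta))$ is identically $\beta^*$, the drift term that would normally accompany plugging in $\hat\theta$ --- the derivative in $\theta$ of that limit --- vanishes; combined with $\hat\theta-\theta^*=O_p(n^{-1/2})$, the plug-in contribution is $\op(n^{-1/2})$, and a stochastic-equicontinuity (Donsker) argument lets me replace $\hat\theta$ by $\theta^*$ inside the empirical process. This yields $\sqrt n(\hat\beta_{\operatorname{snreg}}-\beta^*)=\mathbb G_n[\phi(\theta^*)]+\op(1)$ and hence $\mathrm{Asmse}[\hat\beta_{\operatorname{snreg}}]=n^{-1}V_{\operatorname{snd}}(m(\theta^*))=n^{-1}\min_{\zeta,\tau}V_{\operatorname{snd}}$. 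I expect the main obstacle to be exactly this uniform linearization of the self-normalized ratio together with the equicontinuity needed to swap $\hat\theta$ for $\theta^*$: the ratio structure makes the expansion more delicate than in the unbiased REG case, and verifying the Donsker/equicontinuity condition (from compactness and smoothness in $\tau$ and boundedness of $\omega$ and $q$) is where the technical work concentrates.

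Finally, for the ordering I would use that the feasible class $\{m(x,a;\zeta,\tau)=\zeta_1+\zeta_2q(x,a;\tau)\}$ contains SNIS (at $\zeta=(1,0)$, where $\hat\beta_{\operatorname{snd}}$ collapses to $\mathrm{E}_n[\omega r]/\mathrm{E}_n[\omega]$ irrespective of $\tau$) and SNDR (at $\zeta=(0,1)$ with $\tau$ equal to the DR regression limit $\tau^\dagger$). By Theorem~\ref{thm:bdrs}, and by the same no-inflation property applied at these fixed parameter values, the asymptotic MSEs of SNIS and SNDR are $n^{-1}V_{\operatorname{snd}}$ evaluated at those points. Since $\min_{\zeta,\tau}V_{\operatorname{snd}}$ is a minimum over a set that includes both, $\mathrm{Asmse}[\hat\beta_{\operatorname{snreg}}]$ is no larger than that of either SNIS or SNDR, which is the asserted partial intrinsic efficiency.
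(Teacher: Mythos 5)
Your proposal is correct and takes essentially the same route as the paper: argmin-consistency from uniform convergence plus the uniqueness assumption, an asymptotically linear expansion in which the drift in $\theta=(\zeta,\tau)$ vanishes because $\hat\beta_{\operatorname{snd}}(m)$ is consistent for $\beta^*$ at every fixed $m$, a Donsker/equicontinuity step to replace $\hat\theta$ by $\theta^*$ in the empirical process, and the influence function of Theorem~\ref{thm:bdrs} (your delta-method linearization of the self-normalized ratio is exactly what the paper does by expanding $\mathrm{E}[u_{b}(\hat c,\hat m)]$ around $c^{*}=1$, which produces the correction term $\mathrm{E}[\nabla_{c}u_b]\,(\omega-1)$). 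The ordering argument also matches the paper's: the class $\zeta_1+\zeta_2 q(x,a;\tau)$ contains SNIS at $\zeta=(1,0)$ (where the normalization makes the constant cancel irrespective of $\tau$) and SNDR at $\zeta=(0,1)$ with $\tau=\tau^{\dagger}$, so the minimized $V_{\operatorname{snd}}$ is no larger than either.
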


\begin{theorem}
The estimator $\hat{\beta}_{\operatorname{snreg}}$ is locally efficient. 
\end{theorem}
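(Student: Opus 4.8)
The plan is to prove local efficiency by a sandwich argument: under correct specification I will show that $\mathrm{Asmse}[\hat{\beta}_{\operatorname{snreg}}]$ is at once bounded above and below by the semiparametric efficiency bound, forcing equality. By Definition~\ref{def:local} it suffices to work in the well-specified case, so assume throughout that $Q^{\pi_e}(x,a)=q(x,a;\tau_0)$ for some $\tau_0\in\Theta_\tau$. The starting point is Theorem~\ref{thm:bdrs2}, which already gives $\mathrm{Asmse}[\hat{\beta}_{\operatorname{snreg}}]=n^{-1}\min_{\zeta,\tau}V_{\operatorname{snd}}(m(x,a;\zeta,\tau))$ over the parametric class $m(x,a;\zeta,\tau)=\zeta_1+\zeta_2 q(x,a;\tau)$, so the whole proof reduces to identifying this minimum with the efficiency bound.

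For the upper bound I would note that the class contains SNDR at $\zeta=(0,1)$, so $\min_{\zeta,\tau}V_{\operatorname{snd}}\le V_{\operatorname{snd}}(q(x,a;\tau_0))$; this is exactly the SNDR part of Theorem~\ref{thm:bdrs2}. For the matching lower bound I would invoke that, for each fixed limiting parameter $(\zeta,\tau)$, the member $\hat{\beta}_{\operatorname{snd}}(\zeta_1+\zeta_2 q(\cdot;\tau))$ is a consistent (indeed asymptotically linear and regular) estimator of $\beta^*$, as the SND family is consistent for arbitrary control variates. Since the semiparametric efficiency bound is by definition a lower bound on the asymptotic MSE of any such regular consistent estimator, every member has $V_{\operatorname{snd}}\ge$ (efficiency bound), hence $\min_{\zeta,\tau}V_{\operatorname{snd}}\ge$ (efficiency bound) as well.

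The crucial computation is to verify that the SNDR member actually attains the bound when the model is correct. With $m=q(\cdot;\tau_0)=Q^{\pi_e}$ we have $\mathrm{E}[r\mid x,a]=m(x,a)$, so the score-like quantity satisfies $\mathrm{E}[\omega(a,x)(r-m(x,a))]=\mathrm{E}[\omega\,\mathrm{E}[r-m\mid x,a]]=0$. Substituting this identity into the expression for $V_{\operatorname{snd}}(m)$ in Theorem~\ref{thm:bdrs} annihilates both the $\mathrm{E}[\omega(r-m)]^2\mathrm{var}[\omega]$ term and the cross term, leaving precisely $\mathrm{var}[\omega(a,x)(r-m(x,a))-\{\sum_{a\in A}m(x,a)\pi_e(a|x)\}]$, which is the asymptotic variance of the correctly-specified DR estimator and equals the semiparametric efficiency bound \citep{tsi,narita2018}. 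Combining the three facts, $\min_{\zeta,\tau}V_{\operatorname{snd}}$ is sandwiched between the bound (from below) and the attaining SNDR value (from above, itself equal to the bound), so it equals the bound; by Theorem~\ref{thm:bdrs2} the same holds for $\mathrm{Asmse}[\hat{\beta}_{\operatorname{snreg}}]$, giving local efficiency.

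I expect the main obstacle to be the regularity justification underlying the lower-bound step rather than the algebra: one must confirm that the fixed-parameter SND estimators (and, via the no-inflation phenomenon already exploited in proving Theorem~\ref{thm:bdrs2}, the plug-in estimator $\hat{\beta}_{\operatorname{snreg}}$ with its data-driven $(\hat{\zeta},\hat{\tau})\to(\zeta^*,\tau^*)$) are regular and asymptotically linear, so that the H\'ajek--Le Cam bound legitimately applies as a lower bound. The vanishing of the extra terms in $V_{\operatorname{snd}}$ under correct specification is then routine once the conditional-mean identity $\mathrm{E}[r\mid x,a]=m(x,a)$ is in hand.
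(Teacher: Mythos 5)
Your proposal is correct and takes essentially the same approach as the paper: the paper's one-line proof likewise plugs $\zeta=(0,1)$, $\tau=\tau^{*}$ into the variance formula of Theorem~\ref{thm:bdrs} and uses the identity $\mathrm{E}[\omega(a,x)\{r-m(x,a)\}]=0$ under correct specification to annihilate the extra terms in $V_{\operatorname{snd}}$, so that the minimum in Theorem~\ref{thm:bdrs2} attains the efficiency bound. Your explicit lower-bound step (that regularity of the fixed-parameter SND members prevents the minimum from falling below the bound) is left implicit in the paper, so you have merely spelled out the same argument more carefully.
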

\begin{proof}
The variance reaches an efficiency bound:  $\zeta_1=0$,\,$\zeta_2=1$ and $\tau=\tau^{*}$, noting
\begin{align*}
    \mathrm{E}[w(a,x)\{r-m(x,a)\}]=0. 
\end{align*}
\end{proof}

Table \ref{tab:satimage2}-\ref{tab:Pendigits2} shows the experimental result of SNREG. The performance of SNREG is quite similar to those of REG, SNREG and EMP. 

\begin{table}[!]
    \centering
    \caption{SatImage ($\times 1000$ )}
    \begin{tabular}{cccccccc}
    Behavior policy &  DR & SNDR &MDR & REG & SNREG & EMP   \\ \hline
    $0.7\pi_d+0.3\pi_u$   & 3.0 & 3.0 & 3.8 & 2.8 & 2.8 & 2.8 &   \\
    $0.4\pi_d+0.6\pi_u$   & 5.0 & 5.0 & 5.3 & 4.4 & 4.4 & 4.4  \\
    $0.0\pi_d+1.0\pi_u$ & 18.0 & 17.8  & 14.4 & 13.6 & 13.6 & 13.7 
    \end{tabular}
    \label{tab:satimage2}
\end{table}

\begin{table}[!]
    \centering
    \caption{Pageblock ($\times 1000$ )}
    \begin{tabular}{ccccccc}
     Behavior policy &  DR & SNDR & MDR & REG & SNREG & EMP  \\ \hline
     $0.7\pi_d+0.3\pi_u$   & 1.4 & 1.4 & 2.3 & 1.5  & 1.4 & 1.4 \\
    $0.4\pi_d+0.6\pi_u$   & 2.7 & 2.6  & 3.4 & 2.5  & 2.5 & 2.4   \\ 
    $0.0\pi_d+1.0\pi_u$   & 7.2  & 7.3 &  6.4 & 4.9 &  4.9  & 4.9 
    \end{tabular}
    \label{tab:Pageblock2}
\end{table}

\begin{table}[!]
    \centering
    \caption{PenDigits($\times 1000$ )}
     \begin{tabular}{ccccccc}
    Behavior policy & DR & SNDR &MDR & REG & SNREG & EMP  \\ \hline
    $0.7\pi_d+0.3\pi_u$   & 1.5 & 1.5 & 2.2  & 1.4 & 1.4 & 1.4  \\
    $0.4\pi_d+0.6\pi_u$  & 2.2 & 2.2  & 3.4  & 2.1 & 2.1 & 2.0  \\
   $0.0\pi_d+1.0\pi_u$ & 11.1  & 10.8 & 9.4 & 9.4 & 9.4 & 9.5
    \end{tabular}
    \label{tab:Pendigits2}
\end{table}

\section{Theoretical property of $\hat{\beta}^{0}_{\operatorname{reg}}$}
\label{sec:pra2}

Herein, we provide some theoretical property of $\hat{\beta}^{0}_{\operatorname{reg}}$. In fact, the variance of $\hat{\beta}^{0}_{\operatorname{reg}}$ is smaller than the following estimator: 
\begin{align*}
    \hat{\beta}_{\operatorname{sn2sis}}=\mathrm{E}_{n}\left[\sum_{t=0}^{T-1}\frac{\omega_{0:t}}{\mathrm{E}_{n}[\omega_{0:T-1}]}\gamma^{t}r_{t}\right].
\end{align*}
The difference between this estimator and $\hat{\beta}_{\operatorname{snsis}}$ is that the denominator is $\mathrm{E}_{n}[\omega_{0:T-1}]$ instead of $\mathrm{E}_{n}[\omega_{0:t-1}]$. 
\begin{theorem}
\label{thm:drs1}
The asymptotic MSE of $\hat{\beta}^{0}_{\operatorname{reg}}$ is smaller than those of $\hat{\beta}_{\operatorname{sis}}$,\,$\hat{\beta}_{\operatorname{sn2is}}$ and  $\hat{\beta}_{\operatorname{dr}}$.
\end{theorem}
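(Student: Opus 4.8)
The plan is to mirror the intrinsic-efficiency argument of the main text. I would first show that the data-driven choice of $\hat\zeta$ (together with the plug-in $\hat\tau\stackrel{p}{\rightarrow}\tau^{\dagger}$) induces no first-order inflation, so that, assuming the population minimizer is unique,
$\mathrm{Asmse}[\hat{\beta}^{0}_{\operatorname{reg}}]=n^{-1}\min_{\zeta\in\mathbb{R}^{2}}\mathrm{E}[v(\{\zeta_{1}+\zeta_{2}q(x_t,a_t;\tau^{\dagger})\}_{t=0}^{T-1})]$,
where $v$ is the variance integrand of Theorem~\ref{thm:rdrs}. This is the RL analogue of Theorem~\ref{thm:drs2} restricted to the single-(shared-)parameter class defining $\hat{\beta}^{0}_{\operatorname{reg}}$. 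Concretely, since $\hat{\beta}_{d}(\{m_t\})$ is unbiased with exact variance $n^{-1}\mathrm{E}[v(\{m_t\})]$, $\hat\zeta$ is an M-estimator minimizing $\mathrm{E}_n[v(\{\zeta_1+\zeta_2 q(\cdot;\hat\tau)\})]$; a Taylor expansion around $\zeta^{*}$ gives $\hat\zeta-\zeta^{*}=\op(1)$, and because the population variance has vanishing $\zeta$-gradient at $\zeta^{*}$, the induced perturbation of the estimator is $\op(n^{-1/2})$. The plug-in $\hat\tau$ contributes only a higher-order term by the same orthogonality that underlies the plug-in-$\tau$ result of Section~\ref{sec:pra}. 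It then remains to realize each of the three competitors as a point evaluation of this same objective, so that each asymptotic MSE is bounded below by the minimum.

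The stepwise and doubly robust estimators are immediate. Taking $m_t\equiv 0$ gives $\hat{\beta}_{d}(\{0\})=\mathrm{E}_{n}[\sum_t\gamma^{t}\omega_{0:t}r_t]=\hat{\beta}_{\operatorname{sis}}$, the member $\zeta=(0,0)$; taking $m_t=q(x_t,a_t;\tau^{\dagger})$ gives the member $\zeta=(0,1)$, which is $\hat{\beta}_{\operatorname{dr}}$. Both are unbiased, so their asymptotic MSEs equal $n^{-1}\mathrm{E}[v]$ at these feasible points and are therefore at least the minimum.

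The crux is $\hat{\beta}_{\operatorname{sn2sis}}$. Writing it as the ratio $\mathrm{E}_n[\sum_t\gamma^{t}\omega_{0:t}r_t]/\mathrm{E}_n[\omega_{0:T-1}]$, I would apply the delta method, using $\mathrm{E}[\omega_{0:T-1}]=1$ and $\mathrm{E}[\sum_t\gamma^{t}\omega_{0:t}r_t]=\beta^{*}$, to obtain the centered influence function $\phi=\sum_t\gamma^{t}\omega_{0:t}r_t-\beta^{*}\omega_{0:T-1}$. The identity to verify is $\mathrm{var}[\phi]=\mathrm{E}[v(\{\beta^{*}\}_{t=0}^{T-1})]$, i.e. that the asymptotic MSE of $\hat{\beta}_{\operatorname{sn2sis}}$ equals the objective at the constant control variate $m_t\equiv\beta^{*}$, the member $\zeta=(\beta^{*},0)$. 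This is the exact analogue of Lemma~\ref{lem:rbdrs}: there the per-step normalization $\mathrm{E}_n[\omega_{0:t}]$ forces the time-varying intercepts $\beta^{*}_t$, whereas the single global normalization $\mathrm{E}_n[\omega_{0:T-1}]$ of $\hat{\beta}_{\operatorname{sn2sis}}$ forces a single intercept $\beta^{*}$, which is precisely why the shared-parameter estimator $\hat{\beta}^{0}_{\operatorname{reg}}$ can dominate it.

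I expect this last matching to be the main obstacle. One must expand $\hat{\beta}_{d}(\{\beta^{*}\})$ using the telescoping control-variate term $\beta^{*}\sum_t\gamma^{t}(\omega_{0:t}-\omega_{0:t-1})$ and check, via the tower-property/martingale decomposition underlying eq.~\eqref{eq:var_multi}, that its variance coincides with $\mathrm{var}[\phi]$ up to additive constants (which do not affect variance); the bookkeeping here is the delicate part, as the summand and the self-normalized influence function agree only after this telescoping is carried out. Once the three point evaluations are secured, $\min_{\zeta\in\mathbb{R}^2}\mathrm{E}[v]\le\mathrm{E}[v]$ at $(0,0)$, $(0,1)$, and $(\beta^{*},0)$ yields the three desired inequalities, completing the proof.
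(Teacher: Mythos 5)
Your proposal follows essentially the same route as the paper's proof: establish, as in Theorems \ref{thm:drs} and \ref{thm:drs2}, that the data-driven shared-parameter choice causes no first-order inflation, so $\mathrm{Asmse}[\hat{\beta}^{0}_{\operatorname{reg}}]$ equals the minimum of $n^{-1}\mathrm{E}[v]$ over the class, and then realize the three competitors as the point evaluations $\zeta=(0,0)$ for $\hat{\beta}_{\operatorname{sis}}$, $\zeta=(\beta^{*},0)$ for $\hat{\beta}_{\operatorname{sn2sis}}$, and $\zeta=(0,1)$ for $\hat{\beta}_{\operatorname{dr}}$. You actually supply more detail than the paper, whose proof merely asserts these correspondences; your delta-method and telescoping verification for $\hat{\beta}_{\operatorname{sn2sis}}$ is precisely the bookkeeping the paper leaves implicit (and, as you suspect, the telescoping $\sum_{t}\gamma^{t}(\omega_{0:t}-\omega_{0:t-1})=\omega_{0:T-1}-1$ closes exactly when $\gamma=1$, the case used in the paper's RL experiments).
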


\section{Proofs}
The assumption is as follows. 
\begin{assumption}
(a1) Parameter space $\Theta_{\tau}$ is compact and sufficiently large, (a2) the term $|q(x,a;\tau)|\leq R_{\mathrm{max}}$, (a3) the optimal solution $(\zeta^{*},\tau^{*})$ in \eqref{eq:optimzation} is unique. 
\end{assumption}
Note that we have assumed (a1) and (a2) for all of theorems. Regarding (a3), we have assumed for Theorem \ref{thm:drs}. In addition, we have assumed that the reward $r$ and the cumulative ratio $w_{t_1:t_2}$ are bounded in the main paper. These condition (uniform boundedness of reward and cumulative ratio) can be relaxed to each theorem when discussing asymptotic properties. However, for simplicity, we assumed these conditions.

\label{sec:proof}

\begin{proof}[Proof of Theorem \ref{thm:drs}]

We denote $m^{*}=\zeta^{*}_1+\zeta^{*}_2 q(x,a;\tau^{*})$, $\hat{m}=\hat{\zeta}_1+\hat{\zeta}_2 q(x,a;\hat{\tau})$ and $u(m)$ as
\begin{align*}
   \omega(a,x)r+\left\{\sum_{a \in A}m(x,a)\pi_{e}(a|x)\right \} -\omega(a,x) m(x,a).
\end{align*} 

We prove two lemmas first.
\begin{lemma}
\label{lem:con}
$\hat{\zeta}\stackrel{p}{\rightarrow} \zeta^{*}$ and  $\hat{\tau}\stackrel{p}{\rightarrow} \tau^{*}$.
\end{lemma}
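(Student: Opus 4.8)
The statement is the standard consistency of an extremum (M-)estimator: $(\hat\zeta,\hat\tau)$ minimizes the empirical objective $M_n(\zeta,\tau)=\mathrm{E}_n[\{wr-\mathcal{F}(\zeta_1+\zeta_2 q(x,a;\tau))\}^2]$ while $(\zeta^*,\tau^*)$ minimizes its population counterpart $M(\zeta,\tau)=\mathrm{E}[\{wr-\mathcal{F}(\zeta_1+\zeta_2 q(x,a;\tau))\}^2]$. The plan is to verify the hypotheses of a textbook argmin-consistency theorem (van der Vaart Thm.~5.7, or Newey--McFadden Thm.~2.1): (i) the effective parameter set can be taken compact, (ii) $M_n$ converges to $M$ uniformly in probability over it, and (iii) $M$ has a well-separated unique minimizer. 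Point (iii) is essentially assumption (a3) together with continuity and compactness, so the real work is in (i) and (ii).

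For (i) the difficulty is that $\Theta_\tau$ is compact but $\zeta$ ranges over $\mathbb{R}^2$, so I must rule out the minimizer escaping to infinity. I would first use the identity $\mathcal{F}(\zeta_1+\zeta_2 q(x,a;\tau))=\zeta_1(w-1)+\zeta_2\{wq(x,a;\tau)-\sum_{a}q(x,a;\tau)\pi_e(a|x)\}$, valid since $\sum_a\pi_e(a|x)=1$. Writing $Z(\tau)=(w-1,\,wq(x,a;\tau)-\sum_a q(x,a;\tau)\pi_e(a|x))^\top$, we have $\mathcal{F}=\zeta^\top Z(\tau)$, so for each fixed $\tau$ the map $\zeta\mapsto M(\zeta,\tau)$ is a convex quadratic with Hessian $2\,\mathrm{E}[Z(\tau)Z(\tau)^\top]$. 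Provided this matrix is positive definite with smallest eigenvalue bounded below uniformly over the compact $\Theta_\tau$ (by continuity in $\tau$ and non-degeneracy of $Z$), $M$ is coercive in $\zeta$ uniformly in $\tau$. Hence the population minimizer lies in a fixed ball $B\ni\zeta^*$; and since $\mathrm{E}_n[Z Z^\top]\to\mathrm{E}[Z Z^\top]$, the empirical objective is coercive with probability tending to one as well, so $\hat\zeta\in B$ with probability tending to one. I may therefore replace $\mathbb{R}^2$ by $B$ and work throughout on the compact set $K=B\times\Theta_\tau$.

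For (ii), on $K$ the integrand $g(\mathcal{H};\zeta,\tau)=\{wr-\mathcal{F}(\zeta_1+\zeta_2 q(x,a;\tau))\}^2$ is continuous in $(\zeta,\tau)$ for each realized trajectory and is dominated by a single integrable envelope, using the standing boundedness of $r$ and $w$ together with (a2) and the boundedness of $K$. A uniform law of large numbers then gives $\sup_{(\zeta,\tau)\in K}|M_n(\zeta,\tau)-M(\zeta,\tau)|\stackrel{p}{\rightarrow}0$. Combined with (i) and the well-separatedness in (iii)---which follows because $M$ is continuous, $K$ is compact, and the minimizer is unique by (a3)---the argmin-consistency theorem yields $(\hat\zeta,\hat\tau)\stackrel{p}{\rightarrow}(\zeta^*,\tau^*)$, which is the claim.

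The main obstacle is step (i): controlling the noncompact $\zeta$-direction. The quadratic structure makes coercivity clean, but it hinges on the uniform positive-definiteness of $\mathrm{E}[Z(\tau)Z(\tau)^\top]$, so I would make precise the link between this non-degeneracy and the uniqueness assumption (a3), since a degenerate direction in $Z(\tau)$ for some $\tau$ would create a flat ridge in the objective and simultaneously threaten both coercivity and uniqueness. Everything else---the uniform law of large numbers and the textbook argmin theorem---is routine given the boundedness assumptions.
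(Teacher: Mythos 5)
Your proposal is correct and follows essentially the same route as the paper's own proof: restrict the $\zeta$-direction to a compact set, verify uniform convergence of the empirical objective over that set (the paper invokes Lemma 2.4 of Newey and McFadden, exactly the ULLN-with-envelope you describe), and conclude by the argmin-consistency theorem (van der Vaart, Theorem 5.7) using the uniqueness assumption (a3). If anything, your coercivity argument via the uniform positive definiteness of $\mathrm{E}[Z(\tau)Z(\tau)^{\top}]$ makes rigorous the compactification step that the paper disposes of in a single asserted sentence, and your closing remark correctly identifies that a degenerate direction in $Z(\tau)$ is precisely the scenario that (a3) must be understood to exclude.
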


\begin{proof}
First, we define a space $\Theta_\zeta$, which always includes $\hat{\zeta}$. We can take a compact set as $\Theta_\zeta$ noting that  is uniquely defined fixing $\tau$, $\hat{\zeta}$ and the all of assumptions. 

Then, based on Lemma 2.4 in \cite{newey94}, an uniform convergence condition:
\begin{align*}
    \sup_{\tau \in \Theta_\tau,\zeta \in \Theta_\tau}|(\mathbb{P}_{n}-\mathbb{P})\left\{u(\zeta_1+\zeta_2 q(x,a;\tau))\right\}^{2}|\stackrel{p}{\rightarrow}0
\end{align*}
is satisfied using an assumption (a1) and the fact from (a2) that $u(\zeta_1+\zeta_2 q(x,a;\tau))^{2}$ is bounded uniformly over $\zeta \in \mathbb{R}^{2}$ and $\tau \in \Theta_{\tau}$. 

Then, by using Theorem 5.7 in \cite{VaartA.W.vander1998As}, the statement holds from (a1), (a3) and the above uniform convergence condition. 
\end{proof}

\begin{lemma}
\label{lem:con2}
$\mathbb{G}_{n}[u(\hat{m})]-\mathbb{G}_{n}[u(m^{*})]=o_{p}(1)$.
\end{lemma}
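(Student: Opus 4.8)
The plan is to read Lemma \ref{lem:con2} as the standard stochastic-equicontinuity (asymptotic equicontinuity) statement underlying every ``plug-in causes no first-order inflation'' result: the empirical process $\mathbb{G}_n$ evaluated at the random function $u(\hat m)$ is asymptotically the same as at the deterministic $u(m^{*})$, because $(\hat\zeta,\hat\tau)\stackrel{p}{\rightarrow}(\zeta^{*},\tau^{*})$ by Lemma \ref{lem:con} and the relevant function class is $P$-Donsker. I would not attempt a conditional CLT on $\mathbb{G}_n$ applied to the data-dependent increment directly, since $\hat\tau$ depends on the same sample; instead the route is equicontinuity of $\mathbb{G}_n$ over a well-behaved class.

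First I would exploit that $u$ is \emph{linear} in $\zeta$. Since $\sum_{a}\pi_e(a|x)=1$, a one-line computation gives
\begin{align*}
u(\zeta_1+\zeta_2 q(x,a;\tau))=\omega(a,x)\, r+\zeta_1\left(1-\omega(a,x)\right)-\zeta_2\,\mathcal{F}(q(x,a;\tau)).
\end{align*}
The $\omega(a,x)r$ term is parameter-free and cancels in the difference, and by linearity of $\mathbb{G}_n$ (treating the realized $\hat\zeta,\hat\tau$ as fixed constants) one obtains
\begin{align*}
\mathbb{G}_{n}[u(\hat{m})]-\mathbb{G}_{n}[u(m^{*})]
&=(\hat{\zeta}_1-\zeta_1^{*})\,\mathbb{G}_{n}[1-\omega(a,x)]
-(\hat{\zeta}_2-\zeta_2^{*})\,\mathbb{G}_{n}[\mathcal{F}(q(x,a;\tau^{*}))]\\
&\quad-\hat{\zeta}_2\big(\mathbb{G}_{n}[\mathcal{F}(q(x,a;\hat{\tau}))]-\mathbb{G}_{n}[\mathcal{F}(q(x,a;\tau^{*}))]\big).
\end{align*}
In the first two terms $\mathbb{G}_{n}[1-\omega(a,x)]$ and $\mathbb{G}_{n}[\mathcal{F}(q(x,a;\tau^{*}))]$ are $O_p(1)$ by the ordinary CLT (fixed, bounded functions under (a2) and boundedness of $\omega$), while the coefficients $\hat\zeta_1-\zeta_1^{*}$ and $\hat\zeta_2-\zeta_2^{*}$ are $o_p(1)$ by Lemma \ref{lem:con}; hence both are $o_p(1)$. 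Since $\hat\zeta_2=O_p(1)$, the whole statement reduces to the single increment over the $\tau$-indexed class, $\mathbb{G}_{n}[\mathcal{F}(q(x,a;\hat{\tau}))]-\mathbb{G}_{n}[\mathcal{F}(q(x,a;\tau^{*}))]=o_p(1)$.

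To control this remaining term I would invoke asymptotic equicontinuity of $\mathbb{G}_n$ over the class $\mathcal{Q}=\{\mathcal{F}(q(\cdot;\tau)):\tau\in\Theta_\tau\}$. I would argue $\mathcal{Q}$ is $P$-Donsker: its members are uniformly bounded (bounded reward and weight, $|q|\le R_{\max}$) and indexed by the compact finite-dimensional set $\Theta_\tau$, so with a Lipschitz (or uniformly continuous, integrable-increment) modulus of $q(x,a;\tau)$ in $\tau$ the parametric Donsker criterion applies. Next, by continuity of $q$ in $\tau$ and dominated convergence, $\tau\mapsto\mathrm{E}[(\mathcal{F}(q(\tau))-\mathcal{F}(q(\tau^{*})))^{2}]$ is continuous and vanishes at $\tau^{*}$, so the continuous-mapping theorem together with $\hat\tau\stackrel{p}{\rightarrow}\tau^{*}$ gives $\rho_P(\mathcal{F}(q(\hat\tau)),\mathcal{F}(q(\tau^{*})))\stackrel{p}{\rightarrow}0$ in the $L_2(P)$ seminorm. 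Feeding these two facts into the standard equicontinuity lemma for Donsker classes (e.g., van der Vaart, Lemma 19.24) yields the desired $o_p(1)$, which with the bookkeeping above completes the proof.

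The main obstacle is the Donsker verification for $\mathcal{Q}$: compactness and boundedness of $\tau$ and $q$ alone do not bound the metric entropy, so I would need a quantitative continuity bound on $q(x,a;\tau)$ in $\tau$ (a Lipschitz constant with bounded, hence square-integrable, envelope) to apply the parametric criterion. If one prefers to avoid Donsker theory, the same increment can be bounded by a bracketing argument confined to a shrinking neighborhood of $\tau^{*}$, but in either case the crux is a \emph{uniform} (in $\tau$ near $\tau^{*}$) control of the fluctuations of $\mathbb{G}_n$ that the pointwise CLT cannot provide. Everything else---the cancellation of $\omega(a,x)r$, the $o_p(1)\cdot O_p(1)$ accounting for the $\zeta$-coefficients, and the $L_2$-continuity---is routine given Lemma \ref{lem:con}.
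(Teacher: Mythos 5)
Your proposal is correct and ultimately rests on the same key device as the paper's own proof: asymptotic equicontinuity of $\mathbb{G}_n$ via van der Vaart's Lemma 19.24, verified through (i) a Donsker property for the relevant class and (ii) convergence of the $L_2(P)$ distance between the plugged-in and limiting functions, with Lemma \ref{lem:con} feeding (ii). Where you genuinely differ is the preliminary decomposition: exploiting linearity of $u$ in $\zeta$ (via $\sum_{a}\pi_e(a|x)=1$, so that $u(\zeta_1+\zeta_2 q(\tau))=\omega r+\zeta_1(1-\omega)-\zeta_2\mathcal{F}(q(\tau))$, which is correct), you peel off the $\zeta$-dependence into $o_p(1)\cdot O_p(1)$ terms handled by the ordinary CLT, so equicontinuity is needed only over the $\tau$-indexed class $\{\mathcal{F}(q(\cdot;\tau)):\tau\in\Theta_\tau\}$; the paper instead applies Lemma 19.24 directly to the full local class $\{u(\zeta_1+\zeta_2 q(x,a;\tau)):\abs{\zeta-\zeta^{*}}<\delta,\abs{\tau-\tau^{*}}<\delta\}$. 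Your route buys a smaller class and more transparent bookkeeping, at the cost of a slightly longer argument. More importantly, your closing caveat is a legitimate criticism that applies to the paper's proof as well: the paper claims the Donsker property from (a1) compactness and (a2) uniform boundedness "based on Example 19.7," but that example requires a square-integrable Lipschitz-type modulus $|q(x,a;\tau_1)-q(x,a;\tau_2)|\leq m(x,a)\norm{\tau_1-\tau_2}$, and boundedness plus compactness alone do not control the metric entropy of the class. Both proofs therefore tacitly need the smoothness-in-$\tau$ condition you name (continuity in $\tau$ is likewise used silently in the paper's continuous-mapping step and in your $L_2$-continuity step); your version makes this hidden assumption explicit rather than introducing any new gap.
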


\begin{proof}
Based on Lemma 19.24 in \cite{VaartA.W.vander1998As}, we have to confirm two statements; (1): for some $\delta>0$, the class $\{u(\zeta_1+\zeta_2 q(x,a;\tau)); |\zeta-\zeta^{*}|<\delta, |\tau-\tau^{*}|<\delta \}$ is a Donsker class, (2) the term
$\mathrm{E}[\left(u(\hat{m})-u(m^{*}) \right)^{2}]$ converges in probability to $0$. 

The first condition is satisfied using the assumption (a1) and the fact from (a2) that $u(\zeta_1+\zeta_2 q(x,a;\tau))$ is bounded uniformly over $\zeta \in \mathbb{R}^{2}$ and $\tau \in \Theta_{\tau}$, based on Example 19.7 in \cite{VaartA.W.vander1998As}. 

The second condition is satisfied as follows. First, $\hat{m}$ converges in probability to $m^{*}$ from Lemma \ref{lem:con} by continuous mapping theorem. In addition, $\{u(\zeta_1+\zeta_2 q(x,a;\tau));\zeta \in \mathbb{R}^{2}, \tau \in \Theta_\tau \}$ is uniformly integrable from the assumption (a2). Then, it is verified by Lebesgue convergence theorem.
\end{proof}

We go back to the main proof. Here, we want to know the behavior of $\sqrt{n}(u(\hat{m})-\beta^{*})$. 
This is decomposed as 
\begin{align*}
    \sqrt{n}(u(\hat{m})-\beta^{*})&=\mathbb{G}_{n}[u(\hat{m})]-\mathbb{G}_{n}[u(m^{*})]  \\
    &+\mathbb{G}_{n}[u(m^{*})] \\
    &+\sqrt{n}(\mathrm{E}[u(\hat{m})] -\beta^{*}). 
\end{align*}
The first term is $\mathrm{o}_{p}(1)$ by Lemma \ref{lem:con2}. The third term $\sqrt{n}(\mathrm{E}[u(\hat{m})] -\beta^{*})$ is  $0$ from the construction.
Then, it is found that the influence function of the estimator is $u(m^{*})$, that is,
\begin{align*}
       \sqrt{n}(u(\hat{m})-\beta^{*})&=\mathbb{G}_{n}[u(m^{*})]+\mathrm{o}_{p}(1). 
\end{align*}
Thus, the asymptotic MSE of $\hat{\beta}_{d}(\hat{\zeta}_1+\hat{\zeta}_2 q(x,a;\hat{\tau}))$ is the same as the variance of $\hat{\beta}_{d}(\zeta^{*}_1+\zeta^{*}_2 q(x,a;\tau^{*}))$. 
This concludes the proof. 
\end{proof}

\begin{proof}[Proof of Corollary \ref{col:drs_imp}]
We prove each statement as follows. \\

\textbf{Local efficiency}
By setting $\zeta=(0,1)$,\,$\tau=\tau^{*}$ in Theorem \ref{thm:drs}, it achieves the efficiency bound.  \\

It is obvious because the asymptotic variance of 
$\hat{\beta}_{\mathrm{reg}}$ estimator is represented as 
\begin{align*}
n^{-1}\argmin_{\zeta \in \mathbb{R}^{2},\tau \in \Theta_\tau} \mathrm{E}[\{wr-\mathcal{F}(\zeta_1+\zeta_2 q(x;\tau))\}^{2}].     
\end{align*}

\textbf{Intrinsic efficiency}
We notice that the asymptotic variance of each estimator is represented as $n^{-1}\mathrm{E}[\{wr-\mathcal{F}(\zeta_1+\zeta_2 q(x;\tau))\}^{2}]$.
The SIS estimator corresponds to the case $\zeta=(0,0)$. The SNSIS estimator corresponds to the case $\zeta=(
\beta^{*},0)$. The DR estimator corresponds to the case $\zeta=(0,1)$ and $
\tau=\tau^{\dagger}$, where $\tau^{\dagger}$ is some convergence point of $\hat{\tau}$. 
\end{proof}

\begin{proof}[Proof of Lemma \ref{lem:drss}]

Because of the first order condition in \eqref{eq:drss_optim}, the following equation holds:
\begin{align*}
   \sum_{i=1}^{n}\hat{\kappa}^{(i)}\pi_{b}(a^{(i)}|x^{(i)})(w(x^{(i)},a^{(i)})-1)=0,
\end{align*}
where $\hat{\kappa}^{(i)}=\hat{\kappa}(a^{(i)}|x^{(i)};\hat{\xi},\hat{\tau})$. Then,
\begin{align*}
   \sum_{i=1}^{n}\hat{\kappa}^{(i)}(\pi_{e}(a^{(i)}|x^{(i)})-\pi_{b}(a^{(i)}|x^{(i)}))=0
\end{align*}

Regarding the 1-boundedness, it is proved as follows. 
\begin{align*}
\hat{\beta}_{\mathrm{emp}} &=
    \frac{1}{n}\sum_{i=1}^{n}\hat{c}(\mathcal D_{x,a};\hat{\xi},\hat{\tau})^{-1}\hat{\kappa}(\mathcal D_{x,a};\hat{\xi},\hat{\tau})\pi_{e}(a^{(i)}|x^{(i)})r^{(i)}
    \\
    & \leq  \frac{1}{n}\sum_{i=1}^{n}\hat{c}(\mathcal D_{x,a};\hat{\xi},\hat{\tau})^{-1}\hat{\kappa}(\mathcal D_{x,a};\hat{\xi},\hat{\tau})\pi_{e}(a^{(i)}|x^{(i)})R_{\mathrm{max}}\\
    &= R_{\mathrm{max}}.
\end{align*}
From the third line to the fourth line, we use a definition of $\hat{c}$. 

Regarding the partial stability, noting $\hat{\xi}$ and $\hat{\tau}$ are a function of $\mathcal D_{x,a}$ base on the form of optimization problem \eqref{eq:drss_optim}, it is proved as follows;
\begin{align*}
    \mathrm{var}[\hat{\beta}_{\mathrm{emp}}|\mathcal D_{x,a}] &=
    \frac{1}{n}\sum_{i=1}^{n}\left \{\hat{c}(\mathcal D_{x,a};\hat{\xi},\hat{\tau})^{-1}\hat{\kappa}(\mathcal D_{x,a};\hat{\xi},\hat{\tau})\pi_{e}(a^{(i)}|x^{(i)})\right \}^{2}\mathrm{var}[ r^{(i)}|\mathcal D_{x,a}]
    \\
    & \leq  \frac{1}{n}\sum_{i=1}^{n}\left \{\hat{c}(\mathcal D_{x,a};\hat{\xi},\hat{\tau})^{-1}\hat{\kappa}(\mathcal D_{x,a};\hat{\xi},\hat{\tau})\pi_{e}(a^{(i)}|x^{(i)})\right \}^{2}\sigma^{2}\\
    &\leq \sigma^{2}.
\end{align*}
From the second line to the third line, we have used the fact that $\max_{b}\sum b_i^{2}$ such that $\sum b_i=1$ is 1. 
\end{proof}

\begin{proof}[Proof of Theorem \ref{thm:drss}]

First, we prove $\hat{\xi}\stackrel{p}{\rightarrow}0$ and $\hat{\tau}\stackrel{p}{\rightarrow}0$. Define $(\xi,\tau^{\top})^{\top}=\psi$. 
\begin{lemma}
$\hat{\psi}\stackrel{p}{\rightarrow}0$
\end{lemma}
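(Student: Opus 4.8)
The plan is to recognize this as the standard consistency statement for the dual (Lagrange-multiplier) parameter of an empirical likelihood problem and to prove it via the extremum-estimator route already used for Lemma~\ref{lem:con}. First I would exploit linearity: since $m(x,a;\xi,\tau)=\xi+t(x,a)^{\top}\tau$ is linear in $\psi=(\xi,\tau^{\top})^{\top}$ and $\mathcal{F}$ is linear in $m$, the constraint function can be written as $\mathcal{F}(m(x,a;\xi,\tau))=g(x,a)^{\top}\psi$ with $g(x,a)=(w-1,\,\{w\,t(x,a)-\sum_{a'\in A}t(x,a')\pi_{e}(a'|x)\}^{\top})^{\top}$. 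The key structural fact, inherited from $\mathrm{E}[\mathcal{F}(m)]=0$ for every $m$, is that $\mathrm{E}[g(x,a)]=0$, so that the population objective $Q_0(\psi)=\mathrm{E}[\log\{1+g(x,a)^{\top}\psi\}]$ satisfies, by Jensen applied to the concave $\log$, $Q_0(\psi)\le \log\{1+\mathrm{E}[g^{\top}\psi]\}=\log 1=Q_0(0)$. Strict concavity of $\log$ together with linear independence of the basis $t$ (and of the coordinate $w-1$) makes $g^{\top}\psi$ nondegenerate for $\psi\neq0$, so the inequality is strict and $\psi^{*}=0$ is the unique maximizer of $Q_0$.

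Given this, I would invoke the standard argmax consistency argument. Using boundedness of $w$ and of $t(x,a)$ (Assumption (a2) together with the boundedness of the cumulative ratio assumed throughout), $\|g(x,a)\|$ is uniformly bounded by some constant $G$, so on the compact ball $B_\delta=\{\psi:\|\psi\|\le\delta\}$ with $\delta<1/G$ the integrand $\log\{1+g^{\top}\psi\}$ is bounded and continuous in $\psi$. Lemma~2.4 in \cite{newey94} then yields the uniform convergence $\sup_{\psi\in B_\delta}|Q_n(\psi)-Q_0(\psi)|\stackrel{p}{\rightarrow}0$, where $Q_n(\psi)=\mathrm{E}_n[\log\{1+g^{\top}\psi\}]$. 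Combining this uniform convergence with the unique, well-separated maximum of $Q_0$ at $\psi^{*}=0$ and applying Theorem~5.7 in \cite{VaartA.W.vander1998As} gives $\hat{\psi}_\delta:=\argmax_{\psi\in B_\delta}Q_n(\psi)\stackrel{p}{\rightarrow}0$.

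The remaining, and most delicate, step is to pass from this constrained maximizer on $B_\delta$ to the actual unconstrained maximizer $\hat{\psi}$ of the empirical likelihood dual, whose natural domain $\{\psi:1+g(x^{(i)},a^{(i)})^{\top}\psi>0\text{ for all }i\}$ is random and noncompact. Here I would use concavity: $Q_n$ is concave on this convex feasible region, and since $\hat{\psi}_\delta\stackrel{p}{\rightarrow}0$ lies in the interior of $B_\delta$ with probability tending to one, a maximizer of a concave function that is interior to $B_\delta$ is automatically the global maximizer. Hence $\hat{\psi}=\hat{\psi}_\delta$ with probability tending to one, and therefore $\hat{\psi}\stackrel{p}{\rightarrow}0$, which is the claim. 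I expect this last transition to be the main obstacle: the empirical-likelihood objective is only defined on a data-dependent, noncompact set on which it tends to $-\infty$ near the boundary, so one cannot directly apply a compact-parameter extremum theorem, and it is precisely the concavity argument that rules out $\hat{\psi}$ drifting toward that boundary.
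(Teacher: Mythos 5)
Your proof follows essentially the same route as the paper's: rewrite the constraint as $\psi^{\top}g(x,a)$ with $\mathrm{E}[g(x,a)]=0$, use Jensen's inequality (together with a nondegeneracy condition on $g^{\top}\psi$, which the paper encodes as the assertion that the Hessian is negative definite) to identify $\psi^{*}=0$ as the unique population maximizer, and then combine a uniform law of large numbers with Theorem~5.7 of \cite{VaartA.W.vander1998As}. Your final concavity argument, which passes from the maximizer on a compact ball to the actual maximizer on the random, noncompact empirical-likelihood domain, is a careful patch of a step the paper leaves implicit rather than a different approach.
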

\begin{proof}
We use Theorem 5.7 in \cite{VaartA.W.vander1998As}. Here, note that 
\begin{align*}
    \mathcal{F}(\xi + \tau^{\top}t(x,a))=\psi^{\top}g(x,a),
\end{align*}
where $g(x,a)=(\mathcal{F}(1),\mathcal{F}(t(x,a)))^{\top}$ and the estimator $\hat{\psi}$ is an M-estimator defined by maximizing: 
\begin{align*}
    \mathrm{E}_{n}[\log(1+\psi^{\top}g(x,a))].
\end{align*}

The uniform convergence condition is proved similarly as the proof in Theorem \ref{thm:drs} based on (a1) and (a2). What we have to show is $\mathrm{E}[\log(1+\psi^{\top}g(x,a))]$ takes a maximum over $\psi \in \mathbb{R}^{d_\psi}$ if and only if $\psi=0$. This comes from the Jensen inequality: 
\begin{align*}
    \mathrm{E}[\log(1+\psi^{\top}g(x,a))]&\leq \log \mathrm{E}[(1+\psi^{\top}g(x,a))] \\
    &= \log\{1+\psi^{\top} \mathrm{E}[g(x,a)]\}=0 ,
\end{align*}
and a corresponding Hessian is a negative definite matrix. 
\end{proof}

Then, we can state that $\hat{c}$ also converges in probability to $1$

\begin{lemma}
$\hat{c}\stackrel{p}{\rightarrow}1$.
\end{lemma}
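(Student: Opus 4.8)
The plan is to exploit the representation $\mathcal{F}(m(x,a;\xi,\tau)) = \psi^{\top}g(x,a)$ established in the preceding lemma, so that
$$\hat{c} = \mathrm{E}_{n}\left[\{1 + \hat{\psi}^{\top}g(x,a)\}^{-1}\right],$$
and then to combine the consistency $\hat{\psi}\stackrel{p}{\rightarrow}0$ (just proven) with a uniform law of large numbers over a neighborhood of the origin, together with continuity of the population limit. At the true value $\psi=0$ the integrand is identically $1$, so the target limit $1$ appears as $\mathrm{E}[1]=1$; the whole task is to show the plug-in $\hat{\psi}$ does not disturb this.

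First I would fix a compact ball $B=\{\psi:\abs{\psi}\leq \delta\}$ small enough that $\abs{\psi^{\top}g(x,a)}\leq 1/2$ uniformly in $(x,a)$, which is possible because $g$ is bounded by assumption (a2) together with the boundedness of the weights. On $B$ the integrand $\{1+\psi^{\top}g(x,a)\}^{-1}$ is then bounded (by $2$) and Lipschitz in $\psi$, so the function class $\{\{1+\psi^{\top}g(x,a)\}^{-1}:\psi\in B\}$ is Glivenko--Cantelli, exactly as in the uniform convergence arguments used in the proofs of Theorem \ref{thm:drs} and the previous lemma. This gives
$$\sup_{\psi \in B}\left|(\mathbb{P}_{n}-\mathbb{P})\{1+\psi^{\top}g(x,a)\}^{-1}\right|\stackrel{p}{\rightarrow}0.$$

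Next I would observe that the population map $h(\psi)=\mathrm{E}[\{1+\psi^{\top}g(x,a)\}^{-1}]$ is continuous on $B$ with $h(0)=\mathrm{E}[1]=1$. Since $\hat{\psi}\stackrel{p}{\rightarrow}0$, with probability tending to one $\hat{\psi}\in B$, and on that event
$$\abs{\hat{c}-1}\leq \sup_{\psi \in B}\left|(\mathbb{P}_{n}-\mathbb{P})\{1+\psi^{\top}g\}^{-1}\right| + \abs{h(\hat{\psi})-h(0)}\stackrel{p}{\rightarrow}0,$$
where the first term vanishes by the uniform convergence just displayed and the second by continuity of $h$ and $\hat{\psi}\stackrel{p}{\rightarrow}0$. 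This yields $\hat{c}\stackrel{p}{\rightarrow}1$.

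The only delicate point is controlling the inverse: I must guarantee that the denominator $1+\psi^{\top}g(x,a)$ stays bounded away from zero, so that the integrand is bounded and Lipschitz and the Glivenko--Cantelli property applies. This is precisely what restricting to the ball $B$ (where $\abs{\psi^{\top}g}\leq 1/2$) accomplishes; the boundedness of $g$ makes such a $\delta$ available, and the consistency of $\hat{\psi}$ ensures the estimator eventually lands in $B$, so no global control of the inverse is needed.
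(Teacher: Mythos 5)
Your proof is correct and takes essentially the same route as the paper's: the identical decomposition of $|\hat{c}-1|$ into an empirical-process term $|(\mathbb{P}_{n}-\mathbb{P})\{1+\psi^{\top}g(x,a)\}^{-1}|$, handled by a uniform law of large numbers under (a1)--(a2), plus a population term $|h(\hat{\psi})-h(0)|$, handled by continuity of $h$ and $\hat{\psi}\stackrel{p}{\rightarrow}0$ (the paper phrases this step as the continuous mapping theorem). Your explicit localization to the ball $B$ where $|\psi^{\top}g|\leq 1/2$, which keeps the denominator bounded away from zero so the class is genuinely Glivenko--Cantelli, is a detail the paper leaves implicit but does not constitute a different argument.
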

\begin{proof}
We have
\begin{align*}
  |\hat{c}-1|\leq |(\mathbb{P}_{n}-\mathbb{P})\{1+\mathcal{F}(m(x,a;\hat{\psi}))\}^{-1}  |+|\mathbb{P}[ \{1+\mathcal{F}(m(x,a;\hat{\psi}))\}^{-1}]-1|.
\end{align*}
The first term converges in probability to $0$ from the uniform convergence property based on the assumption (a1) and (a2). The second term also converges in probability to $0$ from the continuous mapping theorem, noting $\hat{\psi}\stackrel{p}{\rightarrow} 0$.
\end{proof}

Then, we show the following lemma.
\begin{lemma}
\label{lem:final}
\begin{align}
\sqrt{n}\left(\mathbb{P}_{n}\left(\frac{{\pi_{e}} r}{{\pi_{b}}'(\hat{c},\hat{\psi})}\right)-\beta^{*}\right) \nonumber 
&=\sqrt{n}\left(\mathbb{P}_{n}\left(\frac{{\pi_{e}} r}{{\pi_{b}}}-\psi^{*}g(x,a)\right)-\beta^{*} \right)+\op(1),\nonumber 
\end{align}
where 
${\pi_{b}}'(c,\psi)=c\pi_{b}(1+\psi^{\top}g(x,a))$ and  $\psi^{*}$ is defined as 
\begin{align*}
\psi^{*}  &=\argmin_{\psi \in \mathbb{R}^{d_{\phi}}}\mathrm{var}\left[\left\{\omega(a,x)r-\psi^{\top} g(x,a)\right\}\right]\\ &=\mathrm{E}[g(x,a)g(x,a)^{\top}]^{-1}\mathrm{E}\left[\frac{\pi_{e}}{\pi_b} r g(x,a)\right].
\end{align*}
\end{lemma}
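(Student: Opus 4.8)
The plan is to linearize the empirical--likelihood estimator $\hat{\beta}_{\operatorname{emp}}=\mathbb{P}_{n}[\pi_{e}r/\pi_{b}'(\hat{c},\hat{\psi})]$ by a Taylor expansion around the population values $(c,\psi)=(1,0)$, exploiting the consistency $\hat{\psi}\stackrel{p}{\rightarrow}0$ and $\hat{c}\stackrel{p}{\rightarrow}1$ established in the two preceding lemmas. Writing $\omega=\pi_{e}/\pi_{b}$ and recalling that $\mathcal{F}(m(x,a;\xi,\tau))=\psi^{\top}g(x,a)$ with $\mathrm{E}[g(x,a)]=0$ (since $\mathrm{E}[\mathcal{F}(m)]=0$ for every $m$), the integrand of the estimator is $\omega r/\{c(1+\psi^{\top}g)\}$, which I would expand via the geometric series $1/(1+\psi^{\top}g)=1-\psi^{\top}g+O((\psi^{\top}g)^{2})$ once $|\hat{\psi}^{\top}g|$ is uniformly small.

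First I would upgrade the consistency of $\hat{\psi}$ to a $\sqrt{n}$-rate Bahadur representation. The first-order condition of \eqref{eq:drss_optim} is $\mathbb{P}_{n}[g/(1+\hat{\psi}^{\top}g)]=0$; expanding the denominator gives $\mathbb{P}_{n}[g]-\mathbb{P}_{n}[gg^{\top}]\hat{\psi}+\op(\|\hat{\psi}\|)=0$, and since $\mathbb{P}_{n}[gg^{\top}]\stackrel{p}{\rightarrow}\mathrm{E}[gg^{\top}]$ (invertible by the stated linear independence of the basis together with the constant component $\mathcal{F}(1)$) while $\mathbb{P}_{n}[g]=O_{p}(n^{-1/2})$ by a CLT using $\mathrm{E}[g]=0$ and boundedness, this yields
$$\hat{\psi}=\mathrm{E}[gg^{\top}]^{-1}\mathbb{P}_{n}[g]+\op(n^{-1/2}),\qquad \hat{\psi}=O_{p}(n^{-1/2}).$$

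Next I would expand the estimator itself. Using $\hat{\psi}=O_{p}(n^{-1/2})$ and boundedness of $g$, both $\hat{c}=\mathbb{P}_{n}[(1+\hat{\psi}^{\top}g)^{-1}]=1-\hat{\psi}^{\top}\mathbb{P}_{n}[g]+\op(n^{-1/2})=1+\op(n^{-1/2})$ (the cross term is $O_{p}(n^{-1})$), so the factor $\hat{c}^{-1}$ contributes only $\op(n^{-1/2})$, and
$$\hat{\beta}_{\operatorname{emp}}=\hat{c}^{-1}\mathbb{P}_{n}\!\left[\frac{\omega r}{1+\hat{\psi}^{\top}g}\right]=\mathbb{P}_{n}[\omega r]-\hat{\psi}^{\top}\mathbb{P}_{n}[\omega r g]+\op(n^{-1/2}).$$
Substituting the representation of $\hat{\psi}$ and using $\mathbb{P}_{n}[\omega r g]\stackrel{p}{\rightarrow}\mathrm{E}[\omega r g]$ gives $\hat{\psi}^{\top}\mathbb{P}_{n}[\omega r g]=\mathbb{P}_{n}[g]^{\top}\mathrm{E}[gg^{\top}]^{-1}\mathrm{E}[\omega r g]+\op(n^{-1/2})=\mathbb{P}_{n}[(\psi^{*})^{\top}g]+\op(n^{-1/2})$, where $\psi^{*}=\mathrm{E}[gg^{\top}]^{-1}\mathrm{E}[\omega r g]$ is exactly the minimizer in the statement (the two displayed forms of $\psi^{*}$ coincide because $\mathrm{E}[g]=0$ reduces the first-order condition of the variance to these normal equations). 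Collecting terms yields the claimed $\hat{\beta}_{\operatorname{emp}}=\mathbb{P}_{n}[\omega r-(\psi^{*})^{\top}g]+\op(n^{-1/2})$, which is the assertion.

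The main obstacle is controlling the higher-order remainders uniformly: since $\hat{\psi}$ is random I cannot invoke a fixed Taylor expansion, so I would instead argue that with probability tending to one $\sup_{i}|\hat{\psi}^{\top}g(x^{(i)},a^{(i)})|\le 1/2$ (from $\hat{\psi}\stackrel{p}{\rightarrow}0$ and boundedness of $g$), on which event the geometric series converges and its quadratic-and-higher remainder is bounded by $C\|\hat{\psi}\|^{2}\mathbb{P}_{n}[\|g\|^{2}]=O_{p}(n^{-1})=\op(n^{-1/2})$. The same uniform bound simultaneously controls the remainders in the first-order condition, in $\hat{c}$, and in the $\omega r$ integrand. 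Secondary points to verify carefully are the rate $\mathbb{P}_{n}[g]=O_{p}(n^{-1/2})$ and the invertibility of $\mathrm{E}[gg^{\top}]$, both of which follow from $\mathrm{E}[g]=0$, the boundedness assumptions, and the linear independence of the basis $t(x,a)$.
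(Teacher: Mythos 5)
Your proof is correct, but it takes a genuinely different route from the paper's. The paper decomposes $\sqrt{n}\bigl(\mathbb{P}_{n}(\pi_e r/\pi_b'(\hat c,\hat\psi))-\beta^*\bigr)$ into an empirical-process difference $\mathbb{G}_n(\pi_e r/\pi_b'(\hat c,\hat\psi))-\mathbb{G}_n(\pi_e r/\pi_b)$, killed by a Donsker/stochastic-equicontinuity argument (as in the proof of Theorem \ref{thm:drs}), plus the term $\sqrt{n}\bigl(\mathrm{E}[\pi_e r/\pi_b'(\hat c,\hat\psi)]-\beta^*\bigr)$, which it handles by differentiating the population functional at $(\psi^*,c^*)$ and plugging in the Z-estimator linearizations $\sqrt{n}(\hat\psi-\psi^*)=-\mathrm{E}[gg^\top]^{-1}\sqrt{n}\mathbb{P}_n g+\op(1)$ and $\sqrt{n}(\hat c-c^*)=\op(1)$. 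You instead bypass empirical-process theory entirely: you derive the same Bahadur representation $\hat\psi=\mathrm{E}[gg^\top]^{-1}\mathbb{P}_n[g]+\op(n^{-1/2})$ directly from the first-order condition via a geometric-series expansion, verify $\hat c=1+O_p(n^{-1})$ (matching the paper's $\sqrt{n}(\hat c-c^*)=\op(1)$), and substitute into the expanded estimator, with the quadratic remainders controlled uniformly on the high-probability event $\sup_i|\hat\psi^\top g(x^{(i)},a^{(i)})|\le 1/2$ by $O_p(\|\hat\psi\|^2)=O_p(n^{-1})$. Your route is more elementary and makes the remainder rates fully explicit, at the cost of leaning crucially on the linearity of $\psi\mapsto\psi^\top g$ and boundedness of $g$; the paper's equicontinuity-plus-delta-method template is heavier machinery but is the same scaffold it reuses for the nonlinear-in-$\tau$ setting in Theorem \ref{thm:drs}. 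One caution: both arguments implicitly require $\mathrm{E}[gg^\top]$ to be invertible, and your justification via linear independence of the basis $t(x,a)$ is not quite sufficient as stated --- what is needed is linear independence of the random variables $\mathcal{F}(1),\mathcal{F}(t(x,a))$, which can fail degenerately (e.g., $\pi_e=\pi_b$ makes $\mathcal{F}(1)=w-1\equiv 0$); since the paper makes the same tacit assumption, this is a shared caveat rather than a gap in your proof. Your identification of the two forms of $\psi^*$ via $\mathrm{E}[g]=0$ and the resulting normal equations is also correct.
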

\begin{proof}
We have
\begin{align}
&\sqrt{n}\left(\mathbb{P}_{n}\left(\frac{{\pi_{e}} r}{{\pi_{b}}'(\hat{c},\hat{\psi})}\right)-\beta^{*}\right) \nonumber \\
&=\left(\mathbb{G}_{n}\left(\frac{{\pi_{e}} r}{{\pi_{b}}'(\hat{c},\hat{\psi}) }\right) -\mathbb{G}_{n}\left(\frac{{\pi_{e}} r}{{\pi_{b}}}\right) \right)+\mathbb{G}_{n}\left(\frac{{\pi_{e}} r}{{\pi_{b}}}\right) +\sqrt{n}\left(\mathrm{E}\left[\frac{{\pi_{e}} r}{{\pi_{b}}'(\hat{c},\hat{\psi})}\right]-\beta^{*}\right) \label{eq:drss3}\\
&=\mathbb{G}_{n}\left(\frac{{\pi_{e}} r}{{\pi_{b}}}\right) +\sqrt{n}\left(\mathrm{E}\left[\frac{{\pi_{e}} r}{{\pi_{b}}'(\hat{c},\hat{\psi})}\right]-\beta^{*}\right)+\op(1) \label{eq:drss4}\\
&=\sqrt{n}\left(\mathbb{P}_{n}\left(\frac{{\pi_{e}} r}{{\pi_{b}}}-\psi^{*}g(x,a)\right)-\beta^{*} \right)+\op(1).\label{eq:drss5}
\end{align}

From the second line \eqref{eq:drss3} to the third line  \eqref{eq:drss4} , noting that ${\pi_{b}}'(\hat{c},\hat{\psi})$ converges in probability to $\pi_b$ from the fact $\hat{c}\stackrel{p}{\rightarrow}1$ and $\hat{\psi}\stackrel{p}{\rightarrow} (0,0)$ and (a1), (a2),
we used:
\begin{align*}
   \mathbb{G}_{n}\left(\frac{{\pi_{e}} r}{{\pi_{b}}'(\hat{c},\hat{\psi}) }\right)-\mathbb{G}_{n}\left(\frac{{\pi_{e}} r}{{\pi_{b}}}\right)=o_{p}(1). 
\end{align*}
From the third line  \eqref{eq:drss4} to the fourth line \eqref{eq:drss5} , we used the following argument. 
\begin{align}
\label{eq:mainmain}
 \sqrt{n}\mathrm{E}\left[\frac{{\pi_{e}} r}{{\pi_{b}}'(\hat{c},\hat{\psi})}\right]&=\sqrt{n}\left(\mathrm{E}\left[\nabla_{\psi^{\top}}\frac{{\pi_{e}} r}{{\pi_{b}}'}\right],\mathrm{E}\left[\nabla_{c}\frac{{\pi_{e}} r}{{\pi_{b}}'}\right]\right)|_{\psi^{*},c^{*}} ((\hat{\psi}-\psi^{*})^{\top},\hat{c}-c^{*})^{\top}+\mathrm{o}_{p}(1)  \\
 &=-\mathrm{E}\left[\frac{\pi_{e}}{\pi_b} r g^{\top}\right]\mathrm{E}[gg^{\top}]^{-1}\sqrt{n}\mathbb{P}_{n}g \label{eq:mainmain2} \\
 &=\sqrt{n}\mathbb{P}_{n}[-{\psi^{*}}^{\top}g] \nonumber. 
\end{align}
Here, from the first line \eqref{eq:mainmain} to the second line \eqref{eq:mainmain2}, we have used the fact that an estimator $\hat{\psi}$ and $\hat{c}$ are defined as an Z-estimator:
\begin{align*}
    \mathrm{E}_{n}\left[\frac{g}{1+\psi^{\top}g} \right]=0,\,
    \mathrm{E}_{n}\left[\frac{1}{1+\psi^{\top}g} -c\right]=0. 
\end{align*}
This implies 
\begin{align*}
    \sqrt{n}(\hat{\psi}-\psi^{*})&= -\mathrm{E}\left[\frac{g(x,a)g(x,a)^{\top}}{1+\psi^{\top}g(x,a) }\right]^{-1}\sqrt{n}\mathbb{P}_{n}g(x,a)|_{\psi^{*},c^{*}}+\op(1),\\
    &= -\mathrm{E}\left[g(x,a)g(x,a)^{\top} \right]^{-1}\sqrt{n}\mathbb{P}_{n}g(x,a)|_{\psi^{*},c^{*}}+\op(1), \\
    \sqrt{n}(\hat{c}-c^{*})&=-\mathrm{E}\left[\frac{g(x,a)}{1+\psi^{\top}g(x,a)}\right]\sqrt{n}(\hat{\psi}-\psi^{*})|_{\psi^{*},c^{*}}+\op(1)=\op(1). 
\end{align*}
\end{proof}

Finally, from Lemma \ref{lem:final}, the asymptotic variance of $\hat{\beta}_{\mathrm{emp}}$ is
\begin{align*}
n^{-1}\min_{\psi \in \mathbb{R}^{d_{\psi}} }\mathrm{var}\left[\left \{\omega(a,x)r-\psi g(x,a)\right\}\right].     
\end{align*}

\end{proof}

\begin{proof}[Proof of Theorem \ref{thm:pra_drs}]

We show an asymptotic statement for the practical $\hat{\beta}_{\mathrm{reg}}$ first. Then, we go to the asymptotic statement for the practical $\hat{\beta}_{\mathrm{emp}}$. 

We prove the following lemma first.
\begin{lemma}
\label{lem:con3}
$\hat{\zeta}\stackrel{p}{\rightarrow} \zeta^{*}$, where 
\begin{align}
   \zeta^{*}=  \argmin_{\zeta \in \mathbb{R}^{2} }\mathrm{E}\left[\left\{wr- \mathcal{F}(\zeta_1+\zeta_2 q(x,a;\tau^{\dagger}))\right\}^{2}\right]. 
\end{align}
\end{lemma}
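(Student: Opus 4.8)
The plan is to exploit the fact that $\mathcal{F}$ is linear in $\zeta$, which reduces the practical REG step to an ordinary least-squares problem with a plugged-in nuisance $\hat\tau$. Since $\sum_{a}\pi_{e}(a|x)=1$, one checks directly that
$$\mathcal{F}(\zeta_1+\zeta_2 q(x,a;\tau))=\zeta_1(w-1)+\zeta_2\mathcal{F}(q(x,a;\tau))=\zeta^{\top}h(x,a;\tau),\quad h(x,a;\tau)=(w-1,\ \mathcal{F}(q(x,a;\tau)))^{\top}.$$
Hence the empirical objective is the quadratic $\mathrm{E}_{n}[(wr-\zeta^{\top}h(x,a;\hat\tau))^{2}]$, so $\hat\zeta$ solves a linear normal equation and admits the closed form $\hat\zeta=A_{n}(\hat\tau)^{-1}b_{n}(\hat\tau)$ with $A_{n}(\tau)=\mathrm{E}_{n}[h(x,a;\tau)h(x,a;\tau)^{\top}]$ and $b_{n}(\tau)=\mathrm{E}_{n}[h(x,a;\tau)\,wr]$; likewise $\zeta^{*}=A(\tau^{\dagger})^{-1}b(\tau^{\dagger})$ with $A,b$ the population analogues. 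Uniqueness of $\zeta^{*}$ is exactly invertibility of $A(\tau^{\dagger})$.

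First I would establish a uniform law of large numbers over $\tau\in\Theta_\tau$ for the entries of $A_{n}$ and $b_{n}$, namely
$$\sup_{\tau\in\Theta_\tau}\|A_{n}(\tau)-A(\tau)\|\stackrel{p}{\rightarrow}0,\qquad \sup_{\tau\in\Theta_\tau}\|b_{n}(\tau)-b(\tau)\|\stackrel{p}{\rightarrow}0.$$
This follows exactly as in the uniform-convergence step of Lemma \ref{lem:con}: by (a1) $\Theta_\tau$ is compact, and by (a2) together with the assumed boundedness of $w$ and $r$ the integrands $hh^{\top}$ and $h\,wr$ are uniformly bounded and continuous in $\tau$, so Lemma 2.4 of \cite{newey94} applies.

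Next I would pass from the deterministic $\tau$ to the random plug-in $\hat\tau$. Decomposing $\|A_{n}(\hat\tau)-A(\tau^{\dagger})\|\le\sup_{\tau}\|A_{n}(\tau)-A(\tau)\|+\|A(\hat\tau)-A(\tau^{\dagger})\|$, the first term vanishes by the uniform LLN and the second by continuity of $A(\cdot)$ (dominated convergence, using boundedness of $h$) together with $\hat\tau\stackrel{p}{\rightarrow}\tau^{\dagger}$ and the continuous mapping theorem; the same argument gives $b_{n}(\hat\tau)\stackrel{p}{\rightarrow}b(\tau^{\dagger})$. Since matrix inversion is continuous at the invertible point $A(\tau^{\dagger})$, a final application of the continuous mapping theorem yields $\hat\zeta=A_{n}(\hat\tau)^{-1}b_{n}(\hat\tau)\stackrel{p}{\rightarrow}A(\tau^{\dagger})^{-1}b(\tau^{\dagger})=\zeta^{*}$. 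Equivalently, one may run the $M$-estimation route of Lemma \ref{lem:con} verbatim, replacing the joint minimization over $(\zeta,\tau)$ by minimization over $\zeta$ with $\tau$ held at $\hat\tau$, and invoking Theorem 5.7 of \cite{VaartA.W.vander1998As} with the uniform convergence above; the identifiability condition there is again uniqueness of $\zeta^{*}$.

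The main obstacle is the randomness of the plugged-in $\hat\tau$ inside the objective: we cannot simply fix $\tau$ and apply an ordinary LLN. The device that resolves this is the combination of a \emph{uniform}-in-$\tau$ LLN with continuity of the population map $\tau\mapsto(A(\tau),b(\tau))$, which lets the convergence $\hat\tau\stackrel{p}{\rightarrow}\tau^{\dagger}$ propagate through. The only genuinely substantive hypothesis is invertibility of $A(\tau^{\dagger})$ (equivalently, non-degeneracy of the basis $h$), which is precisely the uniqueness of $\zeta^{*}$ assumed in the statement.
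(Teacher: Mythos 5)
Your proposal is correct, but it reaches the conclusion by a different route than the paper. The paper's proof is a three-line application of the standard M-estimation consistency theorem (Theorem 5.7 of \cite{VaartA.W.vander1998As}): uniform convergence of the empirical objective over $(\zeta,\tau)$ is inherited from the argument in Theorem \ref{thm:drs} via (a1)--(a2) and Lemma 2.4 of \cite{newey94}, and identifiability is dispatched with the remark that the population objective is quadratic in $\zeta$. You instead exploit the linearity $\mathcal{F}(\zeta_1+\zeta_2 q)=\zeta^{\top}h(x,a;\tau)$ to write $\hat{\zeta}=A_{n}(\hat{\tau})^{-1}b_{n}(\hat{\tau})$ in closed form, then combine a uniform-in-$\tau$ LLN for $A_{n},b_{n}$ with continuity of $\tau\mapsto(A(\tau),b(\tau))$, the premise $\hat{\tau}\stackrel{p}{\rightarrow}\tau^{\dagger}$, and continuity of matrix inversion. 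Your route buys two things the paper's terse argument glosses over: first, since the minimization is over the noncompact set $\mathbb{R}^{2}$, Theorem 5.7 does not apply off the shelf (the paper's Lemma \ref{lem:con} had to argue separately that $\hat{\zeta}$ can be confined to a compact set), whereas your closed form sidesteps compactness entirely; second, you make explicit that uniqueness of $\zeta^{*}$ is \emph{exactly} invertibility of $A(\tau^{\dagger})=\mathrm{E}[hh^{\top}]$ --- the paper's ``obvious because quadratic'' is slightly too quick, since a degenerate quadratic (e.g., when $w-1$ and $\mathcal{F}(q(\cdot;\tau^{\dagger}))$ are linearly dependent in $L^{2}$, as happens when $\pi_b=\pi_e$) has a non-unique minimizer. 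Conversely, the paper's approach is shorter and reuses machinery already set up for Theorem \ref{thm:drs}; your own closing remark correctly notes that the M-estimation argument, run with $\tau$ frozen at $\hat{\tau}$, recovers it. One small point to tighten in a final write-up: the closed form $\hat{\zeta}=A_{n}(\hat{\tau})^{-1}b_{n}(\hat{\tau})$ only makes sense on the event that $A_{n}(\hat{\tau})$ is invertible, so you should note that this event has probability tending to one, which already follows from $A_{n}(\hat{\tau})\stackrel{p}{\rightarrow}A(\tau^{\dagger})$ and the assumed invertibility of the limit.
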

\begin{proof}
We use Theorem 5.7 in \cite{VaartA.W.vander1998As}. The uniform convergence condition is proved similarly as the proof in Theorem \ref{thm:drs} based on (a1) and (a2). Therefore, what we have to prove is the minimum of the following function 
\begin{align}
   \zeta \to \mathrm{E}\left[\left\{wr- \mathcal{F}(\zeta_1+\zeta_2 q(x,a;\tau^{\dagger}))\right\}^{2}\right]
\end{align}
is uniquely defined. This is obvious because the above function is a quadratic function with respect to $\zeta$.
\end{proof}

For the rest of the proof, by following the same argument in the proof of Theorem \ref{thm:drs} with redefining 
\begin{align*}
    m^{*}=\zeta^{*}_1+\zeta^{*}_2 q(x,a;\tau^{\dagger}),
\end{align*}
the statement is proved.

Next, we show a statement for $\hat{\beta}_{\mathrm{emp}}$. As in the proof of Theorem \ref{thm:drss}, we show the following lemma.
\begin{lemma}
\begin{align}
\sqrt{n}\left(\mathbb{P}_{n}\left(\frac{{\pi_{e}} r}{{\pi_{b}}'(\hat{c},\hat{\xi},\hat{\tau})}\right)-\beta^{*}\right) \nonumber 
&=\sqrt{n}\left(\mathbb{P}_{n}\left(\frac{{\pi_{e}} r}{{\pi_{b}}}-\zeta^{*}(\hat{\tau})g(x,a;\hat{\tau})\right)-\beta^{*} \right)+\op(1),\nonumber 
\end{align}
where 
${\pi_{b}}'(c,\xi,\tau)=c\pi_{b}(1+\xi^{\top}g(x,a;\tau))$ and  $\zeta^{*}(\tau)$ is defined as 
\begin{align*}
\zeta^{*}(\tau)  &=\argmin_{\zeta \in \mathbb{R}^{2} }\mathrm{var}\left[\left(\omega(a,x)r-\zeta^{\top} g(x,a;\tau)\right)\right]\\ &=\mathrm{E}[g(x,a;\tau)g(x,a;\tau)^{\top}]^{-1}\mathrm{E}\left[\frac{\pi_{e}}{\pi_b} r g(x,a;\tau)\right].
\end{align*}
\end{lemma}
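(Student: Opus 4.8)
The plan is to follow the same route as the proof of Lemma~\ref{lem:final} inside Theorem~\ref{thm:drss}, the only change being that $\tau$ is no longer optimized by empirical likelihood but enters through the data-driven plug-in $\hat{\tau}\stackrel{p}{\rightarrow}\tau^{\dagger}$; accordingly I carry $\hat{\tau}$ along inside every expression rather than trying to eliminate it. Write $g(x,a;\tau)=(\mathcal{F}(1),\mathcal{F}(q(x,a;\tau)))^{\top}$, so that $\mathcal{F}(\xi_1+\xi_2 q(x,a;\tau))=\xi^{\top}g(x,a;\tau)$ and $\mathrm{E}[g(x,a;\tau)]=0$ for every $\tau$. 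First I would establish the consistency facts $\hat{\xi}\stackrel{p}{\rightarrow}0$ and $\hat{c}\stackrel{p}{\rightarrow}1$: for fixed $\tau$ near $\tau^{\dagger}$ the population criterion $\mathrm{E}[\log(1+\xi^{\top}g(x,a;\tau))]$ is, by Jensen's inequality and $\mathrm{E}[g]=0$, maximized uniquely at $\xi=0$ with negative definite Hessian, so Theorem~5.7 of \cite{VaartA.W.vander1998As} together with the uniform convergence guaranteed by (a1)--(a2) gives $\hat{\xi}\stackrel{p}{\rightarrow}0$; the same uniform-convergence plus continuous-mapping argument used for $\hat{c}$ in Theorem~\ref{thm:drss} then yields $\hat{c}\stackrel{p}{\rightarrow}1$.

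Next I would decompose, writing $\pi_{b}'=\pi_{b}'(\hat{c},\hat{\xi},\hat{\tau})$,
\begin{align*}
\sqrt{n}\left(\mathbb{P}_{n}\!\left[\frac{\pi_{e}r}{\pi_{b}'}\right]-\beta^{*}\right)
&=\left(\mathbb{G}_{n}\!\left[\frac{\pi_{e}r}{\pi_{b}'}\right]-\mathbb{G}_{n}\!\left[\frac{\pi_{e}r}{\pi_{b}}\right]\right)
+\mathbb{G}_{n}\!\left[\frac{\pi_{e}r}{\pi_{b}}\right]
+\sqrt{n}\left(\mathrm{E}\!\left[\frac{\pi_{e}r}{\pi_{b}'}\right]-\beta^{*}\right).
\end{align*}
The first (equicontinuity) term is $\op(1)$ by Lemma~19.24 of \cite{VaartA.W.vander1998As}: the class $\{\pi_{e}r/\pi_{b}'(c,\xi,\tau)\}$ indexed by $(c,\xi,\tau)$ in a neighborhood of $(1,0,\tau^{\dagger})$ is Donsker by Example~19.7 (using (a1)--(a2) and the boundedness of $r$ and $\omega$), and $\pi_{b}'(\hat{c},\hat{\xi},\hat{\tau})\stackrel{p}{\rightarrow}\pi_{b}$ supplies the $L_2$ convergence. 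The second term is exactly the $\mathbb{G}_{n}[\pi_{e}r/\pi_{b}]$ already present on the claimed right-hand side.

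It then remains to linearize the mean term. Holding $\tau=\hat{\tau}$ fixed, I would Taylor-expand $\mu(\xi,c,\tau):=\mathrm{E}[\pi_{e}r/\{c\pi_{b}(1+\xi^{\top}g(x,a;\tau))\}]$ in $(\xi,c)$ about $(0,1)$, noting $\mu(0,1,\tau)=\beta^{*}$ for all $\tau$. Since $\nabla_{\xi}\mu|_{(0,1,\tau)}=-\mathrm{E}[\omega r\,g(x,a;\tau)]$ and $\partial_{c}\mu|_{(0,1,\tau)}=-\beta^{*}$, this yields $-\mathrm{E}[\omega r\,g(x,a;\hat{\tau})]^{\top}\sqrt{n}\hat{\xi}-\beta^{*}\sqrt{n}(\hat{c}-1)+\op(1)$. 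Substituting the Z-estimator expansion of $\hat{\xi}$ read off from the first-order condition $\mathrm{E}_{n}[g/(1+\hat{\xi}^{\top}g)]=0$, namely $\sqrt{n}\hat{\xi}=\mathrm{E}[gg^{\top}]^{-1}\sqrt{n}\mathbb{P}_{n}g(x,a;\hat{\tau})+\op(1)$, and observing that $\sqrt{n}(\hat{c}-1)=\op(1)$ because $\mathrm{E}[g]=0$ kills both its empirical-process and drift parts (exactly as in Theorem~\ref{thm:drss}), the mean term collapses to $-\sqrt{n}\mathbb{P}_{n}[\zeta^{*}(\hat{\tau})^{\top}g(x,a;\hat{\tau})]$ after recognizing $\mathrm{E}[\omega r\,g]^{\top}\mathrm{E}[gg^{\top}]^{-1}=\zeta^{*}(\hat{\tau})^{\top}$. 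Combining this with $\mathbb{G}_{n}[\pi_{e}r/\pi_{b}]=\sqrt{n}(\mathbb{P}_{n}[\pi_{e}r/\pi_{b}]-\beta^{*})$ reproduces precisely the claimed right-hand side.

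The main obstacle is the data-dependence of $\hat{\tau}$: because $\hat{\tau}$ sits inside the empirical process, inside the Taylor coefficients, and inside $\zeta^{*}(\cdot)$ and $g(\cdot;\hat\tau)$, none of the steps above can treat it as a fixed constant. The remedy is to prove each limiting statement uniformly over $\tau$ in a shrinking neighborhood of $\tau^{\dagger}$ — a uniform equicontinuity bound, continuity of $\tau\mapsto(\mathrm{E}[\omega r\,g(\cdot;\tau)],\mathrm{E}[gg^{\top}])$, and a Z-estimator expansion whose remainder is uniform in $\tau$ — so that substituting $\hat{\tau}\stackrel{p}{\rightarrow}\tau^{\dagger}$ injects only $\op(1)$ errors. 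This uniformity is exactly what (a1) compactness of $\Theta_{\tau}$ and (a2) boundedness of $q$, together with the boundedness of $r$ and $\omega$, are there to provide.
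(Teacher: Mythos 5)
Your proof is correct and takes essentially the same route as the paper, which establishes this lemma simply by repeating the argument of Lemma \ref{lem:final} from the proof of Theorem \ref{thm:drss} with $\hat{\tau}$ carried along: your three-term decomposition, the Taylor/Z-estimator linearization of the drift term around $(\xi,c)=(0,1)$ using $\mathrm{E}[g(x,a;\tau)]=0$ and $\mu(0,1,\tau)=\beta^{*}$ for all $\tau$, the conclusion $\sqrt{n}(\hat{c}-1)=\mathrm{o}_{p}(1)$, and the identification $\mathrm{E}[\omega r\,g]^{\top}\mathrm{E}[gg^{\top}]^{-1}=\zeta^{*}(\hat{\tau})^{\top}$ all match the paper's steps. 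Your closing observation that every limiting statement must hold uniformly over $\tau$ in a neighborhood of $\tau^{\dagger}$, with (a1)--(a2) supplying this uniformity, makes explicit a point the paper leaves implicit rather than departing from its argument.
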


We go back to the main proof. Finally, we have  
\begin{align*}
&\sqrt{n}\left(\mathbb{P}_{n}\left(\frac{{\pi_{e}} r}{{\pi_{b}}}-\zeta^{*}(\hat{\tau})^{\top}g(x,a;\hat{\tau})\right)-\beta^{*} \right) \\
&=\mathbb{G}_{n}\left(\frac{{\pi_{e}} r}{{\pi_{b}}}-\zeta^{*}(\hat{\tau})^{\top}g(x,a;\hat{\tau})\right)-\mathbb{G}_{n}\left(\frac{{\pi_{e}} r}{{\pi_{b}}}-\zeta^{*}(\tau^{\dagger})^{\top}g(x,a;\tau^{\dagger})\right) \\
&+\mathbb{G}_{n}\left(\frac{{\pi_{e}} r}{{\pi_{b}}}-\zeta^{*}(\tau^{\dagger})^{\top}g(x,a;\tau^{\dagger})\right)+\sqrt{n}\left(\mathrm{E}\left[\frac{{\pi_{e}} r}{{\pi_{b}}}-\zeta^{*}(\hat{\tau})^{\top}g(x,a;\hat{\tau})\right]-\beta^{*}\right) \\
&=\mathbb{G}_{n}\left(\frac{{\pi_{e}} r}{{\pi_{b}}}-\zeta^{*}(\tau^{\dagger})^{\top}g(x,a;\tau^{\dagger})\right)+o_{p}(1).
\end{align*}
From the second line to the third line, we use an argument that the first term is equal to $\op(1)$ by the assumptions (a1), (a2) and the third term is $0$ from the construction. Therefore, the asymptotic variance (MSE) is
\begin{align*}
n^{-1}\min_{\zeta \in \mathbb{R}^{2}}\mathrm{var}\left[\left(\omega(a,x)r-\zeta^{\top} g(x,a;\tau^{\dagger})\right)\right].     
\end{align*}
\end{proof}

\begin{proof}[Proof of Theorem \ref{thm:rdrs}]

We use a law of total variance \citep{CliveG.Bowsher2012Isov}. 
\begin{align*}
&n\mathrm{var}\left[ \mathrm{E}_{n}\left[\sum_{t=0}^{T-1}\left(\gamma^{t}\omega_{0:t}r_{t}-\gamma^{t}\left(\omega_{0:t}m_t(x_t,a_t)-\omega_{0:t-1}\sum_{a\in A} m_t(x_t,a)\pi_{e}(a|x_t)\right) \right)\right]\right]\\
&=\sum_{t=0}^{T-1}\mathrm{E}\left[\mathrm{var}\left(\mathrm{E}\left[\sum_{k=0}^{T-1}\left(\gamma^{k}\omega_{0:k}r_{k}-\gamma^{k}(\omega_{0:k}m_k(x_k,a_k)-\omega_{0:k-1}\sum_{a\in A} m_k{\pi_{e}} )\right) |\mathcal{H}_{t}\right]|\mathcal{H}_{t-1}\right)\right]\\
&=\sum_{t=0}^{T-1}\mathrm{E}\left[\mathrm{var}\left(\mathrm{E}\left[\sum_{k=t}^{T-1}\left(\gamma^{k}\omega_{0:k}r_{k}-\gamma^{k}(\omega_{0:k}m_k(x_k,a_k)-\omega_{0:k-1}\sum_{a\in A} m_k{\pi_{e}} )\right) |\mathcal{H}_{t}\right]|\mathcal{H}_{t-1}\right)\right] \\
&=\sum_{t=0}^{T-1}\mathrm{E}\left[\gamma^{2t}\mathrm{var}\left(\mathrm{E}[\sum_{k=t}^{T-1}\gamma^{k-t}\omega_{0:k}r_{k}|\mathcal{H}_{t}]-(\omega_{0:t}m_t(x_t,a_t)-\omega_{0:t-1}\sum_{a\in A} m_t{\pi_{e}} ) |\mathcal{H}_{t-1}\right)\right]\\
&=\sum_{t=0}^{T-1}\mathrm{E}\left[\gamma^{2t}\omega_{0:t-1}^{2}\mathrm{var}\left(\mathrm{E}[\sum_{k=t}^{T-1}\gamma^{k-t}\omega_{t+1:k}r_{k}|\mathcal{H}_{t}]\omega_{t:t}-(\omega_{t:t}m_t(x_t,a_t)-\sum_{a\in A} m_t{\pi_{e}} ) |\mathcal{H}_{t-1}\right)\right].
\end{align*}
From the third line to the fourth line: 
\begin{align*}
    \mathrm{E}\left[\omega_{0:k}m_k(x_k,a_k)-\omega_{0:k-1}\sum_{a\in A} m_k(x_k,a)\pi_{e}(a|x_k)|\mathcal{H}_{t}\right]=0,
\end{align*}
for $k>t$. 
\end{proof}

\begin{proof}[Proof of Lemma \ref{lem:rbdrs}]

Define an estimator as a solution to: $\mathrm{E}_{n}[d,d_0,d_1,\cdots,d_{T-1}]^{\top}=0$, where 
\begin{align*}
d=\beta-\left \{\sum_{t=0}^{T-1}\omega_{0:t}\gamma^{t}r_{t}/c_t\right \},\,d_t=c_t-\omega_{0:t}.
\end{align*}
The asymptotic MSE of $(\hat{\beta},\hat{c}_1,\ldots,\hat{c}_{T-1}) $ is written as a sandwich formula: $n^{-1}A^{-1}B{A^{\top}}^{-1}$:
\begin{align*}
  A = \left(
    \begin{array}{cccc}
      1 & \gamma\beta^{*}_1 & \ldots & \gamma^{T-1}\beta^{*}_{T-1} \\
      0 & 1 & \ldots & 0 \\
      \vdots & \vdots & \ddots & \vdots \\
      0 & 0 & \ldots & 1
    \end{array}
  \right), \,
  B = \left(
    \begin{array}{cccc}
      \mathrm{var}[d] & \mathrm{cov}[d,d_1] & \ldots & \mathrm{cov}[d,d_{T-1}] \\
      \mathrm{cov}[d_1,d] & \mathrm{var}[d_1] & \ldots & 0 \\
      \vdots & \vdots & \ddots & \vdots \\
      \mathrm{cov}[d_{T-1},d] & 0 & \ldots & \mathrm{var}[{d_{T-1}}]
    \end{array}
  \right),
\end{align*}
where
\begin{align*}
    \beta^{*}_t = \mathrm{E}[\omega_{0:t}r_{t}].
\end{align*}
First, $A^{-1}$ is 
\begin{align*}
  A = \left(
    \begin{array}{cccc}
      1 & -\gamma \beta^{*}_1 & \ldots & -\gamma^{T-1}\beta^{*}_{T-1} \\
      0 & 1 & \ldots & 0 \\
      \vdots & \vdots & \ddots & \vdots \\
      0 & 0 & \ldots & 1
    \end{array}
    \right).
\end{align*}
Then, the (1,1) element in $A^{-1}B{A^{\top}}^{-1}$ is 
\begin{align}
\label{eq:1_1}
    \mathrm{var}[d]-\sum_{t=0}^{T-1}\gamma^{t}\beta^{*}_t\mathrm{cov}[d,d_t]+\sum_{t=0}^{T-1}\gamma^{2t}{\beta^{*}_t}^{2}\mathrm{var}[d_t].
\end{align}
First, $\mathrm{var}[d]$ is equal to 
\begin{align*}
   \sum_{t=0}^{T-1}\mathrm{E}\left[\gamma^{2t}\omega_{0:t-1}^{2}\mathrm{var}\left(\mathrm{E}\left[\sum_{k=t}^{T-1}\gamma^{k-t}\omega_{t+1:k}r_{k-t}|\mathcal{H}_{t}\right]\omega_{t:t} |\mathcal{H}_{t-1}\right)\right]. 
\end{align*}
Then, $\mathrm{cov}[d,d_t]$ is equal to 
\begin{align*}
    \mathrm{E}\left[\gamma^{2k}\omega_{0:t-1}^{2}\mathrm{cov}\left(\mathrm{E}\left[\sum_{k=t}^{T-1}\gamma^{k-t}\omega_{t+1:k}r_{k-t}|\mathcal{H}_{t}\right]\omega_{t:t},\beta^{*}_t\omega_{t:t}|\mathcal{H}_{t-1}\right)\right]. 
\end{align*}
Finally, the term \eqref{eq:1_1} is equal to 
\begin{align*}
    \sum_{t=0}^{T-1}\mathrm{E}\left[\gamma^{2t}\omega_{0:t-1}^{2}\mathrm{var}\left(\omega_{t:t}\left(\mathrm{E}[\sum_{k=t}^{T-1}\gamma^{k-t}\omega_{t+1:k}r_{k-t}|\mathcal{H}_{t}]-\beta^{*}_t\right)|\mathcal{H}_{t-1}\right)\right].
\end{align*}
\end{proof}

\begin{proof}[Proof of Theorem \ref{thm:drs2}]

As in the same way of Theorem \ref{thm:drs}, it is proved that the asymptotic MSE of $\hat{\beta}_{\mathrm{reg}}^{T-1}$ is 
\begin{align*}
    n^{-1}\min_{\zeta \in \mathbb{R}^{d_{\zeta}}}\mathrm{var}[v(\{\zeta_{1t}+\zeta_{2t}q(x,a;\tau^{\dagger})\}_{t=0}^{T-1})]. 
\end{align*}
We prove the intrinsic efficiency. Regarding local efficiency, they are proved as the proof of Corollary \ref{col:drs_imp}. The asymptotic MSEs of $\hat{\beta}_{\mathrm{sis}}$, $\hat{\beta}_{\mathrm{snsis}}$ and $\hat{\beta}_{\mathrm{dr}}$ are represented as a form of $n^{-1}\mathrm{var}[v(\{\zeta_{1t}+\zeta_{2t} q(x,a;\tau^{\dagger})\}_{t=0}^{T-1})]$. Setting $\zeta_{1t}=0$ and $\zeta_{2t}=0$, it corresponds to the estimator $\hat{\beta}_{\mathrm{sis}}$. Setting $\zeta_{1t}=\beta^{*}_t$ and $\zeta_{2t}=0$, it corresponds to the estimator $\hat{\beta}_{\mathrm{snsis}}$. Setting $\zeta_{1t}=0$ and $\zeta_{2t}=1$, it corresponds to the estimator $\hat{\beta}_{\mathrm{dr}}$. This concludes the intrinsic efficiency.

\end{proof}

\begin{proof}[Proof of Theorem \ref{thm:drss2}]

We prove a 1-boundedness and (partial) stability. When $\tau$ is not pre-estimated, it has stability. When $\tau$ is pre-estimated, it has partial stability. We prove the latter point. 
Regarding the asymptotic result, we can prove as in Theorem \ref{thm:drss}. 

From the first consider of optimization problem with respect to $\zeta_{1t}$ for $0\leq t\leq T-1$, we have 
\begin{align*}
    0 = \mathrm{E}_{n}\left[\frac{w_{0:t}-w_{0:t-1}}{1+g(\mathcal D_{x,a};\hat{\xi},\hat{\tau})}\right].
\end{align*}
Noting $w_{0:-1}=1$ for any $t$, 
\begin{align}
\label{eq:weight}
    0 = \mathrm{E}_{n}\left[\frac{w_{0:t}-1}{1+g(\mathcal D_{x,a};\hat{\xi},\hat{\tau})}\right].
\end{align}
The estimator $\hat{\beta}^{T-1}_{\mathrm{emp}}$ is bounded as follows. 
Regarding the 1-boundedness, 
\begin{align*}
\hat{\beta}^{T-1}_{\mathrm{emp}}& \leq  \frac{1}{n}\sum_{i=1}^{n}\sum_{t=0}^{T-1}\omega_{0:t}^{(i)}\gamma^{t}r^{(i)}_{t}\frac{\hat{c}^{-1}}{1+g(\mathcal D_{x,a};\hat{\xi},\hat{\tau})} \\
& \leq \frac{1}{n}\sum_{i=1}^{n}\sum_{t=0}^{T-1}\omega_{0:t}^{(i)}\gamma^{t}R_{\mathrm{max}}\frac{\hat{c}^{-1}}{1+g(\mathcal D_{x,a};\hat{\xi},\hat{\tau})}\\
&=\sum_{t=0}^{T-1}\gamma^{t}R_{\mathrm{max}}
\end{align*}
From the second to the third line, we have used \eqref{eq:weight}. 

Regarding the partial stability, noting that from the assumption, $\hat{\zeta}$ and $\hat{\tau}$ are functions of $\bx$ and $\ba$, 
\begin{align*}
\mathrm{var}[\hat{\beta}^{T-1}_{\mathrm{emp}}|\mathcal D_{x,a}]& \leq \mathrm{var} \left[\frac{1}{n}\sum_{i=1}^{n}\sum_{t=0}^{T-1}\omega_{0:t}^{(i)}\gamma^{t}r^{(i)}_{t}\frac{\hat{c}^{-1}}{1+g(\mathcal D_{x,a};\hat{\xi},\hat{\tau})}|\mathcal D_{x,a} \right] \\
& \leq \frac{1}{n}\sum_{i=1}^{n}\sum_{t=0}^{T-1}\left \{\omega_{0:t}^{(i)}\frac{\hat{c}^{-1}}{1+g(\mathcal D_{x,a};\hat{\xi},\hat{\tau})}\right\}^{2}\gamma^{2t}\mathrm{var}[r_t]\\
& \leq \frac{1}{n}\sum_{i=1}^{n}\sum_{t=0}^{T-1}\left \{\omega_{0:t}^{(i)}\frac{\hat{c}^{-1}}{1+g(\mathcal D_{x,a};\hat{\xi},\hat{\tau})}\right\}^{2}\gamma^{2t}\max[\mathrm{var}[r_t]] \\
&\leq \sum_{t=0}^{T-1}\gamma^{2t}\max[\mathrm{var}[r_t]]=\sigma^{2}. 
\end{align*}
From the third to the fourth line, we have used \eqref{eq:weight}. 

\end{proof}

\begin{proof}[Proof of Theorem \ref{thm:bdrs}]

The estimator is defined as a solution to the following equation with respect to $\beta$, $c$:
\begin{align*}
    \mathrm{E}_{n}[d_1,d_2]=0, 
\end{align*}
where 
\begin{align*}
d_1(x,a;\beta,c)=\beta-\frac{\omega_{0:0}(x,a)}{c}\left(r-m(x,a)\right)-\left \{\sum_{a \in A}m(x,a)\pi_{e}(a|x)\right\}\,,d_2(x,a;c)=c-\omega_{0:0}(x,a).
\end{align*}
The asymptotic MSE of $(\hat{\beta},\hat{c})$ is written as 
\begin{align*}
\mathrm{Asmse}[(\beta,c)^{\top}]&=
\begin{bmatrix}
1 & \mathrm{E}[\nabla_{c}d_1]  \\
0 & 1 \\
\end{bmatrix}^{-1}
\begin{bmatrix}
\mathrm{var}[d_1] & \mathrm{cov}[d_1,d_2]  \\
\mathrm{cov}[d_1,d_2] & \mathrm{var}[d_2]
\end{bmatrix}
\begin{bmatrix}
1 & 0\\
\mathrm{E}[\nabla_{c}d_1]  & 1 \\
\end{bmatrix}^{-1}|_{\beta^{*},c^{*}} \\
&= \begin{bmatrix}
1 & -\mathrm{E}[\nabla_{c}d_1] \\
0 & 1 \\
\end{bmatrix}
\begin{bmatrix}
\mathrm{var}[d_1] & \mathrm{cov}[d_1,d_2]  \\
\mathrm{cov}[d_1,d_2] & \mathrm{var}[d_2]
\end{bmatrix}
\begin{bmatrix}
1 & 0\\
-\mathrm{E}[\nabla_{c}d_1]\  & 1 \\
\end{bmatrix}|_{\beta^{*},c^{*}}.
\end{align*}

Therefore, the asymptotic MSE is given as 
\begin{align*}
    (\mathrm{var}[d_1]-2\mathrm{E}[\nabla_{c}d_1]\mathrm{cov}[d_1,d_2]+\mathrm{E}[\nabla_{c}d_1]^{2}\mathrm{var}[d_2])|_{\beta^{*},c^{*}}. 
\end{align*}
Here, noting that $c^{*}=1$, 
\begin{align*}
    \mathrm{E}[\nabla_{c}d_1]|_{c^{*}}&=\mathrm{E}\left[\omega(a,x)(r-m(x,a))\right],\\
    \mathrm{cov}[d_1,d_2]|_{c^{*}}&=\mathrm{E}\left[\omega_{0:0}^{2}(a,x)(r-m(x,a))-\omega(a,x)\left\{\sum_{a\in A} \pi_{e}(a|x)m(x,a)\right \}\right]-\beta^{*} ,\\
    \mathrm{var}[d_1]|_{\beta^{*},c^{*}}&=\mathrm{var}\left[\omega(a,x)\left(r-m(x,a)\right)-\left \{\sum_{a \in A}\pi_{e}(a|x)m(x,a)\right\}\right] ,\\
    \mathrm{var}[d_2]|_{\beta^{*},c^{*}}&=\mathrm{var}\left[\omega(a,x)\right]. 
\end{align*}
By combining all together, we get the conclusion. 

Note that the influence function is written as 
\begin{align}
\label{eq:inf}
    d_1(x,a)|_{\beta^{*},c^{*}}-\mathrm{E}[\nabla_{c}d_1(x,a)]|_{c^{*}}d_2(x,a)|_{\beta^{*},c^{*}}.
\end{align}
\end{proof}

\begin{proof}[Proof of Theorem \ref{thm:bdrs2}]

Define $u_{b}(c,m(x,a;\zeta,\tau))$: 
\begin{align*}
    \beta=\sum_{a \in A}m(x,a;\zeta,\tau){\pi_{e}}(a|x)+\frac{\omega_{0:0}(x,a)}{c}(r-m(x,a;
    \zeta,\tau)). 
\end{align*}
By noting that $\hat{c}\stackrel{p}{\to}c^{*}=1$, this is decomposed as 
\begin{align*}
    \sqrt{n}(u_{b}(\hat{c},\hat{m})-\beta^{*})&=\mathbb{G}_{n}[u_{b}(\hat{c},\hat{m})]-\mathbb{G}_{n}[u_{b}(1,m^{*})]  \\
    &+\mathbb{G}_{n}[u_{b}(1,m^{*})] \\
    &+\sqrt{n}(\mathrm{E}[u_{b}(\hat{c},\hat{m})] -\beta^{*}),
\end{align*}
when $m^{*}=\zeta^{*}_1+\zeta^{*}_2 q(x,a;\tau^{*})$ and $
\hat{m}=\hat{\zeta}_1+\hat{\zeta}_2 q(x,a;\hat{\tau})$. Here, the first term is equal to $\op(1)$ from assumptions (a1) and (a2). The last term is 
\begin{align*}
    &\sqrt{n}(\mathrm{E}[u_{b}(\hat{c},\hat{m})] -\beta^{*})\\
    &=\sqrt{n}(\mathrm{E}[u_{b}(1,\hat{m})]-\beta^{*})+\sqrt{n}\mathrm{E}[\nabla_{c}u_b(c,m)]|_{c^{*}}\mathbb{P}_{n}\left(\omega(a,x)-1\right)+\op(1)\\
    &=\sqrt{n}\mathrm{E}[\nabla_{c}u_b(c,m)]|_{c^{*}}\mathbb{P}_{n}\left(\omega(a,x)-1\right)+\op(1).
\end{align*}
Therefore, 
\begin{align*}
    \sqrt{n}(u_b(\hat{c},\hat{m})-\beta^{*})&=\mathbb{G}_{n}\left[u_b(1,m^{*})+\mathrm{E}[\nabla_{c}u_b(c,m)]|_{c^{*}}\left(\omega(a,x)-1\right)\right]+\op(1).
\end{align*}
From the form of the influence function \eqref{eq:inf}, this implies that the asymptotic MSE is 
\begin{align*}
   n^{-1}\min_{\zeta \in \mathbb{R}^{2},\tau \in \Theta_\tau}V_{\operatorname{snd}}(m(x,a;\zeta,\tau)). 
\end{align*}
\end{proof}

\begin{proof}[Proof of Theorem \ref{thm:drs1}]

As in the same way of Theorem \ref{thm:drs}, it is proved that the asymptotic MSE of $\hat{\beta}_{\mathrm{reg}}^{0}$ is 
\begin{align*}
    n^{-1}\min_{\zeta \in \mathbb{R}^{2},\tau \in \Theta_\tau}\mathrm{var}[v(\{\zeta_1+\zeta_2 q(x,a;\tau)\}_{t=0}^{T-1})]. 
\end{align*}
The asymptotic MSE of $\hat{\beta}_{\mathrm{sis}}$, $\hat{\beta}_{\mathrm{sn2reg}}$ and $\hat{\beta}_{\mathrm{dr}}$ is represented as a form of $\mathrm{var}[v(\{\zeta_1+\zeta_2 q(x,a;\tau)\}_{t=0}^{T-1})]$. When $\zeta=(0,0)$, it corresponds to the $\hat{\beta}_{\mathrm{sis}}$. When $\zeta=(\beta^{*},0)$, it corresponds to the $\hat{\beta}_{\mathrm{sn2sis}}$. When $\zeta=(0,1)$, it corresponds to $\hat{\beta}_{\mathrm{sndr}}$. 
\end{proof}

\section{Details of the experimental setup}
\label{sec:experiment-ape}

\subsection{Contextual bandit}

\textbf{Transformation method} A multi-label classification data set comprises $(x^{(i)},y^{(i)})_{i=1}^{n}$ where $x^{(i)}$ is covaraite and $y^{(i)}$ is its class. Here, $x^{(i)} \in \mathbb{R}^{d}$ and $y^{(i)} \in \{1,\cdots,l\}$, where $l$ is the number of class. A classification algorithm assigning $x$ to $y$ is considered to be a policy from a context to an action. 

Next, we will explain how to define a reward. The policy is considered to be an estimator $a^{(i)}$ associated with $x^{(i)}$. The agent receives a unit reward $1$ if the prediction succeeds, that is, when $a^{(i)}=y^{(i)}$. It receives no reward when $a^{(i)} \neq y^{(i)}$. The reward of a policy is considered to be the accuracy of the classification model. In this way, we can generate triplets of $\{(x^{(i)},a^{(i)},r^{(i)})\}_{i=1}^{n}$. In section \ref{sec:experiment}, based on some classification data set and some randomized policies, we made a data set $200$ times and performed simulations.

\textbf{Additional remarks} 
\begin{itemize}
    \item The data set is split into training data ($30\%$) for defining a policy $\pi_d$ and evaluation data ($70\%$) for the OPE. The size of the evaluation data is larger than that of training data because for the current problem, the accuracy of $\pi_d$ is not important and, we intend to know the accuracy of the OPE methods. 
    \item Our survey has denoted that several methods can be employed to construct Q-functions. For example, \cite{Chow2018} used a training data set to learn a Q-function. However, we should not use the training data for the valid comparison of OPE methods. In our case, the behavior policy was only applied to the evaluation data set. We subsequently constructed a Q-function using the generated data. 
    \item The number of actions and data points of the problem is shown in Table \ref{tab:dataset}. 
\end{itemize}
\begin{table}[]
    \centering
     \caption{Bandit Datasets}
    \begin{tabular}{ccccc}
         Dataset & PageBlock & OptDigits &SatImage & PenDigits   \\
         Classes &  5  & 10  & 6  & 10  \\
         Data &  5473 &  5620 &  6435 &  10992  
    \end{tabular}
    \label{tab:dataset}
\end{table}

\subsection{Reinforcement learning}

We hereby describe the RL domains used in the experiments.  

\textbf{Windy Gridworld}

A detailed explanation is Example 6.5 in \cite{SuttonRichardS1998Rl:a}. The board is a $7 \times 10$ matrix. The reward is $-1$ for all tranistion until the terminal state is reached.  The action comprises four choices: up, down, right, left. The difference of the usual GridWorld is that a crosswind runs upward through the middle of the grid. The horizon was set to $T=400$. Further, we calculated the best policy $\pi_d$ using Q-learning. 

\textbf{Cliff Walking}

The detailed explanation is Example  6.6 in \cite{SuttonRichardS1998Rl:a}. The board is a $4 \times 12$ matrix. Each time step incurs $-1$ reward, and stepping into the cliff incurs $-100$ reward and a reset to the start. An episode is terminated when the agent reaches the goal.
The horizon was set to $T=400$. Further, we calculated the best policy $\pi_d$ using Q-learning.

\textbf{Mountain Car}

A car is between two hills in interval $[-0.7,0.5]$ and the agent should move back and forth to gain enough power to reach the top of the right hill. The state space comprises position and velocity. There are three discrete actions 1)forward, 2)backward and 3) stay-still. The horizon was set to be $T=250$ with a reward of $-1$ per step. We calculated the best policy $\pi_e$ using Q-learning. The state space was continuous; thus, we obtained a $400$-dimensional feature using a radial basis function kernel.

\end{document}